 \documentclass[12pt]{alt2021} 


\title[Online $k$-means Clustering]{Unexpected Effects of Online no-Substitution $k$-means Clustering}
\usepackage{times}

\newtheorem{claim}[theorem]{Claim}

\usepackage{nicefrac}       
\usepackage{microtype}      

\usepackage[export]{adjustbox}
\usepackage{wrapfig}

\usepackage{xcolor}
\definecolor{titleColor}{RGB}{0,22,166}

\newcommand{\norm}[1]{\left\lVert#1\right\rVert}
\newcommand{\inner}[2]{\langle#1,#2\rangle}
\newcommand{\dis}[2]{d(#1,#2)}
\DeclareMathOperator*{\argmin}{arg\,min}
\DeclareMathOperator*{\argmax}{arg\,max}
\DeclareMathOperator{\E}{\mathbb{E}}

\newcommand{\floor}[1]{\left \lfloor #1\right \rfloor}
\newcommand{\reals}{\mathbb{R}}

\newcommand{\poly}{\texttt{poly}}

\usepackage{algorithmic}
\usepackage{graphicx}
\usepackage{algorithm}

\usepackage{eqparbox}

\usepackage{etoolbox}





\altauthor{%
 \Name{Michal Moshkovitz} \Email{mmoshkovitz@eng.ucsd.edu}\\
 \addr University of California, San Diego
}

\begin{document}

\maketitle

\begin{abstract}
Offline $k$-means clustering was studied extensively, and algorithms with a constant approximation are available.
However, online clustering is still uncharted. New factors come into play: the ordering of the dataset and whether the number of points, $n$, is known in advance or not. Their exact effects are unknown.  
In this paper we focus on the online setting where the decisions are irreversible: after a point arrives, the algorithm needs to decide whether to take the point as a center or not, and this decision is final.  
How many centers are needed and sufficient to achieve constant approximation in this setting? 
We show upper and lower bounds for all the different cases. These bounds are exactly the same up to a constant, thus achieving optimal bounds. 
For example, for $k$-means cost with constant $k>1$ and random order, $\Theta(\log n)$ centers are enough to achieve a constant approximation, while the mere a priori knowledge of $n$ reduces the number of centers to a constant.
These bounds hold for any distance function that obeys a triangle-type inequality.
\end{abstract}

\begin{keywords}%
unsupervised learning, $k$-means clustering, online no-substitution, online landscape, identifying principal factors, universality
\end{keywords}

\section{Introduction}
Clustering is an unsupervised learning problem where the goal is to group data into a few clusters. 
It is an important exploratory data analysis step used in various domains like bioinformatics, image analysis, and information retrieval. 
In the literature, there are many algorithms for clustering in the \emph{offline setting}, where all the points in the dataset are given in advance (\cite{kanungo02,arthur07,aggarwal09,ahmadian19}).
The output of a center-based clustering algorithm is a set of \emph{centers}  
in the dataset, where each center is a ``representative'' of one cluster. 

In the \emph{online no-substitution setting}, points in the dataset arrive one after another, and a decision whether to take the current point as a center needs to be made before observing the next point.
As a motivating example, \cite{hess19} suggested a clinical trial of a new drug, where patients are the points and patients given the new drug are the centers. The goal is to provide the new drug to the smallest number of patients, to avoid unnecessary risk (minimize the number of centers), while ensuring a good representation of the entire population in the trial (small $k$-means cost). Once a patient is out of the clinic, she cannot be tested, and after she took the drug, she cannot undo it --- thus, decisions are irreversible. 
Studying the online setting is more important these days as new data is constantly generated. (\cite{l2017machine,marx2013biology}). 
In the online setting, new factors come into place: the order of the input points (random or worst order) and whether the number of points in the dataset is known in advance or not. 
It makes sense that the order of the points would impact performance, but by how much? Can a priori knowledge of the size of the dataset improve performance? In this paper we answer these questions and find that these two new factors have unexpected effects on online clustering. Specifically, we show that the ordering of the dataset can exponentially increase the number of centers. We also prove that merely knowing the size of the dataset can reduce the number of centers logarithmically. 


%
%
%
%
%
%
%

\subsection{The online framework}
To ease the presentation we focus on the $k$-means cost, though our results apply to more general cost functions, as discussed in Section~\ref{sec:general_cost_function}.
For any dataset $D=\{x_1,\ldots,x_n\}\subseteq\mathbb{R}^d$ and desired number of clusters $k$, the \emph{$k$-means cost} is defined as the sum of squared $\ell_2$-distances of each point in the dataset to its closest center:
\begin{eqnarray}\label{eq:k_menas}
cost(c_1,\ldots, c_k) = \sum_{t=1}^n \norm{x_t-c(x_t)}^2,
\end{eqnarray}
where $c(x)$ is the closest center to $x$, i.e.,  $c(x)=\argmin_{c_i\in \{c_1,\ldots, c_k\}} \norm{x-c_i}.$ We denote  by $cost(opt_k)$ the optimal cost using $k$ centers\footnote{More generally, one can ease the requirement, and allow the centers to be in $\reals^d$ and not necessarily in $D.$ This can improve $cost(opt_k)$ only by a factor of $2$, see Lemma~\ref{lemma:random_point_in_cluster}.}:  $cost(opt_k):=\min_{c_1,\ldots, c_k\in D}cost(c_1,\ldots,c_k).$
In the \emph{offline setting} 
an algorithm receives a dataset $D$
and a desired number of clusters $k$, and in $\textit{poly}(n)$ time returns a set of centers $c_1,\ldots, c_\ell \in D$ such that (1) the number of centers, $\ell$, is close as possible
to $k$ and (2) $cost(c_1,\ldots,c_\ell)$ is close as possible to $cost(opt_k)$. 


We focus on the following \emph{online $k$-means} setting:
At each time step, when a new point in the dataset arrives, the algorithm needs to decide whether to take it as a center or not. The decisions cannot be changed after the next point arrives. Points that were not chosen as centers cannot be considered as centers later on, and points that were chosen as centers cannot be removed from the set of centers.   
This setting was used and motivated in \cite{hess19}.  

In this paper, we consider constant approximation algorithms, meaning $cost(c_1,\ldots,c_\ell)\leq a\cdot cost(opt_k)$, where $a$ is some constant. 
We refer to such a clustering algorithm as a $\Theta(1)$-approximation. 
 The goal of the online algorithm is to minimize the number of centers $\ell$ and make it as close as possible to $k$.
 
 

\subsection{Our contribution}
\paragraph{Identifying principal factors.} Many factors might affect the quality of online $k$-means algorithms: the order of the points (random or worst order), dimension size, number of points, and number of clusters.
A conceptual contribution of this paper is the observation that the new factors in online clustering (namely, the ordering and whether the number of points is known in advance) significantly influence the optimal algorithms' performance.
 The dimension, however, plays no role. If the order is arbitrary and $k>1$, the a priori knowledge of $n$ is irrelevant too. 

\begin{figure*}[t!]
\centering
     \begin{subfigure}[]
    \newline
         \centering
         \includegraphics[width=.1\textwidth,valign=t]{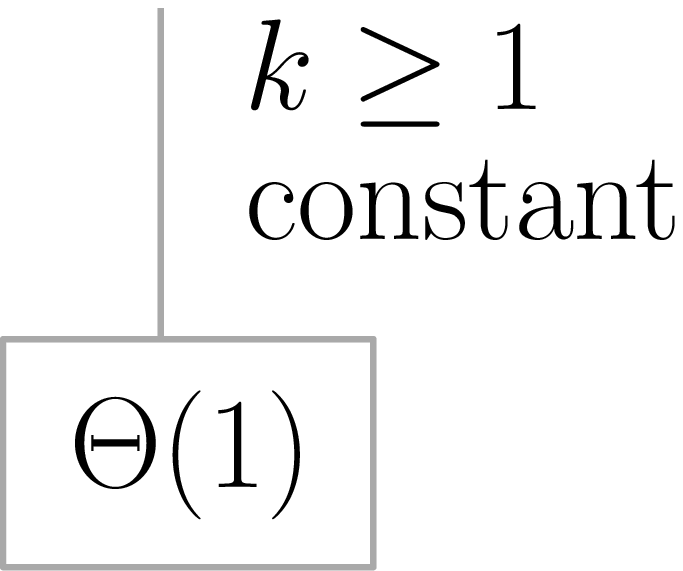}
         \label{fig:summary_results_offline}
     \end{subfigure}
     \hfill
   \begin{subfigure}[]
          \centering
      \includegraphics[width=.4\textwidth,valign=t]{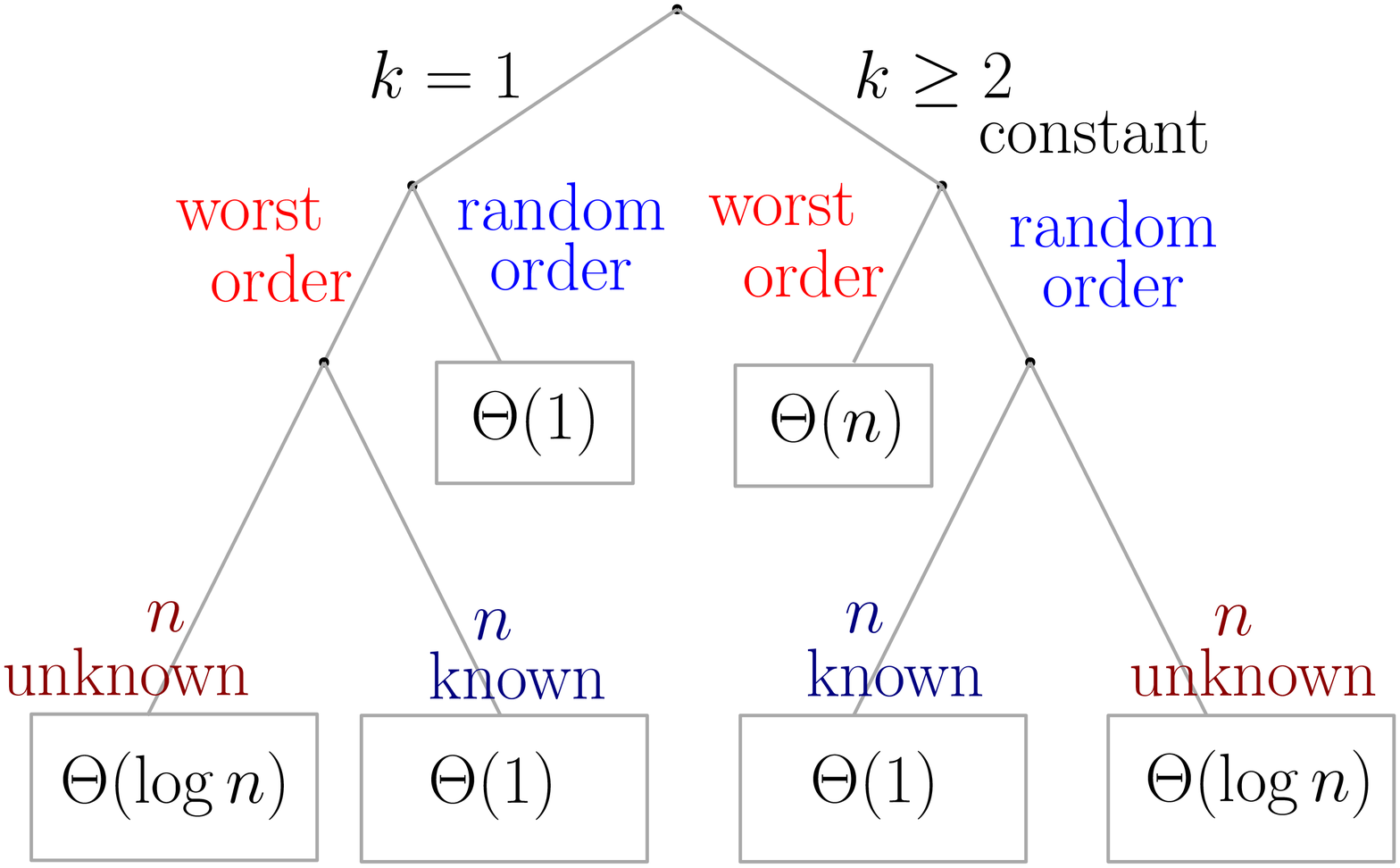}
      \label{fig:summary_results_online}
 \end{subfigure}
 \caption{Comparison between the $(a)$ offline and $(b)$ online settings. All algorithms are $\Theta(1)$-approximation. 
 In rectangles: number of centers (these are optimal) (i) For constant $k$ only constant number of centers are needed in the offline setting (ii) In the online setting this paper uncovers a more complex behavior, see the text for details.} 
 \label{fig:summary_results}
 \end{figure*}


\paragraph{Entire landscape and optimal bounds.}
Henceforth we focus on the case of constant $k$. 
In the offline setting, an efficient algorithm that returns  $\Theta(1)$ centers is known (see \cite{arthur07,aggarwal09}). 
The case of $k=1$ is more straightforward than that: the optimal center is the average point\footnote{If the average point is not in the dataset, take the closest point to it.}. 
That is the end of the story for offline clustering.  However, in the online setting, the story only begins. In this paper, we pinpoint the exact number of centers needed and sufficient to achieve a constant approximation for different values of the new factors.

The online landscape that we map is summarized in Figure~\ref{fig:summary_results}, explained in detail in Sections~\ref{sec:k_1} and \ref{sec:bounds_k_2}, and listed next. (i) For $k=1$: if either the order is random or $n$ is known in advance, then simple algorithms show that $\Theta(1)$ centers are needed and sufficient to achieve a constant approximation. If the order is worst case and $n$ is unknown in advance, $\Theta(\log n)$ centers are needed and sufficient. (ii) For constant $k\geq 2$: if the order is arbitrary, then $\Theta(n)$ centers are needed (and obviously sufficient). If the order is random and $n$ is unknown in advance, then $\Theta(\log n)$ centers are needed and sufficient, but if $n$ is known in advance, then $\Theta(1)$ centers are needed and sufficient.

\paragraph{Universality.} Interestingly, the landscape we have described is the same for any cost with a distance function that obeys a triangle-type inequality (e.g., $k$-medians, or more generally $\ell_p$ norms with constant $p$). This is proved in Section~\ref{sec:general_cost_function}.

\paragraph{Technical contribution.} One of the main technical contributions are two new algorithms for the case that the points arrive in random order and $k\geq 2$. One algorithm is for the case that $n$ is known in advance and thus, the algorithm can observe a small fraction of the data without taking any of these points as centers. Since the order is random, these points represent the entire data, thus it is becoming easier to choose which points to take as centers. The second algorithm is for the case that $n$ is unknown in advance and it uses farthest-first-traversal to find points that should be taken as centers. These algorithms are described in Section~\ref{sec:bounds_k_2}.


\subsection{Related work}\label{sec:related_work}
\cite{liberty16} presented an algorithm for online $k$-means where centers decisions are irreversible. The order is arbitrary and the cost of a point $x$ is with respect to the closest center in the set of centers selected till $x$'s arrival. 
Their algorithm adapts the $k$-means++ algorithm by \cite{arthur07} to the online case.  Inherently, their algorithm cannot get the optimal bound in the no-substitution setting, as the number of centers depends on the aspect ratio, which can be arbitrarily large. See more details in Appendix~\ref{apx:k_2_random_order_unknown_n}. 
In this paper, we improve both the approximation and the number of centers to the optimal values (see Algorithm~\ref{alg:k}), assuming the order is random.  If the order is arbitrary, then we prove that any approximation algorithm, in the worst case, needs to take almost all points as centers.

  A recent work, \cite{hess19}, designed an algorithm that bears some similarity to  Algorithm~\ref{alg:k_2_upper_known_n}. 
However, \cite{hess19} considered the statistical question where there is an underlying distribution, as in \cite{ben07}. In this statistical setting the ordering is not a factor. Also, they have to assume that the example space is bounded, and this exclusion of outliers simplifies the solution. 
 


 
In the streaming model (\cite{aggarwal07,guha03,charikar03,ailon09,shindler11,har2004coresets,phillips2016coresets}) points arrive one after another. 
But, unlike our setting, the algorithm is allowed to choose a center after new points were observed and even go over the points a few times. 
 \cite{braverman11,ackerman14,raghunathan2017learning} assume that the data has some structure, 
we, however, do not have any assumptions on the data and our algorithms function correctly under any dataset.

In the online facility location, \cite{meyerson01}, points arrive one at a time, and a set of facilities $F$ is maintained throughout. Each point $p$ incurs instant cost, $d(p,\ell)$, by its closest location $l\in F$. The total cost is $|F|+\sum_p d(p,\ell).$
In our setting, the cost incurs only at the end, but most importantly, we want to minimize the number of centers conditioned on having $O(1)$-approximation. In online facility location, if the distances are too small or too big, then one of the terms, $|F|$ or $\sum_p d(p,\ell)$ can dominate over the other.  
Several variants of this problem were investigated (e.g., \cite{lang18,feldkord18}).


\section{Preliminaries}\label{sec:on_line_model}
In this paper we fix the desired number of clusters to be some constant $k$. We want to design algorithms that minimize the $k$-means cost. When the algorithm is understood from the context we denote its cost by $cost(alg)$. We focus on $\Theta(1)$-approximation algorithms, which are formally defined next.  
\begin{definition}[$a$-approximation]
\label{dfn:c-approximation}
We say that a clustering algorithm is an \emph{$(a,k)$-approximation}, $a\geq1$, for $opt_k$ if for every series of $n$ data points with probability at least $0.9$  $$\frac{cost(alg)}{cost(opt_k)}\leq a,$$ when $k$ is understood from the context we simply write \emph{an  $a$-approximation algorithm}. 
\end{definition}
In the paper, we focus on the case that $a$ is some constant, and the goal is to minimize the number of centers. The complementary problem of fixing the number of centers will lead to an infinite approximation in some cases, as our lower bounds suggest. We focus either on a fixed order of examples or random (uniform) order. Note that there are two possible sources of randomness: the algorithm and the points' order. The algorithm should succeed with probability $0.9$  (this is some arbitrary constant close to $1$) when considering the two sources together.

\section{The curious case of \texorpdfstring{$k=1$}{Lg}}\label{sec:k_1}
In this section, we focus on the case that there is only one center in the optimal clustering, i.e., $k=1$. The goal is to find one good enough center. In the offline setting, this problem is trivial, simply take $\frac{1}{n}\sum_{i=1}^n x_i$, or a point that is closest to it as the center. So it is surprising that in the online case there is a complex behavior. 

It is known that a random point in a cluster is a good enough center of the entire cluster (see Lemma~\ref{lemma:random_point_in_cluster} in the appendix). Thus, if the order is random, the algorithm can simply take the first point as a center. If the order is adversarial, but $n$ is known in advance, then a random number in $[n]$ can be taken before the examples were observed. This gives access to a random point, which we know is a good center. For completeness, the proofs of these claims are in Appendix~\ref{apx:proofs_main_theorems} as Claims~\ref{thm:k_1} and \ref{thm:k_1_worst_n_known}. 



In case that $n$ is unknown in advance, then $O(\log_c n)$ centers are sufficient to achieve an $O(c)$-approximation, by applying the doubling method, see more details in Claim~\ref{thm:k_1_worst_n_unknown_upper}, Appendix~\ref{apx:proofs_main_theorems}. We prove that for any $c>1$, any algorithm must take $\Omega(\log_c(n))$ centers for it to be a $c$-approximation. This means that $\Theta(\log_c(n))$ is tight for any $O(c)$-approximation algorithm.

\begin{theorem}\label{thm:k_1_worst_n_unknown}
For any integer $n$ and $c\geq1$, and for any clustering algorithm that is not given $n$ in advance and is a $c$-approximation, there are $n$ data points and an ordering of them such that the algorithm must take $\Omega(\log_c(n))$ centers with probability at least $0.8$.  
\end{theorem}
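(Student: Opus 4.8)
The plan is to construct an adversarial instance that forces the algorithm into a sequence of ``surprises,'' each requiring a new center. Fix $c \geq 1$ and $n$. I will build the $n$ points on the real line (so $d=1$, consistent with the claim that dimension is irrelevant). The points arrive in ``phases.'' In phase $j$, the adversary presents a block of identical (or nearly identical) points located at position $p_j$, where the positions $p_1, p_2, \ldots$ are chosen to grow geometrically, say $p_j = M^j$ for a large parameter $M$ depending on $c$. The number of phases will be $m = \Theta(\log_c n)$, and the block sizes are chosen so the total number of points is $n$ (e.g., geometrically decreasing block sizes, or all blocks of size $n/m$ — the exact bookkeeping is routine).

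The core argument is an adversary/adaptivity one. When the algorithm is partway through, having seen phases $1, \ldots, j$, the adversary decides whether to stop the stream now (making $n$ effectively the current total) or continue. Suppose the algorithm has taken no center from phase $j$ (the most recent, farthest block). If the adversary stops the stream at that moment, then $cost(opt_k)$ is essentially $0$ for $k=1$ — a single center placed at $p_j$ captures that heavy cluster, and with the geometric spacing the optimal single-center cost of the whole prefix is dominated by, and comparable to, the mass times spacing; meanwhile the algorithm's cost, using only centers from earlier (much closer to the origin) phases, is $\Omega(M^2)$ times larger on the phase-$j$ block alone. Choosing $M$ large enough relative to $c$ makes the algorithm's approximation ratio exceed $c$. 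Hence, to be a $c$-approximation on every possible stopping point, the algorithm is forced — with the stated probability — to take at least one center in ``most'' phases, giving $\Omega(m) = \Omega(\log_c n)$ centers.

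The probability bookkeeping needs a little care because the algorithm is randomized and must only succeed with probability $0.9$. I would argue as follows: if for some phase $j$ the algorithm takes no center in phases $\geq j$ with probability more than, say, $0.1$, then on the stream truncated right after phase $j$ the algorithm fails to be a $c$-approximation with probability $> 0.1$, contradicting Definition~\ref{dfn:c-approximation}. So for each $j$, with probability $\geq 0.9$ the algorithm has placed a center in some phase $\geq j$; a union bound / martingale-style counting over a constant fraction of the phases then shows that with probability $\geq 0.8$ the algorithm places centers in $\Omega(m)$ distinct phases, hence takes $\Omega(\log_c n)$ centers. (One clean way: consider the phase indices at which a ``new'' center appears; the gaps between consecutive such indices cannot be large except with small probability, by applying the single-phase argument to the truncated stream, and summing the failure probabilities.)

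The main obstacle I anticipate is making the ``truncate the stream at phase $j$'' argument airtight while controlling the accumulated failure probability across all $\Theta(\log_c n)$ truncation points simultaneously — a naive union bound over $m$ events each of probability $0.1$ is useless. The fix is to not union-bound over all phases at once, but instead to bound the expected number of phases in which the algorithm ``fails to have caught up'' (taken no center recently), show this expectation is $O(1)$ by linearity plus the per-phase estimate, and then apply Markov's inequality to conclude that with probability $\geq 0.8$ the algorithm is ``caught up'' on a constant fraction of phases, each contributing a distinct center. A secondary technical point is choosing the spacing $M = M(c)$ and verifying that the optimal $1$-means cost of any prefix is genuinely dominated by the last heavy block (so the adversary's ``$opt$ is small'' claim holds), which follows from the geometric growth of $p_j$ and is a short computation.
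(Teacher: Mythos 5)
Your high-level strategy is the same as the paper's: present the points in $\Theta(\log_c n)$ groups of increasing position, argue that since $n$ is unknown the stream can be truncated after any group, deduce that each group must receive a center with probability at least $0.9$, and then convert the per-group bound into "a constant fraction of groups get a center" via an expectation bound plus Markov (this is exactly the paper's Claim~\ref{clm:lower_bound_from large_center_to_large_randomness}; your "expectation is $O(1)$" is a slip --- it is $\leq 0.1m$ --- but the Markov step you describe still yields the $0.8$ bound, so this part is fine).

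The genuine gap is in the construction, which you dismiss as "routine bookkeeping": both block-size choices you offer (geometrically \emph{decreasing}, or all equal to $n/m$) break the truncation argument. When the stream stops after phase $j$ and the algorithm has no center in block $j$, its cost is at least $|B_j|\cdot(p_j-p_{j-1})^2$, while $cost(opt_1)\leq \sum_{i<j}|B_i|\,(p_j-p_i)^2 \leq p_j^2\sum_{i<j}|B_i|$; with $p_j=M^j$ the ratio is at most about $|B_j|/\sum_{i<j}|B_i|$, so to exceed $c$ you need $|B_j|\gtrsim c\sum_{i<j}|B_i|$, i.e.\ block sizes \emph{increasing} geometrically with ratio $\Theta(c)$ (the paper uses multiplicities $(7c)^i$ at positions $1,2,\ldots$; your geometric positions are neither necessary nor sufficient). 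With equal blocks the ratio degrades to $\Theta(1/j)$ and the argument only forces $O(1)$ centers; with decreasing blocks it fails outright. This is not incidental: the requirement $|B_j|\gtrsim c\sum_{i<j}|B_i|$ is precisely what limits the number of groups to $O(\log_c n)$ and hence determines the exponent of the bound, so it is the crux of the proof rather than a detail to be fixed later. With the multiplicities corrected to grow geometrically in $c$, the rest of your argument goes through and coincides with the paper's.
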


\begin{wrapfigure}{r}{0.5\textwidth}
\centering
\begin{minipage}{.5\textwidth}
\begin{center}
        \includegraphics[scale=0.5]{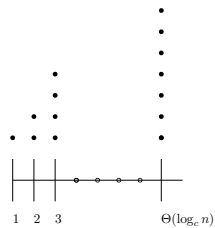}
        \caption{Dataset for proof of Theorem~\ref{thm:k_1_worst_n_unknown}}
        \label{fig:k_1_lower_bound_worst}
\end{center}
\end{minipage}
\end{wrapfigure}
We remark that the constant $0.8$ is merely a number smaller than $0.9,$ which appeared in the definition of a $c$-approximation, Definition~\ref{dfn:c-approximation}. 
One cannot prove that an algorithm must take $\Omega(\log_c(n))$ centers with a probability larger than $0.9$ because a valid approximation algorithm can decide with probability $0.1$ not to take any center. 
The idea of the proof 
is to construct a dataset and an order on them such that the number of centers taken is $\Omega(\log_c n)$ for any $c$-approximation algorithm. 
The dataset is composed of $\Omega(\log_c n)$ groups. The groups are evenly spaced on the line (see Figure~\ref{fig:k_1_lower_bound_worst}). The number of points in each group is exponential increasing. The points are given, group by group from smallest to largest.
 Since the algorithm does not know $n$, the current group can be the last and recall that an approximation algorithm must succeed under any dataset. Thus, the algorithm must take a center from each group.  The formal proof is in the appendix. 

\section{The case of constant \texorpdfstring{$k\geq2$}{Lg}}\label{sec:bounds_k_2}
This section explores the case where the optimal clustering contains $k$ centers, where $k>1$ is any constant. If the dataset's order can be arbitrary, then any $c$-approximation deterministic algorithm must take all the points in the dataset. A dataset that shows this is $n$ non-negative points on the line, i.e., $x_t\in\mathbb{R}$, in increasing order where each point is much further than the previous one. Since it is so further away, it has to be taken, otherwise, the rest of the points will be set to $0$, which is allowed as points arrive in an arbitrary order. In this case, the largest point has to be taken for the algorithm to be a $c$-approximation. Thus, all points need to be taken. If the algorithm is stochastic, it needs to take $\Omega(n)$ points as centers. See \cite{liberty16} or Claim~\ref{thm_k_2_worst} in Appendix~\ref{apx:proofs_main_theorems} for more details. 
As a side note, a follow-up work \cite{bhattacharjee2020}, proved that for ``structured" data (e.g., points sampled from a $k$-mixture model), $\poly(k\log n )$ centers are enough to achieve $O(k^3)$ approximation. 
As for the upper bound, an algorithm can take all $n$ data points as centers and achieve a minimal cost of $0$. Thus, the upper and lower bounds coincide, up to a constant, when the order is arbitrary.

\subsection{Random order and known \texorpdfstring{$n$}{Lg}}\label{subsec:random_order_known_n}
Now let us assume that the data arrives in random order. If $n$ is known in advance, we show an algorithm that takes $\Theta(1)$ centers and is a $\Theta(1)$-approximation, for any constant $k$. The main idea is to observe a small linear fraction of points without taking any as a center. The option of merely observing data without taking points as centers is possible only because $n$ is known in advance, and the points' order is random. Thus, with a high probability, good enough centers will also be available in the future. Fortunately, as the order is random, the small sample provides enough information. 

The algorithm is composed of three phases. In the first phase, it observes a small linear number of points, $M_1$, without taking any point as a center. It finds an approximately optimal clustering for the points $M_1$.
These centers define a clustering $C^{M_1}$ on the entire dataset, where each point is clustered with its closest center. Since the order is random, these centers are good centers for all large clusters, of size at least $\poly(k)$, in the dataset, see Claim~\ref{clm:known_bound_cost_large_cluster_exists_good_cluster} in the appendix. Unfortunately, these points cannot be taken as centers in retrospect in our framework. So the algorithm needs to take future points that are close to those centers. 

 
 




The clustering $C^{M_1}$ represents well only large clusters. To see why, consider, as an extreme example, a small cluster that contains only one point. The algorithm has to take this point as a center when received, or a high cost is incurred. In other words, the algorithm has to take points that are ``far". We define ``far" by farthest from their center $c_i$ by some threshold $t_i$. The algorithm needs to decide how to define this threshold, where one option is to take $t_i$ as the max radius of cluster $C^{M_1}_i$. However, this is problematic, as it might cause the algorithm to take too many centers. The problem stems from the fact that we cannot use the sample to define both the centers and $t_i$'s, as the centers are selected to minimize $t_i$'s. To overcome this problem, we introduce an intermediate step, phase $2$. 

In phase $2$, we save another small fraction of points, without taking any of them as center. The distance to the farthest point, $R_{max}[i]$, in this sample, from each center, defines the threshold $t_i=R_{max}[i]$. Since $t_i$'s and the centers are now independent, this will guarantee that the algorithm does not take too many centers that seem far, see Claim~\ref{clm:k_2_upper_known_n_bound_centers}.    
In phase $3$, we finally take centers. There are two types of centers: (i) points that are close to cluster centers from phase $1$, or (ii) points that are considered ``far''.   

To summarize, there are three phases in the algorithm:  \begin{itemize}
    \item \textbf{Phase 1:} The first $\alpha n$ of the points, $\alpha\in (0,1)$ is a constant to be chosen later, are saved in memory, and the algorithm does not take any of them as centers.
We denote this set by $M_1.$
Since $n$ is known, the algorithm can decide not to take $\alpha n$ of the points as centers without increasing the cost by much. 
After the first $\alpha n$ points arrive the algorithm uses them to find $k$ centers $c_1^{M_1},\ldots, c_k^{M_1}$ that are $\Theta(1)$-approximation clustering for the $\alpha n$ observed points. 
\item \textbf{Phase 2:} 
 The algorithm observes another $\alpha_2n$ points, $M_2$, without taking any as center, $\alpha_2\in (0,1)$ is another constant to be chosen later. For each center $c_i^{M_1}$ it saves the distance to the farthest point, $R_{max}[i]$, in its cluster among those in $M_2$. 
\item \textbf{Phase 3:} 
The algorithm takes the following centers for each center $c_i^{M_1}$:
(i) a few close points 
(ii) points that are farther than the threshold $R_{max}[i]$. 
\end{itemize}
 
The algorithm's pseudo-code is in Algorithm~\ref{alg:k_2_upper_known_n}, and its correctness is proved in the next theorem. 
\begin{algorithm}[t]
\caption{Online clustering with $k>1$, $n$ known, random order}
\begin{algorithmic}[1]\label{alg:k_2_upper_known_n}
 \STATE {\color{titleColor}{\textbf{phase 1: collect data} }}
 \STATE $M_1=$ save (without taking as center) the first $\floor{\frac{n}{10^2k} }$ points
\STATE find offline clustering for $M_1$ with 
centers $(c^{M_1}_i)_{i=1}^k$
 \STATE {\color{titleColor}{\textbf{phase 2: collect more data to define ``far" points}}} 
 \STATE $M_2 =$ save (without taking as center) the next $\floor{\frac{n}{10^5k^3}}$ points
 \FOR {$i=1$ to $k$}
 \STATE $A[i] = \{x \in M_2: i=\argmin\|x-c_i^{M_1}\|\}$ \COMMENT{ partition  $M_2$}  
 
  \STATE $R_{max}[i] = \max_{y\in A[i]}\norm{y-c^{M_1}_i}$ \COMMENT{ max distance between $M_2$ and} 
   \STATE \COMMENT{ current center, if $A[i]=\emptyset$, this is $0$}  
 \STATE $centers\_counter[i] = 0$ \COMMENT{ init close-centers-counter for phase 3}
 \ENDFOR
 \STATE {\color{titleColor}{\textbf{phase 3: take centers}}}
 \FOR {the rest of the points $x_t$ (points not received in phase 1 or 2)} 
 \STATE $i^* =\argmin_i \norm{x_t - c^{M_1}_i}$ \COMMENT{closest center}
	\IF {$\norm{x_t-c^{M_1}_{i^*}}>R_{max}[i^*]$}
        \STATE take $x_t$ as a center  \COMMENT{points that are far away}
       \label{alg:k_2_upper_known_n_line_take_far_point}
       \ENDIF
       \IF {$centers\_counter[i^*] \leq 3k\log(40k)$} 
       \STATE take $x_t$ as center 
 \label{alg:k_2_upper_known_n_line_take_close_point}
        \COMMENT{not enough close points to center $c^{M_1}_i$}
       \STATE \COMMENT{ were taken yet $\Rightarrow$ taking a close point}
       \STATE $centers\_counter[i^*]++$
       \ENDIF
\ENDFOR
\end{algorithmic}
\end{algorithm}


\begin{theorem}\label{thm:k_2_upper_random_known_n}
For any constant integer $k\geq 2$, there is an algorithm that given (i) $n$, the size of the dataset, (ii) $k$, and (iii) the dataset which appears in a random order, the following holds. With probability at least $0.9$, the algorithm takes $\Theta(1)$ centers and $cost(alg)\leq \Theta(1)\cdot cost(opt_k)$.
\end{theorem}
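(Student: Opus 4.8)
The plan is to fix an optimal solution with clusters $O_1,\dots,O_k$, centers $o_1,\dots,o_k$ and cost $OPT:=cost(opt_k)$, and to establish the two assertions — that the algorithm outputs $\Theta(1)$ centers, and that $cost(alg)\le\Theta(1)\cdot OPT$ — each with high probability, then union bound over the $O(1)$ bad events to obtain $0.9$. The single structural fact that drives everything is that, since the input order is uniformly random, conditioned on the underlying multiset the partition of the points into $M_1$, $M_2$ and the phase-$3$ stream is exchangeable: $M_1$ is a uniform $\Theta(1/k)$-fraction sample (so solving $k$-means offline on it in phase~$1$ is meaningful), and $M_2$ and the phase-$3$ points are interchangeable (which is exactly what controls the ``far'' rule in phase~$3$). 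Call $O_j$ \emph{large} if $|O_j|\ge n/\poly(k)$ and \emph{small} otherwise; taking the polynomial large enough, the small clusters together hold at most $k\cdot n/\poly(k)$ points, a negligible fraction of $|M_1\cup M_2|$. By Claim~\ref{clm:known_bound_cost_large_cluster_exists_good_cluster}, with high probability $M_1$ is \emph{faithful}: the offline centers $c_1^{M_1},\dots,c_k^{M_1}$ form a $\Theta(1)$-approximate clustering of all of $D$ restricted to the large clusters, and in particular each large $O_j$ has an index $\pi(j)$ with $cost(O_j,c^{M_1}_{\pi(j)})\le\Theta(1)\cdot cost(O_j,o_j)$.

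The bound on the number of centers is the easy half. Phase~$3$ takes ``close'' centers, at most $3k\log(40k)$ per index $i^*$, hence at most $3k^2\log(40k)=\Theta(1)$ of them in total; and ``far'' centers, i.e.\ phase-$3$ points whose distance to their nearest $c^{M_1}$ center exceeds that center's threshold $R_{max}$. Here I would invoke Claim~\ref{clm:k_2_upper_known_n_bound_centers}: within each cluster $C_i^{M_1}$, by interchangeability of $M_2$ (of size $\Theta(n/k^3)$) with the phase-$3$ stream, the expected number of phase-$3$ points exceeding the largest $M_2$-point assigned to $i$ is $O(k^3)$ (a ``records'' bound); summing over $i$ and applying Markov, only $\Theta(1)$ far centers are taken with high probability. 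Adding the two counts gives $\Theta(1)$ centers.

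For the cost, points taken as centers contribute $0$, so it suffices to bound the cost of the untaken points — namely $M_1\cup M_2$, together with the phase-$3$ points that lie within $R_{max}[i^*]$ of their nearest $c^{M_1}$ center but arrived after that center's close-counter was full — grouped by the optimal cluster they belong to. (a)~For a large cluster $O_j$ I would exhibit a $\Theta(1)$-approximate center for $O_j$ in the final output: the role of $c^{M_1}_{\pi(j)}$ is transferred to a genuinely taken point by tracking the phase-$3$ points of $O_j$ near $c^{M_1}_{\pi(j)}$ — those beyond $R_{max}$ are taken as far points (cost $0$), and among those within $R_{max}$ the first $3k\log(40k)$ are taken, which by a coupon-collector argument over the $\le k$ optimal clusters sharing that ball (the $\log(40k)$ calibrated to failure probability $1/(40k)$) includes a point of $O_j$, a $\Theta(1)$-center for $O_j$'s share of the ball by Lemma~\ref{lemma:random_point_in_cluster}; summing over large $j$ costs $\Theta(1)\cdot OPT$. (b)~An untaken phase-$3$ point in a small cluster is within $R_{max}[i^*]$ of $c^{M_1}_{i^*}$, hence within $2R_{max}[i^*]$ of any taken close point of that cluster, so it costs $\le 4R_{max}[i^*]^2$; one bounds $\sum_i|C_i^{M_1}|\,R_{max}[i]^2\le\Theta(1)\cdot OPT$ using interchangeability of $M_2$ and phase~$3$ again, so that $R_{max}[i]$ is, up to the $O(k^3)$ top points (which get taken), a genuine radius of $C^{M_1}_i$, whose induced cost on $D$ is $\Theta(1)\cdot OPT$ by faithfulness. (c)~Points of small clusters landing in $M_1\cup M_2$ are few (each small cluster has $\le n/\poly(k)$ points and $M_1\cup M_2$ is a $\Theta(1/k)$-sample); if such a cluster sits near some $c^{M_1}$ its contribution is absorbed into (b), and if it is far from every $c^{M_1}$ then faithfulness forces it to be small enough that with high probability none of its points was sampled into $M_1\cup M_2$ at all. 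A union bound over the $O(1)$ bad events then gives success probability $\ge 0.9$.

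The real obstacle is part~(b) together with the calibration of the thresholds: the whole point of separating phase~$2$ from phase~$1$ is that $R_{max}$ must be computed on a sample \emph{independent} of the one used to pick the $c^{M_1}$ centers (otherwise the centers would be chosen to minimize the radii and the thresholds would be uselessly small, forcing too many far centers). Making rigorous that these independent thresholds are simultaneously (i)~not so large that too many phase-$3$ points exceed them, and (ii)~not so small that the leftover ``close-counter-full'' points accumulate more than $\Theta(1)\cdot OPT$, is the technical heart; everything else reduces to random-order exchangeability plus the two cited claims and Lemma~\ref{lemma:random_point_in_cluster}.
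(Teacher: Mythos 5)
Your bound on the number of centers matches the paper's Claim~\ref{clm:k_2_upper_known_n_bound_centers} essentially exactly (the records/exchangeability argument for ``far'' centers plus the explicit cap on ``close'' centers), but the cost half of your proposal has two genuine gaps. The first is the coupon-collector step in~(a): the first $3k\log(40k)$ phase-3 points assigned to $c^{M_1}_{\pi(j)}$ need \emph{not} include a point of $O_j$. If $O_j$ makes up only a vanishing fraction (say $n^{-0.1}$) of the points assigned to that center, the close-counter fills up with points from other optimal clusters with high probability, and no calibration of the $\log(40k)$ factor repairs this. This is precisely the case the paper's Claims~\ref{clm:k_2_known_n_small} and~\ref{clm:k_2_known_n_large} are built for: they establish a dichotomy --- either a good center for $C^*_i$ is taken, or $C^*_i$ can be \emph{merged} into another optimal cluster so that $cost(opt_{k'-1})\leq \Theta(1)\cdot cost(opt_{k'})$ --- and Claim~\ref{clm:k_2_upper_known_n_bound_cost} then closes the argument by induction on $k'$ over this dichotomy. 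Your proposal has no merging alternative and no induction, so the scenario where an optimal cluster is drowned inside a $C^{M_1}$ cluster dominated by a much larger optimal cluster is unhandled.

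The second gap is that the inequality $\sum_i|C^{M_1}_i|\,R_{max}[i]^2\leq\Theta(1)\cdot cost(opt_k)$ in~(b) is false. Consider a $C^{M_1}$ cluster with $2/\alpha_2=\Theta(k^4)$ points at distance $R$ from its center and $m\gg k^4$ points at distance $0$: with constant probability $M_2$ catches one of the far points, so $R_{max}[i]=R$, the left side is $mR^2$, yet the cluster's contribution to $cost(C^{M_1})$ is only $\Theta(k^4)R^2$. The threshold $R_{max}[i]$ is roughly a top-$(1/\alpha_2)$ quantile of the radii, not an average, so multiplying it by the cluster size does not recover the cluster cost; the paper accordingly never charges untaken points to $R_{max}$, but instead charges every optimal cluster to a concrete taken center (or merges it). Relatedly, your large/small cutoff of $n/\poly(k)$ is miscalibrated: the paper's ``small'' clusters have size below a $\poly(k)$ \emph{constant} ($16/(\alpha\delta)$), which is what makes the arguments of Claim~\ref{clm:k_2_known_n_small} (the interfering set $A$ misses $M_2$, so the closest point $x_i$ is taken as ``far'') go through; a cluster of size $n^{0.9}$ will certainly intersect $M_1\cup M_2$, so your case~(c) fails for the intermediate sizes.
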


To prove the theorem, we need to bound the number of centers the algorithm takes and its approximation. To bound the number of centers,  we note that the algorithm takes two types of centers in Line~\ref{alg:k_2_upper_known_n_line_take_far_point} and Line~\ref{alg:k_2_upper_known_n_line_take_close_point}. It is easy to bound the second type of centers by $O(k^2\log k)$. To bound the first type, focus on one cluster $C^{M_1}_i$ in the clustering $C^{M_1}.$
 We prove that a significant fraction, $a$, of the points in $C^{M_1}_i$ are received in phase $2$, see Claim~\ref{clm:random_from_each_part}. We prove that the probability of taking a point as ``far" is inversely proportional to
$a|C^{M_1}_i|$. The multiplication of the last two terms bounds the expected number of points taken as ``far" points. Importantly, this multiplication is a constant. See Claim ~\ref{clm:k_2_upper_known_n_bound_centers} for more details.


Next, we want to prove that the algorithm is a $\Theta(1)$-approximation, which is formally proved in Claim~\ref{clm:k_2_upper_known_n_bound_cost} in the appendix.  We show that for each optimal clustering, the algorithm takes as center a point that is a good enough center for the entire cluster.  For that aim, we separate the analysis into two cases depending on the size of the cluster: small (of size smaller than $\poly(k)$) or large. We start with the small-size analysis. Focus on a small cluster $C^*_i$. Take a point in $x_0\in C^*_i$ that is a good center for all the points in $C^*_i$. As a side note, since the cluster is small, there might be only one point in $C^*_i$. This point, most likely, will not be received in the first two phases. Suppose that $x_0$ is in a cluster $C^{M_1}_i$ with center $c^{M_1}_i$. Let us focus on all points $A$ in $C^{M_1}_i$ that are farther than $x_0$ from the center.  It $A$ is small, then none of the points in $A$ are chosen in phase 2, see Claim~\ref{clm:small_random_set}, and $x_0$ will be taken as a center. If $A$ is large, then $C^*_i$ can be merged into a different optimal cluster. This is formally proved in Claim~\ref{clm:k_2_known_n_small}.

Moving on to the case of large optimal cluster $C^*_i$, from Section~\ref{sec:k_1}, we know that most points $Good_i\subseteq C^*_i$ in the cluster can be a good enough center for the entire cluster. This implies that the fraction of points we get from $Good_i$ in phases $1$ and $2$ is between $(\alpha+\alpha_2)/2$ and $2(\alpha+\alpha_2)$ as the order is random, see Claim~\ref{clm:random_from_each_part}. Focus on the cluster $C^{M_1}_i$ with the center $c^{M_1}_i$ containing most of the remaining points from $Good_i$. There are two cases: either $C^{M_1}_i$ includes mostly points from $Good_i$ and then the algorithm probably takes a point from $Good_i$, or this cluster $C^*_i$ can be merged into a different optimal cluster. This is formally proved in Claim~\ref{clm:k_2_known_n_large}.


Before moving to the following case, a few remarks. The paper does not try to optimize the dependence on $k$, where the number of centers is $\poly(k)$, and the approximation is $\exp(k\log k)$. Indeed, in a follow-up work \cite{hess21}, a new algorithm was presented with improved dependency on $k$. Second, the work \cite{indyk99} designed a sublinear time algorithm for $k$-medians, which has some similarities to Algorithm~\ref{alg:k_2_upper_known_n}. One major difference is the algorithm's treatment of far points. While they can consider the furthest points in the entire dataset as far points, we need to decide online if a point is far or not. For that, we
had to introduce phase $2$.
A detailed discussion of more differences can be found in Appendix~\ref{apx:k_2_random_order_known_n}. 





 \subsection{Random order and unknown \texorpdfstring{$n$}{Lg}}\label{subsec:random_order_unknown_n}
 In the last section, we designed an algorithm that uses $\Theta(1)$ centers and achieves $\Theta(1)$-approximation, when $k$ is a constant, if the number of points, $n$, is known in advance. In contrast, in this section, we show that if $n$ is unknown in advance, any algorithm must take $\Omega\left(k\log \frac{n}{k}\right)$ centers. 
 The lower-bound dataset is similar to the $\Omega(n)$ lower bound used in the worst-case order, where points are in $\mathbb{R}$ with increasing distances. The idea is that an approximation algorithm must take the $k-1$ largest points as each step; otherwise, the data stream can stop. 
  In the rest of the section, we design a new algorithm that achieves a matching upper bound, up to a constant.
 
 
 \begin{theorem}\label{thm:k_lower_random}
For any scalar $c>1$, integers $k\geq 2$ and $n$, and for any clustering algorithm that does not know what $n$ is and is a $c$-approximation, there are $n$ points that arrive uniformly at random and the algorithm must take $\Omega\left(k\log \frac{n}{k}\right)$ centers with probability at least $0.7$.
\end{theorem}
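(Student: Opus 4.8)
The plan is to adapt the worst-case-order lower bound (the $\Omega(n)$ bound / Claim~\ref{thm_k_2_worst}) to the random-order setting by building a dataset whose \emph{geometry} forces many centers regardless of the order in which the points arrive, and then averaging over the random order. First I would place $k-1$ ``anchor'' groups far apart on the line (one per extra optimal center beyond the first), and one ``heavy tail'' structure consisting of $\Theta(\log\frac nk)$ groups $G_1,\dots,G_m$ that are geometrically spaced: $G_j$ sits at distance roughly $c^j$ (times a base gap) from the origin, and the sizes $|G_j|$ are chosen to grow geometrically as well, so that $\sum_{j'>j}|G_{j'}|\cdot(\text{dist of }G_{j'})^2$ is always comparable to $|G_j|\cdot(\text{dist of }G_j)^2$. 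The optimal $k$-clustering puts one center in each anchor group and one in (roughly) the heaviest tail group, so $cost(opt_k)$ is small; in particular, for any $j$, if the algorithm has \emph{not} taken any point at distance $\gtrsim c^j$, then conditioned on the stream ending right after $G_j$ is exhausted, the cost blows up by more than a factor $c$. Since a $c$-approximation algorithm cannot know $n$, it must hedge against every such truncation, so with constant probability it is forced to take at least one center from each of $\Omega(\log\frac nk)$ scales — and by replicating this construction $k-1$ times in parallel at well-separated locations (so the $k$ optimal centers cannot cover two copies at once), we multiply the bound to $\Omega(k\log\frac nk)$.

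The key steps, in order, are: (1) fix the construction — $k-1$ parallel copies, each copy a geometric cascade of $m=\Theta(\log\frac nk)$ groups with geometrically growing distances and sizes, total $n$ points; (2) bound $cost(opt_k)$ from above by exhibiting an explicit near-optimal center set (one per anchor, absorbing all lighter tail groups into the heaviest, using the triangle-type inequality to charge their cost); (3) for each copy $i$ and each scale $j$, define the ``bad event'' $B_{i,j}$ that the algorithm reaches the end of group $G_{i,j}$ having taken no center at distance $\ge$ threshold$_{i,j}$ in copy $i$, and show that on $B_{i,j}$, truncating the stream there yields $cost(alg)/cost(opt_k) > c$; (4) since the algorithm is a $c$-approximation for \emph{every} dataset (including every truncation) with probability $\ge 0.9$, deduce that $\Pr[B_{i,j}]$ is small, hence with probability $\ge 0.7$ the algorithm takes, in each copy, at least one center at each of the $\Omega(m)$ scales; (5) sum over the $k-1$ copies and over the $\Theta(\log\frac nk)$ scales to get $\Omega(k\log\frac nk)$ centers, being careful that a single taken point lies in only one copy and at only one scale so there is no double counting.

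The main obstacle is \textbf{handling the random order correctly}: in the worst-case-order proof the adversary presents groups smallest-to-largest, but here the order is uniform, so the algorithm may see a large-distance point early and ``get it for free.'' The fix is to make the sizes grow fast enough that, at the random time when $G_{i,j}$ is first exhausted, with high probability only groups $G_{i,1},\dots,G_{i,j}$ (and negligibly few points of heavier groups) have arrived — i.e. the geometric size growth makes the arrival order of the \emph{group-exhaustion} times essentially the sorted order with high probability. Equivalently, I would track the ``prefix'' of the random permutation and argue that for the relevant stopping times the set of already-arrived points is, up to a lower-order fraction, exactly the union of the lighter groups; a union bound over the $O(km)$ scales keeps this clean. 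The second, minor, subtlety is the probability bookkeeping — the $0.9$ in Definition~\ref{dfn:c-approximation} must be spread over all $O(km)$ truncation instances, but since $m=\Theta(\log\frac nk)$ and we only need a final guarantee of $0.7$, choosing constants (and, if needed, a slightly sparser set of truncation points, one per constant-factor jump in distance) absorbs the union bound. Everything else — the cost upper bound for $opt_k$, the cost blow-up on $B_{i,j}$ — is a routine computation with the geometric parameters, using only a triangle-type inequality, which is why the bound is universal across $\ell_p$-type costs as claimed in Section~\ref{sec:general_cost_function}.
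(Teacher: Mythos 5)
Your plan has a fatal gap at exactly the point you flag as "the main obstacle." You claim that by choosing the group sizes $|G_{i,j}|$ to grow geometrically you can ensure that "at the random time when $G_{i,j}$ is first exhausted, with high probability only groups $G_{i,1},\dots,G_{i,j}$ (and negligibly few points of heavier groups) have arrived." This is false, and no choice of sizes can make it true. In a uniformly random order, a prefix of length $t$ is a uniformly random $t$-subset of the dataset, hence a \emph{proportional} sample of every group: by time $t$ roughly $t\cdot|G_{i',j'}|/n$ points of every group have arrived. Moreover, the exhaustion time of a group of size $s$ is the maximum of $s$ uniform positions among $n$, which exceeds $n/2$ with probability at least $1-2^{-s}$; so by the time \emph{any} group is exhausted, about half of every other group --- including the farthest, heaviest ones --- has already been seen. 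Consequently your truncation events do not isolate scales: the arrived set at the stopping time contains points at all distances, the algorithm has already had ample opportunity to grab far points "for free," and the arrived set is a random (not fixed) dataset, so Definition~\ref{dfn:c-approximation} cannot be invoked on it directly. The geometry-based "one forced center per scale" count therefore collapses, and the whole construction of $\Theta(\log\frac nk)$ spatial scales is the wrong mechanism for a random-order lower bound.

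The paper's proof gets the $\log$ factor from the randomness of the order itself rather than from multi-scale geometry. It uses a single strictly increasing sequence $w_1<\dots<w_n$ with gaps growing so fast that $(w_{i+1}-w_i)^2>c\sum_{j\le i}(w_i-w_j)^2$. A point is called \emph{$k$-maximal} at its arrival time if it is among the $k-1$ largest values seen so far; if such a point is not taken, the (fixed) dataset consisting of the values up to that point witnesses a violation of the $c$-approximation guarantee, so (as in Claim~\ref{thm:k_2_lower_random}) at least half of the $k$-maximal points must be taken with probability $0.8$. The number of $k$-maximal points is a record statistic of the random permutation: the $i$-th arrival is $k$-maximal with probability $\frac{k-1}{i}$ for $i\ge k-1$, giving expectation $k-2+\sum_{j\ge k-1}\frac{k-1}{j}=\Omega\bigl(k\log\frac nk\bigr)$, concentrated by Hoeffding. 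Note that this count is independent of $c$, whereas your construction would at best yield $\Omega(k\log_c\frac nk)$ scales. If you want to salvage your write-up, replace the multi-scale geometry with this record-counting argument.
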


\subsubsection{The case of \texorpdfstring{$k=2$}{Lg}}
To simplify the presentation, we start with the case that $k=2$. Many of the ideas are also applicable to the case of $k>2.$
We prove that $\Theta(\log n)$ centers are needed and sufficient for a $\Theta(1)$-approximation, when $n$ is unknown and $k=2$.
 We show a simple algorithm that saves only a small number of bits in memory (more specifically, it saves the first example and only one more number), achieves $\Theta(1)$-approximation, and takes at most $\log(n)+2$ centers. Our results are tight when $n$ is unknown, as we show a matching lower bound. 




\begin{algorithm}
\caption{Online clustering with $k=2$, $n$ unknown, random order}
\begin{algorithmic}[1]\label{alg:k_2}
 \STATE take $x_1$ as a center
\STATE $x := x_1$ (save first data point)
\STATE max\_dis $:= 0$
\FOR {$t=2,\ldots$} 
	\IF {$\norm{x_t-x}>$ max\_dis} \label{line:is_new_cluster}
        \STATE take $x_t$ as a center
        \STATE max\_dis $:= \norm{x_t-x}$
       \ENDIF
\ENDFOR
\end{algorithmic}
\end{algorithm}

\begin{theorem}\label{thm:k_2_upper_random}
There is an online algorithm such that if the examples are received with random order ($n$ does not have to be known) then with probability at least $0.9$ it holds that number of centers is $O(\log n)$ and $cost(alg)\leq \Theta(1)\cdot cost(opt_2).$
\end{theorem}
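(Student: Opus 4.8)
The plan is to analyze Algorithm~\ref{alg:k_2} separately for its two guarantees — the bound on the number of centers and the approximation ratio — and to argue that both hold simultaneously with the required probability by a union bound over two events, each failing with probability at most $0.05$. The center-count bound is the easy half: the algorithm takes a new center only when the current point $x_t$ is strictly farther from the anchor $x=x_1$ than every previously seen point, i.e. only when $\norm{x_t-x_1}$ sets a new running maximum. If I sort the $n$ points by their distance to $x_1$, then the $t$-th arriving point sets a new maximum with probability $1/t$ when the order is uniformly random (standard record-value argument), so the expected number of centers is $\sum_{t=1}^n 1/t = H_n \le \ln n + 1$, and concentration (e.g. the variance of the number of records is also $O(\log n)$, or a Chernoff-type bound for sums of independent indicators, since the record indicators are independent) gives that the number of centers is $O(\log n)$ with probability at least $0.95$. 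This matches the paper's informal claim of ``at most $\log(n)+2$ centers'' in expectation; I would state the high-probability version.

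The approximation bound is the substantive half. Here is the structure I would use. Let $c_1^*,c_2^*$ be the optimal $2$-means centers and let $C_1^*,C_2^*$ be the induced clusters, with $x_1\in C_1^*$ say. Since $x_1$ is taken as a center, every point of $C_1^*$ is charged at most its squared distance to $x_1$. By Lemma~\ref{lemma:random_point_in_cluster} (a random point of a cluster is a good center for that cluster in expectation) together with a Markov bound, with probability at least $0.95$ the first point $x_1$ satisfies $\sum_{x\in C_1^*}\norm{x-x_1}^2 \le \Theta(1)\cdot \sum_{x\in C_1^*}\norm{x-c_1^*}^2$; this controls the contribution of all of $C_1^*$. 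It remains to control $C_2^*$. The key geometric observation is that the algorithm keeps taking the current farthest-from-$x_1$ point, so among the points of $C_2^*$ it will, in particular, take the one that arrives and is currently the farthest; more usefully, once some point of $C_2^*$ that is ``representative'' (a good center for $C_2^*$ in the sense of Lemma~\ref{lemma:random_point_in_cluster}) has arrived, any point of $C_2^*$ still gets charged to the closest center already selected, and the selected centers include every running record. I would argue that with good probability a representative point $y$ of $C_2^*$ is taken as a center: either $y$ is itself a running record when it arrives, or some point at least as far from $x_1$ as $y$ was already taken, and by the triangle inequality (used in the squared form, costing a constant factor — this is exactly the triangle-type inequality the paper invokes for universality) any such earlier-taken center is also within $\Theta(1)$ of $y$ and hence a $\Theta(1)$-approximate center for $C_2^*$. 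Combining, $cost(alg) \le \Theta(1)\cdot(cost_{C_1^*}(c_1^*) + cost_{C_2^*}(c_2^*)) = \Theta(1)\cdot cost(opt_2)$.

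The main obstacle I anticipate is making the argument for $C_2^*$ fully rigorous: I need that \emph{some} center actually taken by the algorithm is a $\Theta(1)$-good center for $C_2^*$, and the farthest-first rule only guarantees I take \emph{records} with respect to distance from $x_1$, not points that are well-placed relative to $c_2^*$. The resolution is to note that a point $y\in C_2^*$ which is a good center for $C_2^*$ satisfies $\norm{y-x_1} \ge \norm{c_1^*-c_2^*} - \Theta(\mathrm{diam})$-type lower bounds, so $y$ is "far" from $x_1$ compared to typical points of $C_1^*$, meaning that when $y$ arrives either it is a record (and gets taken) or a point $z$ with $\norm{z-x_1}\ge\norm{y-x_1}$ was taken earlier; in the latter case one shows $z$ is also close enough to $y$, since both lie far from $x_1$ in a bounded-diameter configuration, to serve as a $\Theta(1)$ center for $C_2^*$. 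I would isolate this as the crucial lemma, quantify the constants using Lemma~\ref{lemma:random_point_in_cluster} and the triangle-type inequality, and then assemble the two failure events (bad $x_1$ for $C_1^*$; too many centers) into the final $0.9$ success probability. The full details are deferred to the appendix.
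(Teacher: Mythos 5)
Your center-count analysis and your treatment of $C^*_1$ (charging all of it to $x_1$ via Lemma~\ref{lemma:random_point_in_cluster} plus Markov) match the paper's proof. The gap is in your argument for $C^*_2$. You claim that if a good point $y\in C^*_2$ arrives and is not a record, then the earlier-taken center $z$ with $\norm{z-x_1}\ge\norm{y-x_1}$ is ``close enough to $y$ \ldots to serve as a $\Theta(1)$ center for $C^*_2$.'' This is false: being far from $x_1$ does not place $z$ anywhere near $C^*_2$. Take $C^*_1$ to be many copies of the origin plus a single outlier at $(0,M)$, and $C^*_2$ a tight cluster of many points near $(M,0)$. If the outlier arrives before any point of $C^*_2$, it is a record and is taken, it blocks every point of $C^*_2$ from ever being a record, and it sits at distance $\sqrt{2}M$ from $C^*_2$; the algorithm's cost is then $\Omega(|C^*_2|\cdot M^2)$ against $cost(opt_2)=O(M^2)$, an unbounded ratio. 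The algorithm survives only because this blocking event is improbable, not because the blocker is geometrically harmless. The triangle inequality bounds $\norm{z-y}$ only by roughly $2\norm{y-x_1}$, which is the inter-cluster scale, not the radius of $C^*_2$; there is no ``bounded-diameter configuration'' to appeal to.

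The paper closes this gap with a dichotomy your proposal is missing. Let $y_2^*$ be the closest point of $Good_2$ to $x_1$ and let $B=\{y_1\in C^*_1:\norm{y_1-x_1}\ge\norm{y_2^*-x_1}\}$ be the set of potential blockers. If $|B|\le 0.01|C^*_2|$, then with probability at least $0.98$ the first arrival from $Good_2\cup B$ lies in $Good_2$; that point is then an uncontested record and is taken as a center --- a probabilistic escape, not a geometric one. If $|B|>0.01|C^*_2|$, no good center for $C^*_2$ need ever be taken; instead one merges $C^*_1$ and $C^*_2$ into a single cluster centered at $x_1$ at constant-factor cost, because the $|B|$ far points of $C^*_1$ already contribute $|B|\cdot\norm{y_2^*-x_1}^2\ge 0.01\,|C^*_2|\cdot\norm{y_2^*-x_1}^2$ to $\sum_{y_1\in C^*_1}\norm{y_1-x_1}^2=O(cost(opt_2))$, and $|C^*_2|\cdot\norm{y_2^*-x_1}^2$ is exactly (up to the squared-triangle-inequality constant) the price of rerouting all of $C^*_2$ through $y_2^*$ to $x_1$. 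You need both branches; your proposal has neither the small-$B$ probabilistic argument nor the large-$B$ merging argument in usable form.
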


To bound the number of centers taken by Algorithm~\ref{alg:k_2}, note that the $i$-th example is chosen as a center only if it is the furthest from $x$ (recall that $x$ is the first example). This will happen with probability $\frac{1}{i-1}$. Thus, the expected number of centers is the $n$-th harmonic number, which is about $\log n.$ 
To prove the algorithm is a $\Theta(1)$-approximation, in a high level, we separate the analysis into two cases: either the two clusters are close to each other or not. It the two clusters are close, we can treat them as one cluster with center $x$. Using Lemma~\ref{lemma:random_point_in_cluster}, $x$ is a good center. If the two clusters are far apart, then, most probably, the first point from the second cluster is furthest away from $x$ among all points received so far. 

Formalizing the last argument, we want to show that the algorithm takes two points as centers that are good representatives of each of the optimal clusters $C^*_1,C^*_2$. Denote by $Good_i$, $i=1,2$, the set of points in each optimal cluster $C^*_i$ that can be taken as a center without increasing the cluster's cost by much.  From the same arguments as in Section 3, we know that $Good_i$ is a significant fraction of $C^*_i$. 
With high enough probability, the first point from each cluster $i=1,2$ is a good center, i.e., in $Good_i$. Specifically, $x$, the first point, is a good center for its cluster $C^*_1$. Thus the algorithm needs to take as a center one point in $Good_2$ (the algorithm might take many more points as centers to achieve this goal). We hope to show that the algorithm takes the first point from $C^*_2$ as a center. Denote by $y^*_2\in Good_2$ the closest point in $Good_2$ to $x$. Focus on the set $B$ of points that will interfere in taking the first point in $Good_2$ as a center $$B = \{y_1\in C^*_1 : \norm{y_1-x} \geq \norm{y_2^*-x} \}.$$
There are two cases: either $B$ is small compared to $C^*_2$ or not. If it is small, then most likely, the first point from $C^*_2\cup B$ is in $C^*_2$, or in different words, the first point from $C^*_2$ will arrive before any point in $B$. Thus the first point from $C^*_2$ will be taken as center. In the other case, $B$ is large compared to $C^*_2$. This means merging $C^*_1$ and $C^*_2$ together increases the cost by only a constant factor. Thus, $x$ can be a good center for $C^*_2$ too.

%


\subsubsection{The case of constant \texorpdfstring{$k>2$}{Lg}}


For the more general case of constant $k>2$ we present Algorithm~\ref{alg:k} that uses $O\left(k\log\frac{n}k\right)$ centers, which matches the lower bound of Theorem~\ref{thm:k_lower_random}, and is a $\Theta(1)$-approximation for constant $k$. 

We want to borrow the main idea of Algorithm~\ref{alg:k_2}: if clusters are far apart, take the first point from each optimal cluster, otherwise merge clusters. Algorithm~\ref{alg:k_2} detects the arrival of the first point $x_t$ from a new cluster by measuring the distance to the first point received $x$, see Line~\ref{line:is_new_cluster}. This technique will not work for $k>2$, as the next example demonstrates. Focus on $k=3$ and three well-separated clusters on a line with centers $a_1 \ll a_2 \ll a_3$, where the middle cluster is much smaller in size than the other two clusters, see Figure~\ref{fig:k_motivation_algorithm}. Most likely, the first points will be from the first and third clusters. It is unclear how to detect the arrival of the first point from the second cluster. 

Inspired by Figure~\ref{fig:k_motivation_algorithm}, taking $k$ points that are farthest from each other, the first point from a new cluster is one of those $k$ points. The main idea of Algorithm~\ref{alg:k_2} is to use the known farthest-first-traversal algorithm as a subroutine. Perhaps surprisingly, This subroutine is beneficial for a different cost function, $k$-center \cite{dasgupta2013geometric}. For completeness, the farthest-first-traversal algorithm appears as Algorithm~\ref{alg:fft}.

\begin{center}
\begin{figure}
       \centering \includegraphics[scale=0.5]{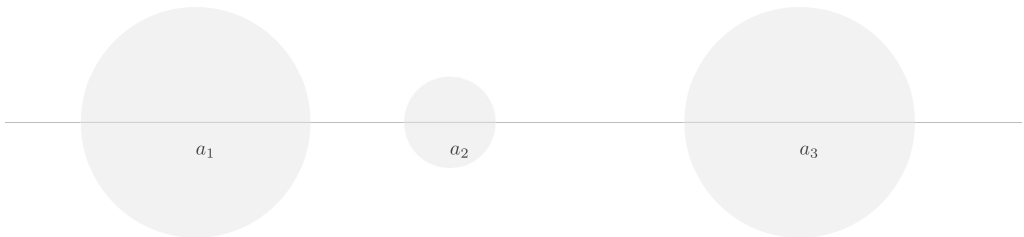}
        \caption{Motivation for Algorithm~\ref{alg:k}: detecting the middle cluster.}
        \label{fig:k_motivation_algorithm}
\end{figure}    
\end{center}
\begin{algorithm}
\caption{Farthest-first-traversal$(M,s,k)$}
\begin{algorithmic}[1]\label{alg:fft}
\STATE $S=\{s\}$
\FOR {$t=2,\ldots,k$} 
	\STATE $v = \argmax_{x\in M - S} \min_{y\in S} \norm{x-y}^2$
	\STATE $S = S \cup \{v\}$ 
\ENDFOR
\RETURN $S$
\end{algorithmic}
\end{algorithm}

The farthest-first-traversal algorithm returns $k$ (a parameter) points that are far away from each other in a given dataset $M$. 
Specifically, it starts with some point $s\in M$ that is given as an input. 
Then it takes the point $x_2\in M$ that it furthest away from $s.$
Then a point $x_3\in M$ that maximizes the distance to $S$, where the distance is equal to  $dis(S,x)=\min_{y\in S}\norm{y-x}.$

For our purposes, the primary claim we need from farthest-first-traversal is that if $S$ was returned, then the distance of any point $x$ to $S$ is smaller than the distance between any two points inside $S$, as the next lemma proves. For completeness, the proof is in Appendix~\ref{apx:auxiliary_claims}.
\begin{lemma}\label{lemma:farthest_first_traversal}
Suppose  $S=$ Farthest-first-traversal$(M,s,k)$ and $x\in M-S$, then 
$$\min_{y\in S}\norm{x-y} \leq \min_{y_1,y_2\in S} \norm{y_1-y_2}.$$ 
\end{lemma}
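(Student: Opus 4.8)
The plan is to prove the two inequalities in the chain separately, using the greedy property of farthest-first-traversal. Write $S = \{s_1, s_2, \ldots, s_k\}$ in the order the points were added, so $s_1 = s$ and, for each $j \geq 2$, the point $s_j$ was chosen to maximize $\min_{y \in \{s_1,\ldots,s_{j-1}\}} \norm{x - y}$ over all $x \in M - \{s_1,\ldots,s_{j-1}\}$. Let $r_j := \min_{1 \le i < j} \norm{s_j - s_i}$ denote the ``insertion distance'' of $s_j$, i.e.\ the distance from $s_j$ to the partial set at the time it was added. The key monotonicity observation is that the insertion distances are non-increasing: $r_2 \ge r_3 \ge \cdots \ge r_k$. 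Indeed, when $s_j$ is chosen it has the largest distance-to-$S$ among all remaining points; in particular $s_{j+1}$ was a candidate at that step, so its distance to $\{s_1,\ldots,s_{j-1}\}$ is at most $r_j$, and its distance to the larger set $\{s_1,\ldots,s_j\}$ can only be smaller, giving $r_{j+1} \le r_j$.

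From this I would extract both halves of the lemma. For the right-hand inequality $\min_y \norm{x-y} \le r_k$: since $x \in M - S$, it was a candidate at the step when $s_k$ was added, so its distance to $\{s_1,\ldots,s_{k-1}\}$ is at most $r_k$ (by maximality of $s_k$); its distance to the full set $S$ is no larger. For the left-hand side $r_k \le \min_{y_1,y_2 \in S}\norm{y_1-y_2}$: take any $y_1 = s_a$, $y_2 = s_b$ with $a < b$. Then $\norm{s_a - s_b} \ge \min_{i<b}\norm{s_i - s_b} = r_b \ge r_k$ by the monotonicity just established. Taking the minimum over all pairs gives $r_k \le \min_{y_1,y_2\in S}\norm{y_1-y_2}$. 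Chaining the two bounds through $r_k$ yields $\min_{y\in S}\norm{x-y} \le \min_{y_1,y_2\in S}\norm{y_1-y_2}$, as desired. (Note the lemma is stated with Euclidean norms, not squared norms, which is fine since $t\mapsto t^2$ is monotone, so the $\argmax$ in Algorithm~\ref{alg:fft} with squared distances is the same as with distances.)

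The only real subtlety, and the step I would be most careful about, is the monotonicity $r_{j+1}\le r_j$: one must make sure the candidate pool shrinks in the right way and that ``distance to a superset is at most distance to a subset'' is applied in the correct direction. There is also a trivial edge case: if $M$ has fewer than $k$ distinct points the algorithm may be forced to re-pick a point or have $r_j = 0$, but then the right-hand side is $0$ and $x \in M - S$ forces $\min_y\norm{x-y}=0$ as well, so the inequality still holds; I would dispatch this in a sentence. Everything else is a direct unwinding of the greedy selection rule.
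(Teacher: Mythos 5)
Your proof is correct and rests on exactly the same two ingredients as the paper's argument — the greedy maximality at each insertion step and the fact that the distance to a superset is at most the distance to a subset; the paper simply applies them in one shot to an arbitrary pair $y_1,y_2$ at the moment the later of the two is inserted (using that $x$, never being selected, is still a candidate then), whereas you route both inequalities through the monotone insertion radii $r_2\ge\cdots\ge r_k$. Both organizations are valid and essentially equivalent; nothing further is needed.
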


The online clustering algorithm, Algorithm~\ref{alg:k}, saves in memory all the points encountered so far. Deciding whether to take the current point as a center or not uses the farthest-first-traversal algorithm that picks $k$ far away points. The current point is taken as a center if chosen as one of the $k$ points.

\begin{algorithm}
\caption{Online clustering with $k\geq 2$, $n$ unknown, random order}
\begin{algorithmic}[1]\label{alg:k}
\STATE Take $x_1,\ldots, x_k$ as centers and save them in $M$
\FOR {$t=k+1,\ldots$} 
\STATE $M = append(M, x_t) $
	\STATE $S = $ Farthest-first-traversal$(M,x_1,k)$
	\IF {$x_t\in S$} 
        \STATE take $x_t$ as a center
       \ENDIF
     
\ENDFOR
\end{algorithmic}
\end{algorithm}

\begin{theorem}\label{thm:k_upper_random}
There is an online algorithm such that for any constant $k$, if the examples are received with random order ($n$ does not have to be known) then with probability at least $0.9$  the algorithm uses $O(\log n)$ centers and achieves $\Theta(1)$-approximation. 
\end{theorem}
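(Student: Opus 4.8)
The plan is to prove two things separately: that Algorithm~\ref{alg:k} takes only $O(\log n)$ centers (for constant $k$), and that its $k$-means cost is within a constant factor of $cost(opt_k)$. For the center count, I would generalize the harmonic-number argument used for Algorithm~\ref{alg:k_2}. A point $x_t$ is taken as a center only if, when appended to the current memory $M$, it is selected by Farthest-first-traversal$(M,x_1,k)$. Since the order is uniformly random, conditioned on the \emph{set} of the first $t$ points, each of them is equally likely to be the one that arrived last; hence the probability that $x_t$ lands in the size-$k$ output set $S$ is at most $k/t$ (the seed $x_1$ is fixed, so really at most $(k-1)/(t-1)$, but $k/t$ suffices). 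Summing over $t=k+1,\dots,n$ gives expected center count $O(k\log n) = O(\log n)$ for constant $k$, and a Markov / concentration step upgrades this to ``$O(\log n)$ with probability at least $0.95$,'' say; the exact constant $0.9$ is recovered by intersecting with the cost event below.

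For the cost bound I would mimic the $k=2$ analysis but drive it through Lemma~\ref{lemma:farthest_first_traversal}. Let $C^*_1,\dots,C^*_k$ be the optimal clusters and, as in the earlier sections, let $Good_i\subseteq C^*_i$ be the constant fraction of points each of which is an $O(1)$-approximate center for its whole cluster (Lemma~\ref{lemma:random_point_in_cluster}). The key structural claim I want is: after all points have arrived, the final farthest-first set $S$ contains, for every optimal cluster $C^*_i$, either a point \emph{in} $C^*_i$ that is a good center for it, or a point in some other cluster $C^*_j$ that is nonetheless a good center for $C^*_i$ because $C^*_i$ can be ``merged'' into $C^*_j$ at only constant cost. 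The merging criterion is exactly the Algorithm~\ref{alg:k_2} dichotomy generalized: for a target cluster $C^*_i$, look at the closest good point $y^*_i\in Good_i$ to the current FFT set, and let $B$ be the set of points (in already-``claimed'' clusters) that are farther from that set than $y^*_i$ is; if $B$ is small relative to $|C^*_i|$ then with high probability $y^*_i$ (or the first point of $C^*_i$, which is good with constant probability) arrives before anything in $B$ and gets picked into $S$ by the farthest-first rule; if $B$ is large, then by Lemma~\ref{lemma:farthest_first_traversal} the diameter of $S$ is comparable to the distance from $C^*_i$ to an already-represented cluster, which forces that cluster and $C^*_i$ to be within a constant factor of $cost(opt_k)$ of each other, so the representative already in $S$ serves $C^*_i$ at constant cost. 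Peeling the clusters off one at a time in increasing order of ``distance to the current $S$'' and taking a union bound over the $k$ (constantly many) bad events gives: with probability $\geq 0.95$, $S$ contains a constant-quality center for every $C^*_i$, hence $cost(\text{points chosen by alg})\leq \Theta(1)\cdot cost(opt_k)$ since every point the algorithm ever routes to a cluster is within $O(1)$ of its optimal center.

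I would then combine the two high-probability events by a union bound to get the claimed probability $0.9$. The main obstacle, I expect, is making the merging dichotomy rigorous in the $k>2$ setting: with $k=2$ there is a single reference point $x$ and a single ``interfering set'' $B$, but now the farthest-first set evolves over time and the distances being compared are distances-to-$S$ rather than distances-to-a-fixed-point, so one must be careful that (a) once a good representative of $C^*_i$ is in $S$ it is never evicted (FFT is monotone — $S$ only grows within a run, but across time steps $M$ grows and the output set can change, so I need the observation that a point already taken as a center stays a center and the \emph{set} of taken centers is a superset of every intermediate $S$), and (b) the ``small $B$ $\Rightarrow$ first point of $C^*_i$ arrives early'' probabilistic step still works when $B$ is defined through the random, time-varying $S$ — which I would handle by fixing the final configuration of clusters and good sets first, defining $B$ with respect to the \emph{final} diameter scale, and only then invoking the random-order argument, exactly as the $k=2$ sketch does with $y^*_2$ and $B$. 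The dependence of all hidden constants on $k$ is allowed to be exponential (as the paper already flags), which gives plenty of slack in every union bound and concentration estimate.
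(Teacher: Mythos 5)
Your center-count argument is exactly the paper's: under random order the $t$-th arrival lands in the size-$k$ farthest-first set with probability $k/t$, the harmonic sum gives $O(k\log\frac{n}{k})$ in expectation, and Markov's inequality finishes. The overall shape of your cost analysis --- a per-cluster dichotomy in which either the first point of $C^*_i$ is caught by the farthest-first rule or $C^*_i$ is merged into another optimal cluster, with Lemma~\ref{lemma:farthest_first_traversal} supplying the contradiction --- is also the paper's.

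The gap is in how you define the interference set $B$. You define it via distance to the \emph{current} FFT set and propose to repair the circularity by ``fixing the final configuration'' and using ``the final diameter scale.'' But the final farthest-first set is itself a random object determined by the arrival order, so conditioning on it destroys the exchangeability needed for the step ``the first point of $C^*_i$ arrives before anything in $B$.'' The paper's resolution is to make the threshold purely deterministic, a function of the dataset alone: let $N$ be the $\nicefrac{|C^*_i|}{100k}$ points outside $C^*_i$ closest to $Good_i$, let $dis$ be the largest of those distances, fix one arbitrary representative $x_r\in Good_r$ for each other cluster, and set $B=\cup_{r\neq i}\{y\in C^*_r: d(y,x_r)\geq \nicefrac{dis}{2D}\}$. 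Because $N$ and $B$ are fixed before any randomness, the random-order argument shows that when the first point of $C^*_i$ arrives, with high probability nothing from $N\cup B$ has yet appeared, so every arrived point outside $C^*_i$ is at distance at least $dis$ from $Good_i$; if that first point is nevertheless rejected, the pigeonhole principle places two FFT points in one cluster $C^*_j$, Lemma~\ref{lemma:farthest_first_traversal} forces their mutual distance to be at least $dis$, and the triangle inequality puts one of them in $B$ --- a contradiction. This ``somewhere between closest and farthest'' choice of threshold is the one idea your sketch does not supply, and without it the small-$B$ probabilistic step does not go through. A secondary omission: in the merge case the benchmark changes, so the paper runs a formal induction on the number of optimal clusters, proving $cost(alg)\leq f(k')\cdot cost(opt_{k'})$ by descending to $opt_{k'-1}$; your ``peeling'' gestures at this, but the recursion is what makes the constants compose across merges.
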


The number of centers Algorithm~\ref{alg:k} takes is bounded because the $i$-th example, $i>k$, is chosen as a center only if it is one of the $k$ points defining the unique farthest-first-traversal for the first $i$ points. This will happen with probability $\frac{k}{i}$. Thus, the expected number of centers can be calculated using the $n$-th harmonic number, which is about $k\log \frac{n}{k}.$ 

Next, we show that the algorithm is a $\Theta(1)$-approximation. We remark that the paper does not try to optimize the dependence on $k$, where the approximation is $\exp(k\log k)$. Focus on an optimal cluster $C^*_i$ and its first point $x_i$. It can be the case that there is another optimal cluster $C^*_r$ and a point $x_r\in C^*_r$ that causes $x_i$ not to be taken as a center. This situation can happen, but we want to show that it occurs with a small probability. To do so, we show that the number of points that might interfere with taking $x_i$
as a center is small. We need to define the set of points that interfere. Intuitively, these are points that have a considerable distance compared to points in $C^*_i$. 
One option to define this considerable distance is the distance of $C^*_i$'s closest point to other clusters.
 Another is the furthest. It turns out that both of these options will not work. The furthest will not work because it is not necessarily a good center to $C^*_i$. The closest will not work because there might be many points in other clusters that are farthest from this point. We need to define this distance somewhere between the farthest and the closest. 
Lemma~\ref{lemma:farthest_first_traversal} implies that if the first point from a new cluster is not taken as a center, then there are many points in a different cluster $C^*_r$ with cost higher than merging $C^*_i$ to a different optimal cluster. See the proof of Claim~\ref{clm:k_2_upper_unknown_n_bound_cost} for more details.




\section{General cost function}\label{sec:general_cost_function}
So far we focused on the $k$-means cost, but one can consider a more general cost:
\begin{equation}\label{eq:general_cost}
    cost(c_1,\ldots, c_k) = \sum_{x\in D} \dis{x}{c(x)},
\end{equation}
 where $d$ is a distance function\footnote{$d$ should be   (i) non negative ($d(x,y)\geq 0$) (ii) $d(x,x)=0$  and (iii) symmetric $d(x,y) = d(y,x)$ (iv) satisfy triangle inequality} and $c(x)=\argmin_{c_i\in\{c_1,\ldots, c_k\}}\dis{x}{c_i}.$ Specifically, we focus on distance function that satisfy a version of a triangle inequality: there is a \emph{constant} $D\geq 1$ such that  
\begin{equation}\label{eq:generalized_triangle_inequality}
\forall u,v,w.\quad d(u,v)\leq D\cdot(d(u,w)+d(w,v)).
\end{equation}

For the $k$-means cost $\dis{x}{y}=\norm{x-y}^2$ and indeed Claim~\ref{clm:norm_of_sum_vectors} proves that Inequality \ref{eq:generalized_triangle_inequality} holds with $D=2.$ In the $k$-medians case $d(x,y)=\norm{x-y}_1$ and Inequality \ref{eq:generalized_triangle_inequality} holds with $D=1.$ 

We show that our results, which are summarized in Figure~\ref{fig:summary_results}, hold for any cost function that satisfies Inequality~\ref{eq:generalized_triangle_inequality}. In the rest of this section we outline some of the proof ideas. First observe that the triangle inequality immediately implies a similar claim as Lemma~\ref{lemma:random_point_in_cluster}; see Appendix~\ref{apx:auxiliary_claims} for the proof:
\begin{lemma}\label{lemma:general_cost_expected_cost}
For any $x_1,\ldots,x_n,\mu$, and integer $j$ chosen uniformly at random from $[n]$, it holds that 
$$\E_{j\in[n]}\left[\sum_{i=1}^n\dis{x_i}{x_j}\right]\leq 2D\cdot \sum_{i=1}^n\dis{x_i}{\mu}.$$ 
\end{lemma}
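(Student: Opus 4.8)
The plan is to unfold the expectation as an average and then apply the generalized triangle inequality \eqref{eq:generalized_triangle_inequality} termwise. Writing the expectation explicitly, the left-hand side equals
\begin{equation*}
\E_{j\in[n]}\left[\sum_{i=1}^n\dis{x_i}{x_j}\right] = \frac{1}{n}\sum_{j=1}^n\sum_{i=1}^n \dis{x_i}{x_j}.
\end{equation*}
So it suffices to bound the double sum $\sum_{i,j}\dis{x_i}{x_j}$ by $2Dn\sum_{i=1}^n\dis{x_i}{\mu}$.

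Next I would route every pair $(x_i,x_j)$ through the point $\mu$: by Inequality~\eqref{eq:generalized_triangle_inequality} with $u=x_i$, $w=\mu$, $v=x_j$, we have $\dis{x_i}{x_j}\le D\bigl(\dis{x_i}{\mu}+\dis{\mu}{x_j}\bigr)$, and by symmetry of $d$ this is $D\bigl(\dis{x_i}{\mu}+\dis{x_j}{\mu}\bigr)$. Summing over all $i,j\in[n]$,
\begin{equation*}
\sum_{i=1}^n\sum_{j=1}^n\dis{x_i}{x_j} \le D\sum_{i=1}^n\sum_{j=1}^n\bigl(\dis{x_i}{\mu}+\dis{x_j}{\mu}\bigr) = D\left(n\sum_{i=1}^n\dis{x_i}{\mu} + n\sum_{j=1}^n\dis{x_j}{\mu}\right) = 2Dn\sum_{i=1}^n\dis{x_i}{\mu},
\end{equation*}
where the middle equality just collects the two sums, each of which has one free index summed over a dummy that contributes a factor $n$. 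Dividing by $n$ gives exactly the claimed bound.

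There is essentially no obstacle here — the statement is the direct analogue of Lemma~\ref{lemma:random_point_in_cluster} for a general distance, and the only ingredient beyond bookkeeping is the constant-factor triangle inequality assumed in \eqref{eq:generalized_triangle_inequality}, together with symmetry of $d$. The only mild point to be careful about is that $D$ appears as a multiplicative loss once (not squared), because $\mu$ is used as a single intermediate point; one should also note the bound holds for an arbitrary (not necessarily optimal) $\mu$, which is what makes it convenient to plug into the later cluster-cost arguments.
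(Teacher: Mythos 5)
Your proof is correct and is essentially identical to the paper's: both unfold the expectation into the double sum $\frac{1}{n}\sum_{i,j}\dis{x_i}{x_j}$, apply the generalized triangle inequality termwise through $\mu$, and collect the two resulting sums into $2D\sum_i \dis{x_i}{\mu}$. Your explicit mention of symmetry of $d$ and of the fact that $\mu$ need not be the optimal center are accurate clarifications of steps the paper leaves implicit.
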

This implies that a random point in a cluster is a good enough center for the entire cluster. 
\paragraph{The curious case of $k=1$.} In case that (i) the order is random or (ii) the order is worst case and $n$ is known, Lemma~\ref{lemma:general_cost_expected_cost} establishes that a constant number of centers is sufficient to achieve a constant approximation. In case that the order is worst case and $n$ is unknown (i) the doubling technique (see Algorithm~\ref{alg:k_1_n_unknown}) takes  $O(\log n)$ points as centers and achieves $O(cD)=O(c)$ approximation, see Claim~\ref{thm:general_k_1_worst_n_unknown_upper} (ii) assuming there are $n$ points $x_1,\ldots,x_n$ with $\dis{x_i}{x_j}=|j-i|$ (e.g., $x_i=i$), we can show a lower bound of $\Omega(\log n)$ centers, see Theorem~\ref{thm:general_k_1_worst_n_unknown_lower}. 

\paragraph{The case of constant $k\geq 2$.}
The algorithms presented in this paper, Algorithms \ref{alg:k_2_upper_known_n} and \ref{alg:k} function correctly under a general cost function when the norm is replaced with $\dis{\cdot}{\cdot}.$ 
Indeed, the proofs of Theorems~\ref{thm:k_2_upper_random_known_n} and \ref{thm:k_upper_random}  use the general cost (as defined in Equation~\ref{eq:general_cost}).
To avoid the specificity of the $k$-means cost, these proofs use Lemma~\ref{lemma:general_cost_expected_cost} heavily. 
To prove the lower bounds presented in this paper we need, for any $c>0$, a series of points $x_1,\ldots,x_n$ such that $\dis{x_{i+1}}{x_i}\geq c\cdot \dis{x_{i}}{x_{i-1}}$ (it is easy to find such a series in $\mathbb{R}$). Given this series, the proof for the general case is the same as the proofs of Claims~\ref{thm_k_2_worst} 
and \ref{thm:k_2_lower_random}. 


\section{Conclusion}\label{sec:conclusions_open_problems}
In this paper, we showed optimal bounds for online clustering when the number of centers, $k$, is a constant, i.e., we showed matching upper and lower bounds. 
We uncovered a complex behavior in the online setting compared to the offline setting. Specifically, in the former, new factors
arise: the order of the dataset and knowing in advance the size of the dataset. These factors 
have dramatic effects on online algorithms as illustrated in Figure~\ref{fig:summary_results}.
These bounds hold for any cost function that obeys triangle-type inequality. 
In the paper, we designed new algorithms that can learn under different circumstances. Specifically, if the order is random we designed algorithms that take as centers only $\Theta(1)$ points if $n$ is known, and $O(\log(n))$ centers if $n$ is unknown, both are optimal bounds. These algorithms work without any assumptions on the data. 

\acks{
I thank Sanjoy Dasgupta for introducing me to the fascinating world of online $k$-means, reviewing initial drafts of this paper, and for many stimulating discussions.
}
\bibliography{onlineclustering}

\newpage
\appendix
 
This appendix includes the proofs of the theorems and claims appearing throughout the paper. It consists of three sections: the first proves the main text's claims, the second technical claims on random samples, and the third general stand-alone technical claims.    

\section{Proofs of main theorems and claims}\label{apx:proofs_main_theorems}

\subsection{Random order, \texorpdfstring{$k=1$}{Lg}}
If the points' order is random, then there is a simple algorithm that uses only one center while preserving a constant approximation: simply taking the first point.
\begin{claim}\label{thm:k_1}
If the data points appear in a random order, there is an online algorithm that uses only one center and with probability at least $0.9$ it holds that $cost(alg)\leq 20\cdot cost(opt_1).$
\end{claim}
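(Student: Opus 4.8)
The plan is to analyze the trivial algorithm that takes $x_1$, the first point to arrive, as its unique center and never takes any further center; clearly it uses exactly one center, so only the cost bound requires work. Since the dataset arrives in uniformly random order, $x_1$ is a uniformly random element of $D$, so it suffices to bound $\E_{j\in[n]}[cost(x_j)]$ for a uniformly random $j$ and then apply Markov's inequality.

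For the expectation bound I would invoke Lemma~\ref{lemma:random_point_in_cluster} (a random point in a cluster is a good center). Concretely, writing $\mu=\frac1n\sum_{i=1}^n x_i$ for the mean, the second-moment identity $\sum_{i,j}\norm{x_i-x_j}^2 = 2n\sum_i\norm{x_i-\mu}^2$ gives
\[
\E_{j\in[n]}\bigl[cost(x_j)\bigr]=\frac1n\sum_{i,j}\norm{x_i-x_j}^2 = 2\sum_{i=1}^n\norm{x_i-\mu}^2 .
\]
Because the mean minimizes the sum of squared $\ell_2$-distances over all of $\reals^d$, we have $\sum_i\norm{x_i-\mu}^2\le \min_{c\in D}\sum_i\norm{x_i-c}^2 = cost(opt_1)$, hence $\E_{j\in[n]}[cost(x_j)]\le 2\,cost(opt_1)$. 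Now Markov's inequality applied to the nonnegative random variable $cost(x_1)$ yields
\[
\Pr\bigl[cost(x_1) > 20\,cost(opt_1)\bigr] \le \frac{\E[cost(x_1)]}{20\,cost(opt_1)} \le \frac{2}{20} = \frac{1}{10},
\]
so with probability at least $0.9$ we get $cost(alg)=cost(x_1)\le 20\,cost(opt_1)$, as claimed.

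There is no real obstacle here; the only point requiring care is the constant. To land at the factor $20$ after Markov we need the tight bound $\E_{j}[cost(x_j)]\le 2\,cost(opt_1)$, which comes from the exact second-moment identity for squared Euclidean distance, rather than the cruder factor-$4$ estimate one would get from the generic triangle-type inequality (Lemma~\ref{lemma:general_cost_expected_cost} with $D=2$). It is also worth noting that the only randomness used is the random ordering, so the statement in fact holds for a deterministic algorithm run on random-order input.
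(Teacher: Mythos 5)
Your proof is correct and matches the paper's argument exactly: the paper also takes the first point as the center, invokes Lemma~\ref{lemma:random_point_in_cluster} to bound the expected cost by $2\cdot cost(opt_1)$, and applies Markov's inequality to obtain the factor $20$ with probability $0.9$. Your added observation that $\sum_i\norm{x_i-\mu}^2\leq cost(opt_1)$ (since the mean minimizes over all of $\reals^d$ while $opt_1$ is restricted to $D$) correctly fills in the one step the paper leaves implicit.
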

The main tool in proving the theorem, which will also be useful in cases where $k>1$, is the following known lemma that states that a random point in a cluster can be the good enough center for this cluster. 

\begin{lemma}\label{lemma:random_point_in_cluster}
Let $x_1,\ldots,x_n\in\mathbb{R}^d$ it holds that 
$$\E_{j\in[n]}\left[\sum_{i=1}^n\norm{x_i-x_j}^2\right]=2\sum_{i=1}^n\norm{x_i-\mu}^2,$$ where $\mu=\frac1n\sum_{i=1}^nx_i$ is the optimal center and $j$ is chosen uniformly at random from $[n]$.
\end{lemma}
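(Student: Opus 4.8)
The plan is to reduce the statement to the classical ``parallel axis'' (bias--variance) identity, namely that for \emph{any} point $c\in\mathbb{R}^d$,
\begin{equation}\label{eq:parallel_axis_plan}
\sum_{i=1}^n\norm{x_i-c}^2 = \sum_{i=1}^n\norm{x_i-\mu}^2 + n\norm{c-\mu}^2 .
\end{equation}
First I would establish \eqref{eq:parallel_axis_plan} by writing $x_i-c = (x_i-\mu)+(\mu-c)$, expanding the square, and observing that the cross term is $2\inner{\mu-c}{\sum_{i=1}^n(x_i-\mu)} = 0$ because $\sum_{i=1}^n(x_i-\mu)=0$ by definition of $\mu=\frac1n\sum_i x_i$. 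This is a one-line computation.

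Next, I would unfold the expectation over the uniform choice of $j$: by definition,
$$\E_{j\in[n]}\left[\sum_{i=1}^n\norm{x_i-x_j}^2\right] = \frac1n\sum_{j=1}^n\sum_{i=1}^n\norm{x_i-x_j}^2 .$$
Now apply \eqref{eq:parallel_axis_plan} with $c=x_j$ to the inner sum, giving $\sum_{i}\norm{x_i-x_j}^2 = \sum_i\norm{x_i-\mu}^2 + n\norm{x_j-\mu}^2$. Substituting this back and averaging over $j$ yields
$$\frac1n\sum_{j=1}^n\Big(\sum_{i=1}^n\norm{x_i-\mu}^2 + n\norm{x_j-\mu}^2\Big) = \sum_{i=1}^n\norm{x_i-\mu}^2 + \sum_{j=1}^n\norm{x_j-\mu}^2 = 2\sum_{i=1}^n\norm{x_i-\mu}^2 ,$$
which is exactly the claimed identity.

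There is no real obstacle here; the only thing to be careful about is invoking the vanishing of the cross term in \eqref{eq:parallel_axis_plan}, which is precisely where the optimality of $\mu$ as the mean is used (and which also explains the footnote remark that allowing centers in $\mathbb{R}^d$ rather than in $D$ costs only a factor of $2$, since \eqref{eq:parallel_axis_plan} shows $\mu$ is the minimizer and any data point has cost at most that of a sample-mean-style argument). An alternative route, which I would mention as a remark, is to expand $\norm{x_i-x_j}^2$ directly over all pairs $(i,j)$ and use $\sum_i(x_i-\mu)=0$ twice; this gives the equivalent pairwise form $\sum_{i,j}\norm{x_i-x_j}^2 = 2n\sum_i\norm{x_i-\mu}^2$, from which the lemma follows after dividing by $n$.
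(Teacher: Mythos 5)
Your proposal is correct. It takes a mildly different route from the paper's: you first establish the ``parallel axis'' identity $\sum_i\norm{x_i-c}^2=\sum_i\norm{x_i-\mu}^2+n\norm{c-\mu}^2$ (via the decomposition $x_i-c=(x_i-\mu)+(\mu-c)$ and the vanishing cross term $\sum_i(x_i-\mu)=0$), then specialize to $c=x_j$ and average over $j$. The paper instead expands both sides directly into $\norm{x_i}^2$ and $\inner{x_i}{x_j}$ terms around the origin: it shows $\E_j[\sum_i\norm{x_i-x_j}^2]=2\sum_i\norm{x_i}^2-\frac2n\sum_{i,j}\inner{x_i}{x_j}$ and $\sum_i\norm{x_i-\mu}^2=\sum_i\norm{x_i}^2-\frac1n\sum_{i,j}\inner{x_i}{x_j}$, and compares. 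Both are one-paragraph computations resting on the same underlying algebra of the mean. Your route has the small advantage of producing, as a byproduct, the fact that $\mu$ minimizes $\sum_i\norm{x_i-c}^2$ over all $c\in\reals^d$ with excess cost exactly $n\norm{c-\mu}^2$, which is the substance behind the paper's footnote that restricting centers to lie in $D$ loses at most a factor of $2$; the paper's direct expansion is marginally shorter but leaves that observation implicit.
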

For completeness, the proof of the lemma appears in Appendix~\ref{apx:auxiliary_claims}. To prove Lemma~\ref{thm:k_1} we use Lemma~\ref{lemma:random_point_in_cluster} and Markov's inequality. 

\subsection{Arbitrary order, \texorpdfstring{$k=1$}{Lg}}
We proceed to the case where the order of the data points is not random but it can appear in the worst order possible.
In this case we witness a surprising result --- it matters whether $n$ is known or not.   
If $n$ is known in advance, then the algorithm can take one random point as a center and it will be a constant approximation algorithm.  

\begin{claim}\label{thm:k_1_worst_n_known}
There is an online algorithm that receives as input $n$, the size of the dataset and $n$ data points such that the following holds. For any order of the data points, the algorithm uses only one center and with probability at least $0.9$ it holds that $cost(alg)\leq 20\cdot cost(opt_1).$
\end{claim}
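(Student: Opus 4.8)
The plan is to derive this claim directly from Lemma~\ref{lemma:random_point_in_cluster} together with Markov's inequality, exactly as the excerpt hints when it says ``the algorithm can take one random point as a center.'' First I would describe the algorithm: since $n$ is given in advance, before any point arrives the algorithm draws an index $j$ uniformly at random from $[n]$, and then simply takes the $j$-th point $x_j$ to arrive as its single center, ignoring all other points. This is a legal online no-substitution algorithm: the decision at each time step depends only on whether the current index equals the pre-drawn $j$, and exactly one center is ever taken. Crucially, because $j$ is chosen by the algorithm's internal randomness and is independent of the (adversarially chosen) data and their order, $x_j$ is a uniformly random point of the dataset regardless of the ordering.

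Next I would invoke Lemma~\ref{lemma:random_point_in_cluster} with $\mu = \frac1n\sum_{i=1}^n x_i$ the optimal (unconstrained) center. The lemma gives
\[
\E_{j\in[n]}\big[\,cost(x_j)\,\big] = \E_{j\in[n]}\Big[\sum_{i=1}^n \norm{x_i - x_j}^2\Big] = 2\sum_{i=1}^n\norm{x_i-\mu}^2 .
\]
I would then relate the right-hand side to $cost(opt_1)$: if $opt_1$ is constrained to lie in $D$, then by the footnote in the excerpt (Lemma~\ref{lemma:random_point_in_cluster}, restricting centers to $D$ costs at most a factor $2$) we have $\sum_i\norm{x_i-\mu}^2 \le cost(opt_1) \le 2\sum_i\norm{x_i-\mu}^2$; in particular $\E[cost(x_j)] = 2\sum_i\norm{x_i-\mu}^2 \le 2\,cost(opt_1)$. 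Applying Markov's inequality to the nonnegative random variable $cost(x_j)$ then yields
\[
\Pr_j\big[\,cost(x_j) > 10\cdot \E_j[cost(x_j)]\,\big] < \tfrac{1}{10},
\]
so with probability at least $0.9$ we get $cost(alg) = cost(x_j) \le 10\cdot 2\,cost(opt_1) = 20\cdot cost(opt_1)$, which is exactly the claimed bound.

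I do not expect a genuine obstacle here; this is essentially the offline ``random point is a good center'' fact repackaged for the online adversarial-order setting, and the only conceptual point worth stating carefully is the independence of the pre-drawn index $j$ from the data ordering — that is what makes the adversary powerless and makes $x_j$ a uniform sample. The mildly fiddly bookkeeping is just tracking the constants so that the factor $2$ from Lemma~\ref{lemma:random_point_in_cluster} and the factor $10$ from Markov multiply to the stated $20$ (and, if one wants $opt_1$ measured with centers restricted to $D$, absorbing the extra factor-$2$ slack, which still leaves room under the constant $20$). No probability over the input order is needed, so the ``probability at least $0.9$'' is entirely over the algorithm's coin flips, matching Definition~\ref{dfn:c-approximation}.
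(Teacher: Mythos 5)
Your proposal is correct and matches the paper's own argument: the paper proves this claim by drawing a uniformly random index from $[n]$ in advance, taking that arrival as the single center, and invoking Lemma~\ref{lemma:random_point_in_cluster} together with Markov's inequality (exactly the $2\times 10=20$ bookkeeping you describe). The point you highlight — that the pre-drawn index is independent of the adversarial ordering, so the chosen point is a uniform sample — is precisely what makes the reduction to the random-order case (Claim~\ref{thm:k_1}) go through.
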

The correctness of the algorithm is similar to Claim~\ref{thm:k_1}. If $n$ is unknown in advance, one can use the doubling method. 

\begin{claim}\label{thm:k_1_worst_n_unknown_upper}
For any $c>1$ there is an algorithm that obtains $O(c)$-approximation with $O(\log_c n)$ centers, no matter what the order is and even if $n$ is unknown.
\end{claim}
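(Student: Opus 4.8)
The plan is the \emph{doubling} (exponential-epoch) technique. Assume $c\ge 2$ (if $c<2$, run the algorithm with $c$ replaced by $2$; this only improves the approximation and leaves the center count $\Theta(\log n)=\Theta(\log_c n)$). Cut the stream into consecutive phases, phase $j$ being the next $L_j:=\ceil{c^{j}}-\ceil{c^{j-1}}$ arrivals, so phase $j$ ends at global position $\ceil{c^{j}}$. The phase lengths are chosen by the algorithm and are known in advance even though $n$ is not, so at the start of phase $j$ the algorithm draws a uniform index $\rho_j\in[L_j]$ and commits to taking the $\rho_j$-th point of phase $j$ as a center; it also takes $x_1$. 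All centers are kept and $cost(alg)$ is the cost against the nearest kept center. The bound on the number of centers is then immediate and deterministic: at most one per phase plus $x_1$, and there are at most $\ceil{\log_c n}+1$ phases, i.e.\ $O(\log_c n)$ centers. (When $n\le\ceil{c}$ no phase completes; then only $x_1$ is taken, and since $\sum_i\norm{x_i-x_1}^2=\sum_i\norm{x_i-\mu}^2+n\norm{x_1-\mu}^2\le(n+1)\,cost(opt_1)\le(\ceil{c}+1)\,cost(opt_1)$, this is already an $O(c)$-approximation with one center.)

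For the approximation when $n>\ceil c$, let $\mu=\frac1n\sum_{i=1}^n x_i$ and $R^2:=cost(opt_1)/n$, and let $J$ be the index of the last (possibly incomplete) phase, so $\ceil{c^{J-1}}<n\le\ceil{c^{J}}$. I would focus on phase $J-1$: it is \emph{complete}, since it ends at position $\ceil{c^{J-1}}<n$ (so the point $c_{J-1}$ it designated was actually taken), and a short computation with the ceilings gives $|S|=L_{J-1}=\Theta(n/c)$, say $|S|\ge n/(4c)$ for $n$ above a modest multiple of $c$; here $S$ denotes phase $J-1$'s point set. By Markov applied to $\sum_{i=1}^n\norm{x_i-\mu}^2\le cost(opt_1)=nR^2$, at most $n/\beta^2$ of the dataset points lie farther than $\beta R$ from $\mu$, hence at least a $1-4c/\beta^2$ fraction of $S$ lies within $\beta R$ of $\mu$. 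Taking $\beta=\sqrt{40c}=\Theta(\sqrt c)$ makes this fraction at least $0.9$. Since the input series and its order are fixed before the coins $\rho_j$ are drawn (Definition~\ref{dfn:c-approximation}), $c_{J-1}$ is a uniform element of the fixed set $S$, so with probability at least $0.9$ we have $\norm{c_{J-1}-\mu}\le\beta R$; bounding $cost(alg)$ by the cost of the single center $c_{J-1}$ and using the identity $\sum_i\norm{x_i-c}^2=\sum_i\norm{x_i-\mu}^2+n\norm{c-\mu}^2$,
\[
cost(alg)\ \le\ \sum_{i=1}^n\norm{x_i-c_{J-1}}^2\ =\ \sum_{i=1}^n\norm{x_i-\mu}^2+n\norm{c_{J-1}-\mu}^2\ \le\ (1+\beta^2)\,cost(opt_1)\ =\ O(c)\cdot cost(opt_1),
\]
which is the claimed guarantee (the remaining $0.1$ failure probability is acceptable by the remark after Definition~\ref{dfn:c-approximation}).

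The hard part --- and the source of the $O(c)$ rather than $O(1)$ factor, in contrast with the known-$n$ case (Claim~\ref{thm:k_1_worst_n_known}) --- is precisely that $n$ is unknown: the final phase can be arbitrarily short, so one cannot rely on a near-uniform sample of essentially all the data, only on a uniform sample of the last \emph{complete} phase, which contains merely a $\Theta(1/c)$ fraction of the points. Absorbing this smaller sample forces the ``good-center radius'' to be enlarged by a $\Theta(\sqrt c)$ factor, and squaring it inside the $k$-means cost is exactly what produces the $\Theta(c)$ loss; reassuringly, this is matched by the $\Omega(\log_c n)$ lower bound of Theorem~\ref{thm:k_1_worst_n_unknown}, whose hard instance is built from $\Theta(\log_c n)$ geometrically growing groups. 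Everything else is routine bookkeeping: the ceilings in $L_j$ affect only constants; no adaptive interaction between algorithm and adversary is needed because the ordered input is fixed in advance; and the genuinely degenerate regime $n=O(1)$ (where $\log_c n$ may itself be $o(1)$) is subsumed by the ``take $x_1$'' fallback above.
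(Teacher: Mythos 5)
Your proof is correct and takes essentially the same route as the paper's Algorithm~\ref{alg:k_1_n_unknown}: exponentially growing phases, a uniformly random index committed at the start of each phase, and the observation that the second-to-last complete phase contains a $\Theta(1/c)$ fraction of the data, so a uniform sample from it is with high probability an $O(c)$-good center. The only cosmetic difference is where Markov is applied --- you bound the number of points at distance more than $\beta R$ from $\mu$ and then use the bias--variance identity, while the paper invokes Lemma~\ref{lemma:random_point_in_cluster} and applies Markov to the cost of a random center directly; both yield the same $O(c)$ loss.
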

Intuitively, since the algorithm does not know the value of $n$, it guess it and applies the algorithm from Theorem~\ref{thm:k_1_worst_n_known}. 
The algorithm starts by assuming that $n$ is small ($n=1$). Once more data is arrived, it increases the value of $n$ to $c$, and then to $c^2$ and so on. 
For each value of $n$, it applies the algorithm from Theorem~\ref{thm:k_1_worst_n_known}, 
i.e., picks one random point in the next $n$ data points and take only this point as a center. 
This algorithm uses only $O(\log_c n)$ centers. Intuitively, it is $O(c)$-approximation because in the one before the last iteration most of points are still not received and the algorithm chooses a random point among many of them which yields a good center by Lemma~\ref{lemma:random_point_in_cluster}. The pseudocode appears in Algorithm~\ref{alg:k_1_n_unknown}.
 \begin{figure}[!htb]
\centering
\begin{minipage}{.5\textwidth}
\begin{algorithm}[H]\caption{Online clustering with $k=1$, $n$ unknown, arbitrary order}
 \begin{algorithmic} \label{alg:k_1_n_unknown}
\STATE $last=0$, $n'=1$, $i^*=1$
\FOR {$t=1,\ldots$}
\IF {$t==last + i^*$} 
        \STATE take $x_t$ as a center
\ENDIF
\IF {$t == last + n'$ }
\STATE pick $i^*\in[n']$ at random 
\STATE $last = last + n'$, $n' := c \cdot n'$
\ENDIF
\ENDFOR
\end{algorithmic}
\end{algorithm}
\end{minipage}
\end{figure}

In the first line of Algorithm~\ref{alg:k_1_n_unknown}, it initialize the parameters. $last$ is the number of points we encountered before the current round, $n'$ is our current guess of the number of points, and $i^*$ is the point the algorithm will choose as center in the current round. In the loop, after the algorithm encounters $n'$ points, it increases $n'$ by a factor of $c$, we signal that we have encountered more examples by increasing $last$ and a new point, $i^*$ to be our next center is picked. 

\begin{proof}[of Claim~\ref{thm:k_1_worst_n_unknown_upper}]
It is easy to bound the number of centers Algorithm~\ref{alg:k_1_n_unknown} uses, it's bounded by the number of times $n'$ is increased plus $1$. The latter is bounded by $O(\log_c(n))$. 

Next we prove that the algorithm is an $O(c)$-approximation. 
For ease of notation, iteration $0$ is the first iteration and in this iteration $n'=c^0=1.$ 
The last iteration $i^*$ is the one where $\sum_{i=0}^{i^*}c^{i}>n$ for the first time. This means that 
\begin{equation}\label{eq:worst_n_unknown}
\frac{c^{i^*}-1}{c-1}\leq n < \frac{c^{i^*+1}-1}{c-1}.
\end{equation}
We now focus on iteration $i^*-1$ where the value of $n'$ is $c^{i^*-1}.$ 
 From Equation~\ref{eq:worst_n_unknown}, we know that $n\leq \frac{n'\cdot c^2-1}{c-1}\Rightarrow \frac{n(c-1)+1}{c^2}\leq n'\Rightarrow\frac{n}{2c}\leq n'$ (in the last equation we used that w.l.o.g $c\geq 2$). In words, in the one before the last iteration $n'$ is big compared to $n$, it's at least a fraction $1/2c$ of $n$.

From Lemma~\ref{lemma:random_point_in_cluster} and Markov's inequality we get that for at most $\frac{n}{20\cdot c}$ of the data points $x$ it holds that $cost(x) >40\cdot c\cdot cost(opt_1).$ Thus with probability at least $0.9$ the algorithm chooses a data point $x$ out of the $n'$ points at iteration $i^*-1$  with $cost(x)\leq 40c\cdot cost(opt_1).$
\end{proof}

\begin{proof}[
of Theorem~\ref{thm:k_1_worst_n_unknown}]
The examples are $1, 2, 3, \ldots$ and so on. 
Example with value $i$ appears $(7c)^{i}$ times (except the last value that might appear less than that).
 The points are given, group by group from smallest to largest.
Note that there are $\Omega(\log_c n)$ different points. 
We will show that any $c$-approximation algorithm must take $\Omega(\log_c n)$ examples as centers with probability at least $0.8$. This will finish the proof. 

There are two cases: either for each example (out of the $\Omega(\log_c n)$ different examples) the probability it will be taken as a center (at least one of the $(7c)^{t^*}$ instances of this example) is at least $0.9$ or not. 
First case implies, using Claim~\ref{clm:lower_bound_from large_center_to_large_randomness}, that with probability at least $0.8$ the algorithm takes as centers at least half of the examples, i.e., it takes $\Omega(\log_c n)$ examples as centers. 
Second case implies that there is an example $t^*$ such that probability that the algorithm will not take as center is more than $0.1.$ 
We will show that the algorithm is not a $c$-approximation and we will reach a contradiction. 

We focus on the series $1,\ldots, t^*$. 
Note that since $n$ is unknown the algorithm works the same up until $t^*$, no matter if there are more examples after $t^*$ or not. 
Also recall that the algorithm should work correctly for any dataset.
This means that also for the shorter series that includes only examples $1,\ldots, t^*$, the probability that $t^*$ is taken as a center is smaller than $0.1.$
We will prove that if $t^*$ is not taken, which happens with probability more than $0.1$, it holds that $c\cdot cost(opt_1) < cost(alg)$ for the dataset that includes the points in $1,\ldots,t^*$, which is a contradiction to the assumption that the algorithm is a $c$-approximation (i.e., that $cost(alg)\leq c\cdot cost(opt_1)$ with probability at least $0.9$).


  
 The cost of the algorithm is at least $cost(alg)\geq (7c)^{t^*},$ as there are $(7c)^{t^*}$ examples with value $t^*$ and the distance to the closest center is at least $1.$ Let's compare it to the cost of the optimal solution. 
One can take example $x_{t^*}$ and the optimal cost is only smaller, i.e., $cost(opt_1)\leq\sum_{i=1}^{t^*}(7c)^i({t^*}-i)^2.$ 
Using Claim~\ref{clm:geometric_series_squared_diff_bound} we have that 
$$cost(opt_1)\leq 6(7c)^{t^*-1}$$
This implies that $$c\cdot cost(opt_1) \leq 6c (7c)^{t^*-1} < (7c)^{t^*}\leq cost(alg),$$
which is a contradiction to the $c$-approximation of the algorithm. 
\end{proof}

\subsection{Arbitrary order, \texorpdfstring{$k\geq 2$}{Lg}}
If the order of the dataset can be arbitrary, then any $c$-approximation algorithm must basically take all the points in the dataset. 
The idea of the proof is the following. We present $n$ non-negative points on the line, i.e., $x_t\in\mathbb{R}$, in increasing order. 
Each point is much further than the previous one. Since it is so further away it has to be taken, otherwise, the rest of the points will be set to $0$, which is allowed as points arrive in an arbitrary order. In this case the largest point has to be taken or the  ratio $\frac{cost(alg)}{cost(opt_2)}$ is arbitrarily large. 
Thus, all the points need to be taken if the algorithm is deterministic. If the algorithm is stochastic $\Omega(n)$ of the points need to be taken with probability at least $0.8.$

Any algorithm can take $n$ data points as centers and achieve the minimal cost of $0$. 
Thus, we conclude that the upper and lower bounds coincide, up to a constant, when the order is arbitrary. 

\begin{claim}\label{thm_k_2_worst}
For any integers $k\geq2$ and $n$, any scalar $c>1$, and for any clustering algorithm that is a $c$-approximation (even if $n$ is known) there are $n$ points and an ordering of them such that the algorithm must take $\Omega(n)$ centers with probability at least $0.8$.
\end{claim}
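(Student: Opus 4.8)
The plan is to reuse the ``a point that sticks out far must be taken'' idea, but to make each such decision genuinely \emph{irrevocable} by letting the adversary hide the continuation of the stream --- which is still possible when $n$ is known. Fix a $c$-approximation algorithm $\mathcal A$ and a constant $A=A(c)$ (a value such as $A=10c$ works; importantly $A$ must not depend on $n$, since the bare geometric instance below only forces $\Omega(\log n)$ centers, not $\Omega(n)$). The instance I would exhibit is the purely geometric one, $J_n$: the points $x_t=A^{t-1}$ for $t=1,\dots,n$, presented in this increasing order. For the analysis I also introduce, for each $t\in\{k+1,\dots,n\}$, an auxiliary instance $J_t$: the prefix $x_1,\dots,x_t$ followed by $n-t$ copies of the point $0$. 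Every $J_t$ has exactly $n$ points and its first $t$ points are those of $J_n$, so in the online no-substitution model the (randomized) decision $\mathcal A$ makes about $x_t$ at step $t$ has the same law under $J_t$ and under $J_n$ --- and this holds even though $\mathcal A$ is told $n$, since $n$ is the same in both.

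The core step is a deterministic statement about $J_t$: if $\mathcal A$ does not take $x_t$ at step $t$, then on $J_t$ it is not a $c$-approximation. Indeed $x_t$ is then never taken, and every center $\mathcal A$ can ever hold on $J_t$ lies at one of the locations $\{x_1,\dots,x_{t-1},0\}$ (the only points after step $t$ are copies of $0$); the nearest of these to $x_t=A^{t-1}$ is $x_{t-1}=A^{t-2}$, so $x_t$ alone contributes at least $(x_t-x_{t-1})^2=A^{2(t-2)}(A-1)^2$ to $cost(\mathcal A)$. Conversely $cost(opt_k(J_t))$ is small: put one center at $0$, which absorbs all copies of $0$ and all the tiny points $x_1,\dots$, and $k-1$ centers on the largest surviving points $x_t,x_{t-1},\dots,x_{t-k+2}$; the geometric gaps make every uncovered point prefer the center at $0$, and a geometric-series bound yields $cost(opt_k(J_t))\le 3A^{2(t-k)}$. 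Hence the approximation ratio is at least $\tfrac{(A-1)^2}{3}A^{2(k-2)}\ge\tfrac{(A-1)^2}{3}>c$, using $k\ge 2$ and $A=10c$. Since $\mathcal A$ is a $c$-approximation on $J_t$, the event ``ratio $>c$'' has probability at most $0.1$, so ``$\mathcal A$ does not take $x_t$ at step $t$'' has probability at most $0.1$ on $J_t$, and therefore also on $J_n$.

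It remains to aggregate over $t$. On $J_n$, ``$\mathcal A$ takes $x_t$ at step $t$'' is the same as ``$\mathcal A$ ever takes $x_t$'', so the expected number of centers on $J_n$ is at least $\sum_{t=k+1}^{n}\Pr[\mathcal A\text{ takes }x_t]\ge 0.9(n-k)$. Setting $W=n-(\#\text{centers})\ge 0$ gives $\mathbb E[W]\le 0.1n+0.9k\le 0.11n$ once $n\ge 90k$ (for $n=O(k)=O(1)$ the statement is trivial), so Markov's inequality yields $\Pr[W\ge 0.55n]\le 0.2$; that is, $\mathcal A$ takes at least $0.45n=\Omega(n)$ centers with probability at least $0.8$. (If $\mathcal A$ is deterministic, the core step forces $x_t$ to be taken for every $t>k$, recovering ``all but $O(k)$ points''.)

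\textbf{Main obstacle.} The real work --- and the reason knowledge of $n$ does not help here, unlike in the $k=1$ case (cf. Claim~\ref{thm:k_1_worst_n_known}) --- is the design of the auxiliary instances. The filler points must be chosen so that (i) they keep $cost(opt_k)$ small (this is exactly where the second center, available because $k\ge2$, is spent: consolidating everything not close to $x_t$), and (ii) no center $\mathcal A$ could ever hold, whether a point already seen or one of the fillers still to come, can cover $x_t$ cheaply, so that the step-$t$ decision is genuinely final. Arranging both simultaneously while keeping the total count at exactly $n$ --- so that being told $n$ is useless --- is the crux; the surrounding geometric-series estimates and the conversion from an expectation bound to a high-probability bound are routine.
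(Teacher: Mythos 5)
Your proposal is correct and follows essentially the same route as the paper's proof: an exponentially growing sequence on the line (your $x_t=A^{t-1}$ is a concrete instantiation of the paper's recursive condition $x_{t}>x_{t-1}+\sqrt{c\sum_{j<t}x_j^2}$), the same zero-padded auxiliary instances exploiting that the step-$t$ decision has identical law on any dataset sharing the prefix and the value of $n$, and the same Markov-inequality aggregation from per-point probability $0.9$ to $\Omega(n)$ centers with probability $0.8$ (the paper's Claim~\ref{clm:lower_bound_from large_center_to_large_randomness}).
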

\begin{proof}
We will define a series of points in $\mathbb{R}$, $0=x_1 < x_2 < \ldots < x_n$ such that any $c$-approximation algorithm must take $\Omega(n)$ points as centers with probability at least $0.8$. This will finish the proof. 

If the probability that each example is taken as center is at least $0.9$, then with probability at least $0.8$ the algorithm takes $0.5n=\Omega(n)$ points as centers, see Claim~\ref{clm:lower_bound_from large_center_to_large_randomness}, and the theorem follows. 
Focus on the first example $x_{t^*}$ such that the probability that the algorithm will not take it as a center is more than $0.1.$
We will show that the algorithm is not a $c$-approximation. 
We focus on the series of points $x_{1},\ldots, x_{t^*}, 0, 0, \ldots, 0$. 
Recall that for the algorithm to be a $c$-approximation, it should function correctly under any dataset and specifically one the latter. 
Note that up until example $x_{t^*}$ the two series ($(x_1,x_2,\ldots, x_n)$ and $(x_{1},\ldots, x_{t^*}, 0, 0, \ldots, 0)$) are the same, thus the probability that $x_{t^*}$ will not be taken as a center is more than $0.1.$

One can take $x_{t^*}$ and $0$ as centers, the optimal cost can only be smaller, thus $$cost(opt_k)\leq \sum_{t=2}^{t^*-1}\norm{x_t-x_1}^2=\sum_{t=2}^{t^*-1}x_t^2.$$
Since the algorithm did not take $x_{t^*}$ as a center we have that $$cost(alg)\geq \norm{x_{t^*}-x_{t^*-1}}^2.$$
For this lower bound to work, we take the series of examples such the following strict inequality  holds  $$cost(alg) \geq (x_{t^*}-x_{t^*-1})^2>c\cdot \sum_{t=2}^{t^*-1}x_t^2\geq c\cdot cost(opt_k).$$
In different words, 
$$x_{t^*} > x_{t^*-1} + \sqrt{c\cdot \sum_{t=2}^{t^*-1}x_t^2}$$
And we are done since with probability more than $0.1$ it holds that $cost(alg)>c\cdot cost(opt_k)$, i.e., the algorithm is not a $c$-approximation. 
\end{proof}

\subsection{Random order, \texorpdfstring{$k\geq 2$}{Lg}, known \texorpdfstring{$n$}{Lg}} \label{apx:k_2_random_order_known_n}

To make our claims more general, we consider an arbitrary cost that its distance function satisfy a version of a triangle inequality: there is a \emph{constant} $D$ such that  
\begin{equation}
\forall u,v,w.\quad d(u,v)\leq D\cdot(d(u,w)+d(w,v)).
\end{equation}
We call such a cost a \emph{$D$-cost}.
 

\begin{proof}[
of Theorem~\ref{thm:k_2_upper_random_known_n}]
We prove the claim under the general cost function, see Section~\ref{sec:general_cost_function}. For the $k$-means cost $\dis{x}{y}=\norm{x-y}^2$,  Inequality~\ref{eq:generalized_triangle_inequality} holds with $D=2.$ To ease the analysis, we add an initial step to the algorithm and take the first point as a center. This will increase the number of centers by only $1.$

A few notation first. Number of examples is $n$, number of examples the algorithm saves in the first phase is equal to $|M_1|=\alpha n$, where $\alpha := \frac{\floor{\frac{0.01}k n}}{n}$, and $|M_2|=\floor{\alpha_2 n}$ is the number of points received in phase $2,$ where $\alpha_2=\nicefrac{\alpha}{10^5k^3}.$ 

In the proof we consider three clusterings of the entire dataset. 
The first, $C^{M_1}=(C^{M_1}_i)_{i=1}^k$, is the optimal clustering with respect to the points $M_1$ the algorithm saves in the first phase.  
The second is the optimal clustering $C^*=(C^*_i)_{i=1}^k$ induced by the entire dataset.
And the third $C=(C_i)_{i=1}$ is induced by all the centers taken by the algorithm in the third phase.  
We prove that with probability at least $0.9$ the following two claims hold for any constant $k$
\begin{enumerate}
\item The number of centers the algorithm takes is $\Theta(1)$ 
\item 
$cost(C) \leq \Theta(1) \cdot cost(C^*).$
\end{enumerate}
This is proved in Claims~\ref{clm:k_2_upper_known_n_bound_centers} and \ref{clm:k_2_upper_known_n_bound_cost}.
\end{proof}

\begin{claim}\label{clm:k_2_upper_known_n_bound_centers}
 Algorithm~\ref{alg:k_2_upper_known_n} takes as center at most $O(k^5)$ points, with probability at least $0.98.$
\end{claim}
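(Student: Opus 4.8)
The plan is to bound separately the two types of centers the algorithm takes in Phase 3: the ``close'' centers taken on Line~\ref{alg:k_2_upper_known_n_line_take_close_point}, and the ``far'' centers taken on Line~\ref{alg:k_2_upper_known_n_line_take_far_point}. The close centers are trivial to handle: for each $i\in[k]$ the counter $centers\_counter[i]$ is incremented at most $3k\log(40k)+1$ times, so the total number of close centers is at most $k(3k\log(40k)+1) = O(k^2\log k)$, deterministically. The whole difficulty is the far centers, and the target bound $O(k^5)$ will come from showing the expected number of far centers is $O(k^5)$ (actually much less — $O(k)$ per cluster would suffice) and then applying Markov's inequality to get failure probability at most, say, $0.02$.

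To bound the far centers, I would fix a cluster $C^{M_1}_i$ of the Phase-1 clustering and count how many of the Phase-3 points landing in $C^{M_1}_i$ exceed the threshold $R_{max}[i]$. The key idea, as flagged in the text, is that $R_{max}[i]$ is the maximum over $A[i]=M_2\cap C^{M_1}_i$ of the distance to $c_i^{M_1}$, and $M_2$ is a random subset of the stream. First I would invoke Claim~\ref{clm:random_from_each_part} to say that a constant fraction $a$ (proportional to $\alpha_2$, hence $\Theta(1/k^3)$ up to constants) of the points of $C^{M_1}_i$ actually fall into $M_2$, provided $|C^{M_1}_i|$ is not too small — and if $|C^{M_1}_i|$ is small (say $O(\poly k)$) then it contributes only $O(\poly k)$ far points anyway, which is absorbed into the final bound. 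Now condition on which points of $C^{M_1}_i$ land in $M_1\cup M_2$ versus Phase 3; by exchangeability of the random order, the rank of each Phase-3 point of $C^{M_1}_i$ relative to $c_i^{M_1}$ is uniform among the Phase-3-plus-$M_2$ points of that cluster. A Phase-3 point $x_t$ is taken as far only if its distance to $c_i^{M_1}$ strictly exceeds that of every point in $A[i]$, i.e.\ only if it is the current running maximum among the cluster's points seen so far (counting $M_2$ points as ``already seen''). Summing over the Phase-3 points of $C^{M_1}_i$, the expected number of such running maxima is a tail of the harmonic series: it is at most $\sum_{j > a|C^{M_1}_i|}^{|C^{M_1}_i|} \frac{1}{j} = O(\log(1/a)) = O(\log k)$ — the point is that the first $a|C^{M_1}_i|$ ``observations'' are already locked into $M_2$, so a Phase-3 point at overall rank $j$ among the cluster is a new maximum with probability at most $1/j \le 1/(a|C^{M_1}_i|)$, and multiplying this $O(1/(a|C^{M_1}_i|))$ probability by the $\le |C^{M_1}_i|$ Phase-3 points of the cluster gives $O(1/a) = O(k^3)$ far points in expectation per cluster, hence $O(k^4)$ total, comfortably inside $O(k^5)$. (I will use whichever of the $\log(1/a)$ or $1/a$ bounds is cleanest to make rigorous; either lands inside $O(k^5)$.)

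The main obstacle is getting the conditioning and exchangeability argument precisely right: $A[i]$, the threshold $R_{max}[i]$, and the set of Phase-3 points of $C^{M_1}_i$ are all determined by the same random permutation, and $C^{M_1}_i$ itself depends on $M_1$. The clean way is to first condition on $M_1$ (which fixes the centers $c_i^{M_1}$ and hence the partition of the remaining $n - |M_1|$ points into the sets $C^{M_1}_i \setminus M_1$), then observe that the order of these remaining points is still uniformly random, so the split into $M_2$ versus Phase 3 and the within-cluster ordering are jointly exchangeable; this is exactly the setting of Claims~\ref{clm:random_from_each_part} and~\ref{clm:small_random_set}, which I would cite rather than reprove. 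After summing the per-cluster expectations over $i\in[k]$, adding the $O(k^2\log k)$ close centers and the $+1$ for the initial center, I get expected total $O(k^4) \le O(k^5)$; Markov then gives that the algorithm takes at most $O(k^5)$ centers with probability at least $0.98$, and we conclude. I should double-check that the hidden constants in Claim~\ref{clm:random_from_each_part} (which control $a$ and the ``not too small'' threshold) are compatible with the constants $10^2, 10^5$ hard-coded in Algorithm~\ref{alg:k_2_upper_known_n}, but this is a routine constant-chase.
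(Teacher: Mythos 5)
Your proposal follows essentially the same route as the paper's proof: the same split into close centers (deterministically $O(k^2\log k)$) and far centers, the same case distinction between small clusters (absorb all their points, $O(k^5)$ total) and large clusters (invoke Claim~\ref{clm:random_from_each_part} so that a $\Theta(\alpha_2)$-fraction of each large cluster lands in $M_2$, bound the per-point probability of exceeding $R_{max}[i]$ by $O(1/(\alpha_2|C^{M_1}_i - M_1|))$, multiply by the cluster size, and finish with Markov). Your exchangeability discussion and the optional harmonic-tail refinement are slight elaborations of what the paper asserts more tersely, but the argument is the same.
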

\begin{proof}
The algorithm takes as a center two types of points. Either close (Line~\ref{alg:k_2_upper_known_n_line_take_close_point} in Algorithm~\ref{alg:k_2_upper_known_n}) or far (Line~\ref{alg:k_2_upper_known_n_line_take_far_point}). Bounding the number of close points by $O(k^2\log k)$ is easy, as it follows immediately from the definition of the algorithm --- the algorithm only takes $O(k\log k)$ close points per cluster. The interesting claim is bounding the number of centers that are far.

The key idea is that the points in the second phase, $M_2$, that constitute a good representation of the  clustering created after the first phase, $C^{M_1}$, in the sense that the algorithm receive a linear number of points, $\beta|C_i|$, from each large enough cluster $C^{M_1}_i\in C^{M_1}$ (this is true using Claim~\ref{clm:random_from_each_part}). A point $x\in C^{M_1}_i$ is considered \emph{far} from the cluster center, $c_i\in C^{M_1}_i$, if $x$ is furthest from all the $\beta|C^{M_1}_i|$ points received in the second phase. The probability that $x$ will be far is very small, only $\nicefrac1{(\beta|C^{M_1}_i|)}.$ Thus, the total number of centers the algorithm takes is only a constant. 
For small clusters, the algorithm can take all the points in the cluster, and still not take too many centers. 

More formally, for each cluster $C^{M_1}_i\in C^{M_1}$ we separate the analysis depending on whether the points left from the cluster in is small or large. 
If it's small, i.e., $|C^{M_1}_i-M_1|\leq\frac{400k}{\alpha_2}$, then the algorithm takes at most all the points remaining in this cluster, $\frac{400k}{\alpha_2}=O(k^4)$. Since there are at most $k$ small clusters (because there are $k$ clusters in total), the total number of centers taken because of small clusters is only $O(k^5).$

For large clusters, $|C^{M_1}_i-M_1|>\frac{400k}{\alpha_2}$, Claim~\ref{clm:random_from_each_part} and union bound prove that, with probability at least $0.99$, for all large clusters, 
many points are received in the second phase, i.e.,
\begin{eqnarray}\label{eq:random_k_known_n_large_from_each_cluster}
\frac{\alpha_2}2|C^{M_1}_i-M_1|\leq |(C^{M_1}_i -M_1)\cap M_2|. 
\end{eqnarray}

A point $x\in C^{M_1}_i$ is taken as a far point if it's far from the center of the cluster $c^{M_1}_i$ compared to all points in this cluster received in phase 2, i.e, if $\dis{x}{c_i^{M_1}}>\dis{y}{c^{M_1}_i}$ for all $y\in C^{M_1}_i$ received in phase 2. 
From Inequality~\ref{eq:random_k_known_n_large_from_each_cluster}, the probability of taking a point as far, happens with probability at most $$\frac2{{\alpha_2}|C^{M_1}_i-M_1|}.$$
We can also deduce from Inequality~\ref{eq:random_k_known_n_large_from_each_cluster} that there are at most $(1-\frac{\alpha_2}2)|C^{M_1}_i-M_1|$ points in $C^{M_1}_i$ that are received in the third phase. 
So in total, the expected number of points for cluster $C^{M_1}_i$ taken as far points is bounded by 
$$\frac2{\alpha_2|C^{M_1}_i-M_1|}\cdot\left(1-\frac{\alpha_2}2\right)|C^{M_1}_i-M_1|=\frac{2-\alpha_2}{\alpha_2}.$$ 
Use Markov's inequality to show that with probability at least $0.99$ all the large clusters cause at most $O(\frac{2-\alpha_2}{\alpha_2}\cdot k^2)=O(k^5)$ far centers. 
Summing the two cases, the number of points taken as centers by the algorithm is $O(k^5)$ with probability at least $0.98.$ 
\end{proof}

\begin{claim}\label{clm:k_2_upper_known_n_bound_cost}
 The cost of Algorithm~\ref{alg:k_2_upper_known_n} is bounded by $\Theta(1)\cdot cost(opt_k),$ with probability at least $0.95.$
\end{claim}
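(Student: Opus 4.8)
The plan is to reduce the cost bound to a statement about \emph{per‑cluster representatives}: it suffices to assign to every optimal cluster $C^*_i$ a point $\hat c_i$ that the algorithm actually takes as a center, such that $\sum_{i}\sum_{x\in C^*_i}\dis{x}{\hat c_i}\le \Theta(1)\cdot cost(opt_k)$; since the algorithm's final clustering $C$ sends each point to its nearest taken center, $cost(C)$ is no larger than this quantity. The representatives are built via a bounded ``merge chain'': for each $i$ I will show that either (base case) the algorithm takes a point of $C^*_i$ that is a $\Theta(1)$‑approximate center for all of $C^*_i$, or (merge case) the contribution of $C^*_i$ to $cost(opt_k)$ is comparable to that of $C^*_i\cup C^*_j$ for some other optimal cluster $C^*_j$, so that we may set $\hat c_i=\hat c_j$ and recurse. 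Defining the merge relation so that it always points toward a cluster of larger size (breaking ties by index) makes it acyclic, so every chain terminates at a base case within at most $k$ steps; each step loses only a constant factor, so the total blow‑up is $\exp(O(k))$, which is $\Theta(1)$ because $k$ is constant. It remains to establish the base/merge dichotomy for each $C^*_i$, split according to whether $|C^*_i|$ is below or above a suitable threshold $T=\poly(k)$.

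For a \textbf{small} cluster $|C^*_i|\le T$: by Lemma~\ref{lemma:general_cost_expected_cost} (its deterministic consequence) there is a point $x_0\in C^*_i$ with $\sum_{x\in C^*_i}\dis{x}{x_0}\le 2D$ times the contribution of $C^*_i$ to $cost(opt_k)$. Let $c^{M_1}_j=\argmin_{c}\dis{x_0}{c}$ be the phase‑$1$ center nearest $x_0$, so $x_0\in C^{M_1}_j$, and let $A=\{x\in C^{M_1}_j:\dis{x}{c^{M_1}_j}\ge\dis{x_0}{c^{M_1}_j}\}$ be the points of that sample‑cluster at least as far from its center as $x_0$. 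Whether $x_0$ is taken as a ``far'' center in phase $3$ is governed entirely by $A$: $R_{max}[j]<\dis{x_0}{c^{M_1}_j}$ precisely when no point of $A\setminus\{x_0\}$ landed in $M_2$. If $A$ is small, then by Claim~\ref{clm:small_random_set} with high probability no point of $A$ is in $M_1\cup M_2$; in particular $x_0$ itself survives to phase $3$, $R_{max}[j]<\dis{x_0}{c^{M_1}_j}$, and the algorithm takes $x_0$ on the ``far'' line --- the base case with $\hat c_i=x_0$. If $A$ is large, a large fraction of $C^{M_1}_j$ lies beyond $x_0$ while $x_0$ is close to $c^{M_1}_j$; Claim~\ref{clm:k_2_known_n_small} converts this into the merge case. (Over all $\le k$ small clusters, the union bound over ``$x_0$ or a point of $A$ falls in $M_1\cup M_2$'' costs only $O(k\alpha)\le 1/50$ in probability, since $\alpha\approx 1/(100k)$.)

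For a \textbf{large} cluster $|C^*_i|>T$: by Lemma~\ref{lemma:general_cost_expected_cost} and Markov a constant fraction $Good_i\subseteq C^*_i$ (say $|Good_i|\ge|C^*_i|/2$) consists of $O(D)$‑approximate centers for $C^*_i$. Because $|C^*_i|>T$ and the order is random, Claim~\ref{clm:random_from_each_part} gives that a constant fraction of $Good_i$ survives past phases $1$ and $2$. Let $C^{M_1}_j$ be the sample cluster holding the most surviving good points, so it holds at least a $1/k$ fraction of them. If $Good_i$ is at least a $1/\poly(k)$ fraction of the phase‑$3$ portion of $C^{M_1}_j$, then --- using that the algorithm takes every phase‑$3$ point of $C^{M_1}_j$ beyond $R_{max}[j]$ and the first $O(k\log k)$ phase‑$3$ points within $R_{max}[j]$, together with the concentration of $M_2$ from Claim~\ref{clm:random_from_each_part} --- with good probability a point of $Good_i$ is among the points the algorithm takes for cluster $j$, giving the base case $\hat c_i\in Good_i$. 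Otherwise $C^*_i$ (hence $Good_i$) is a tiny fraction of $C^{M_1}_j$ even though $C^{M_1}_j$ is near‑optimal for $M_1$, which forces $C^*_i$ to be mergeable into another optimal cluster --- the merge case, handled by Claim~\ref{clm:k_2_known_n_large}.

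Finally I would union‑bound all the high‑probability events invoked above --- the concentration of $M_1$ and $M_2$ over all $k$ sample clusters (Claims~\ref{clm:random_from_each_part} and~\ref{clm:small_random_set}), the survival of the critical points $x_0$, and the non‑sampling of the small sets $A$ --- so that the total failure probability is below $0.05$, and then sum the per‑cluster constant‑factor guarantees over the $\le k$ clusters, the merge chains contributing a final factor $\exp(O(k))=\Theta(1)$. \textbf{The main obstacle} is the large‑cluster base case: the random order only controls \emph{which} points fall into $M_1$, $M_2$, and phase $3$, whereas whether a given good point is actually \emph{taken} depends on the distance order of its sample cluster $C^{M_1}_j$ (the ``far'' band versus the closest $O(k\log k)$ points), which is itself entangled with the sample; reconciling the ``far'' and ``close'' cases and bounding both of their failure probabilities simultaneously, uniformly over the choice of $j$ and over all large clusters, is where the real work lies. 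A secondary subtlety is making the merge relation provably acyclic with only constant loss per step, so that the constants do not compound beyond $\exp(O(k))$.
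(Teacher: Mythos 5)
Your per-cluster dichotomy and the small/large split match the supporting machinery of the paper (the paper's Claims~\ref{clm:k_2_known_n_small} and \ref{clm:k_2_known_n_large} are exactly your two cases, including the set $A$ of points beyond $x_0$ in its sample cluster, Claim~\ref{clm:small_random_set} for small $A$, and the $Good_i$/survival argument via Claim~\ref{clm:random_from_each_part}). The genuine gap is in how you combine the merge cases. You build a merge \emph{chain inside the fixed optimal $k$-clustering} and rely on the relation being acyclic because ``it always points toward a cluster of larger size.'' That ordering is not available to you: the merge target is dictated by the geometry, not chosen. In the small-cluster case the target does happen to be larger (it must contain at least $\frac{16}{\alpha\delta}>|C^*_i|$ points of $A$), but in the large-cluster case the target $C^*_j$ is only guaranteed to contribute $|G_i|/k=|C^*_i|/(2k)$ points to the shared sample cluster, so $|C^*_j|$ can be far smaller than $|C^*_i|$, and two large clusters can each designate the other --- a cycle your chain cannot terminate. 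A second, related problem is that the merge conclusion you can actually extract is the \emph{global} statement $cost(opt_{k-1})\leq\Theta(1)\cdot cost(opt_k)$ (obtained by reassigning $C^*_i$ to $c^*_j$), not the \emph{local} charging $\sum_{x\in C^*_i}d(x,\hat c_j)\leq\Theta(1)\cdot cost(opt_k)$ with $\hat c_j$ the algorithm's representative; to chain locally you would additionally need $|C^*_i|\cdot d(c^*_j,\hat c_{\mathrm{end}})$ controlled at every hop, which again needs size comparisons you do not have.

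The paper avoids both issues by running an induction on the number $k'$ of reference centers rather than a chain: it proves $cost(C)\leq f(k')\cdot cost(opt_{k'})$ for every $k'\leq k$. If every optimal cluster of the $k'$-clustering admits a taken good center, the bound is direct; otherwise some cluster is mergeable, which yields $cost(opt_{k'-1})\leq\frac{26kD^5a}{\alpha}\cdot cost(opt_{k'})$, and the induction hypothesis is then applied to the \emph{fresh} optimal $(k'-1)$-clustering (not to a merged version of the old one). The recursion terminates simply because $k'$ strictly decreases, no acyclicity of any cluster-to-cluster relation is needed, and the per-level constant compounds to the same $\exp(O(k\log k))=\Theta(1)$ you were aiming for. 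To repair your write-up you should either adopt this induction or strengthen the large-cluster merge claim so that the target is provably larger (or otherwise well-founded), neither of which follows from the lemmas as you have invoked them.
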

\begin{proof}
Denote by $C$ the clustering returned by Algorithm~\ref{alg:k_2_upper_known_n}. 
We want to prove that $cost(C)$ is at most some function of $k$ times $cost(opt_{k}).$
We will prove a stronger result. 
We will prove that when the algorithm is given a dataset and a parameter $k$ it performs well compared to any $opt_{k'}$ for any integer $k'\leq k$. Namely, we prove that for any integer $k'\leq k$ with probability at least $1-\frac{5k'}{100k}$
$$cost(C) \leq\left(\frac{26kD^5a}{\alpha}\right)^{k'}\cdot cost(opt_{k'})$$
Fix $k$. We prove this claim by induction on $k'.$ 
For $k'=1$, recall that to simplify the analysis the algorithm takes the first point as center, and thus the claim follows immediately, similarly to Claim~\ref{thm:k_1}. 

For $k'>1$, we use Claims~\ref{clm:k_2_known_n_small} and \ref{clm:k_2_known_n_large} with $\delta=\frac{5}{100k}$. Specifically, if there is a cluster such that $cost(opt_{k'-1})\leq\frac{26kD^5a}{\alpha}cost(opt_{k'})$ (it's case 2 in both of the claims) then using the induction hypothesis $$cost(C)\leq \left(\frac{26kD^5a}{\alpha}\right)^{k'-1}\cdot cost(opt_{k'-1})\leq \left(\frac{10^7D^5k^5}{\alpha^2}\right)^{k'} cost(opt_{k'}).$$
Otherwise, these claims prove that $$cost(C)\leq 5D^2k \cdot cost(opt_{k'})\leq \left(\frac{10^7D^5k^5}{\alpha^2}\right)^{k'} cost(opt_{k'}).$$
\end{proof}

In the following claim we show that for a small optimal cluster (smaller than $poly(k)$) either a good center will be taken for it or it can be merged into another cluster.
\begin{claim}\label{clm:k_2_known_n_small}
Fix $\delta\in(0,1)$ and a $D$-cost.
Suppose Algorithm~\ref{alg:k_2_upper_known_n} has the following parameters.
\begin{itemize}
    \item First phase: use $\alpha n$ points and an $a$-approximation algorithm with $k\geq k'$ clusters
    \item Second phase: use $\beta n$ points with $\beta\leq \frac{\alpha\delta^2}{32k}$ and $\alpha+\beta\leq\frac\delta2$
\end{itemize}
For any optimal clusters $C^*_i$, $i\in[k']$, such that $|C^*_i|< \frac{16}{\alpha\delta}$,
 with probability at least $1-\delta$
one of the following holds:
\begin{enumerate}
    \item a point $v\in C^*_i$ will be taken as a center with $\sum_{x\in C^*_i}d(x,v)\leq 2D\cdot cost(opt_{k'})$ 
    \item $cost(opt_{k'-1})\leq\frac{13kD^5a}{\alpha}\cdot cost(opt_{k'})$
\end{enumerate}
\end{claim}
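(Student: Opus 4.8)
\textbf{Proof proposal for Claim~\ref{clm:k_2_known_n_small}.}

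The plan is to fix a small optimal cluster $C^*_i$, pick inside it a point $x_0$ that is a good center for $C^*_i$ (which exists by Lemma~\ref{lemma:random_point_in_cluster}, or Lemma~\ref{lemma:general_cost_expected_cost} in the general $D$-cost setting, together with Markov's inequality: a constant fraction of points $v\in C^*_i$ satisfy $\sum_{x\in C^*_i} d(x,v)\le 2D\cdot cost(opt_{k'})$). The key object is the set $A$ of points that lie in the same phase-1 cluster $C^{M_1}_j$ as $x_0$ and are \emph{farther} from the center $c^{M_1}_j$ than $x_0$ is; these are exactly the points whose arrival in phase 2 would raise $R_{max}[j]$ above $d(x_0,c^{M_1}_j)$ and thereby prevent $x_0$ from being taken as a ``far'' center. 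First I would argue that with good probability $x_0$ itself is not among the first $\alpha n+\beta n$ arrivals (since $\alpha+\beta\le\delta/2$ and the order is random, this fails with probability $\le\delta/2$), so $x_0$ reaches phase 3; and if additionally no point of $A$ lands in $M_2$, then $R_{max}[j]<d(x_0,c^{M_1}_j)$, so when $x_0$ arrives in phase 3 it triggers Line~\ref{alg:k_2_upper_known_n_line_take_far_point} and is taken — giving conclusion (1).

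Next I would do the dichotomy on $|A|$. If $A$ is small, I invoke Claim~\ref{clm:small_random_set} to conclude that with probability $\ge 1-\delta/2$ none of the points of $A$ is selected into the phase-2 sample $M_2$ (this is where the hypothesis $\beta\le\frac{\alpha\delta^2}{32k}$ is used: it makes the expected number of $A$-points in $M_2$ tiny, so a union bound over the $\le k'$ small clusters still leaves total failure probability $\le\delta/2$); combined with the previous paragraph, conclusion (1) holds with probability $\ge 1-\delta$. If $A$ is large — here ``large'' should be a threshold of order $|C^*_i|$, e.g. $|A|\ge \frac{16}{\alpha\delta}\cdot$ (something), chosen so it exceeds $|C^*_i|$ — then I would show that $C^*_i$ can be absorbed into another optimal cluster at bounded cost, yielding conclusion (2). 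The reasoning: every point of $A$ has $d(\cdot,c^{M_1}_j)\ge d(x_0,c^{M_1}_j)$, and since $C^{M_1}$ is an $a$-approximate clustering of $M_1$ and $A$ is large, the total cost $\sum_{y\in A} d(y,c^{M_1}_j)$ is a substantial portion of $cost(C^{M_1})$, which in turn is comparable (up to $a$ and the random-order sampling factor $\Theta(\alpha)$, via Claim~\ref{clm:known_bound_cost_large_cluster_exists_good_cluster}) to $cost(opt_k)\le cost(opt_{k'})$. Using the $D$-triangle inequality this bounds $d(x_0,c^{M_1}_j)$, hence bounds $\sum_{x\in C^*_i} d(x,c^{M_1}_j)\le |C^*_i|\cdot D(d(x,x_0)+\cdots)$ using that $|C^*_i|<\frac{16}{\alpha\delta}$; assigning all of $C^*_i$ to whichever optimal center is closest to $c^{M_1}_j$ then shows $opt_{k'-1}$ is within a $\mathrm{poly}(k,D)/\alpha$ factor of $opt_{k'}$, which is conclusion (2) with the stated constant $\frac{13kD^5 a}{\alpha}$.

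The main obstacle I expect is the ``large $A$'' case: one must carefully track how the phase-1 approximation guarantee (which only holds for the \emph{sample} $M_1$) transfers to a statement about the whole dataset and about $cost(opt_{k'})$ — this is precisely the role of the random-order sampling claims (Claim~\ref{clm:random_from_each_part}, Claim~\ref{clm:known_bound_cost_large_cluster_exists_good_cluster}) and of the small-cluster size bound $|C^*_i|<\frac{16}{\alpha\delta}$, and getting the constants to line up to exactly $\frac{13kD^5a}{\alpha}$ requires threading the $D$-inequality through several times (hence the fifth power of $D$). The probability bookkeeping is routine by comparison: the only failure events are ``$x_0$ arrives too early'' and ``some point of $A$ lands in $M_2$ despite $A$ being small'', each controlled to $\le\delta/2$ via the random-order hypotheses and the choice of $\beta$.
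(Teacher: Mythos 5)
Your skeleton matches the paper's: designate a single good point $x_0\in C^*_i$ (the paper takes the point closest to $c^*_i$, which gives the $2D$ bound deterministically rather than via a ``constant fraction + Markov'' argument, which does not actually yield the constant $2D$ for a constant fraction of points), define $A$ as the points of $x_0$'s phase-1 cluster that are farther from $c^{M_1}$ than $x_0$, note that $x_0$ survives to phase 3 with probability $\ge 1-\delta/2$, and split on $|A|$. The small-$A$ branch via Claim~\ref{clm:small_random_set} is right, though you misattribute the role of $\beta\leq\frac{\alpha\delta^2}{32k}$: there is no union bound over clusters inside this claim (it is a per-cluster statement); that hypothesis exists so that the \emph{complementary} case $|A|>\frac{\delta}{2\beta}$ forces $|A|\geq\frac{16k}{\alpha\delta}$.

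The genuine gap is in the large-$A$ branch, which is where the work is. First, you propose merging $C^*_i$ into ``whichever optimal center is closest to $c^{M_1}_j$''; that target is uncontrolled (it could even be $c^*_i$ itself, or the center of a tiny cluster for which no distance bound is available). The paper instead pigeonholes: since $|A|\geq\frac{16k}{\alpha\delta}$ and $|C^*_i|<\frac{16}{\alpha\delta}$, some \emph{other} optimal cluster $C^*_j$ contains at least $\frac{16}{\alpha\delta}\geq|C^*_i|$ points of $A$, and that specific $c^*_j$ is the merge target. Second, your bound ``$\sum_{y\in A}d(y,c^{M_1}_j)$ is a substantial portion of $cost(C^{M_1})$, which is comparable to $cost(opt_k)$'' is unsupported: the phase-1 guarantee is only an $a$-approximation on the sample $M_1$, and the auxiliary claims (Claims~\ref{clm:known_bound_cost_large_cluster_contains_many_points_good_cluster} and \ref{clm:known_n_bound_cost_large_cluster_close_to_center}) only control $\sum_{x\in A'}d(x,c^{M_1})$ for $A'$ contained in a \emph{single large} optimal cluster --- $A$ as a whole may be spread over many optimal clusters, including small ones. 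The pigeonhole step is exactly what makes these claims applicable: one writes $|C^*_i|\,d(x_0,c^{M_1})\leq|A\cap C^*_j|\,d(x_0,c^{M_1})\leq\sum_{x\in A\cap C^*_j}d(x,c^{M_1})$ (using $d(x,c^{M_1})\geq d(x_0,c^{M_1})$ for $x\in A$) and similarly $|C^*_i|\,d(c^{M_1},c^*_j)\leq|A\cap C^*_j|\,d(c^{M_1},c^*_j)$, and only then invokes the auxiliary claims. Without identifying $C^*_j$ this way, neither the choice of merge target nor the cost bound goes through.
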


\begin{proof}
Focus on the closest point $x_i\in C^*_i$ to the center $c^*_i.$
The cost of taking $x_i$ as a center to $C^*_i$ is 
\begin{equation}\label{eq:cost_closest_point_as_center_1}
\sum_{x\in C^*_i}d(x,x_i)\leq D\sum_{x\in C^*_i}d(x,c^*_i) + D\sum_{x\in C^*_i}d(c^*_i,x_i)\leq 2D\sum_{x\in C^*_i}d(x,c^*_i),
\end{equation}
in the first inequality we used Inequality~\ref{eq:generalized_triangle_inequality} and in the second we used the fact that $x_i$ is closest to $c^*_i$ than all points in $C^*_i.$

The probability that $x_i$ will be received in phase $1$ or $2$ is $\alpha+\beta\leq\delta/2.$ The rest of the analysis will focus on the case that $x_i$ was received only on phase $3$.
The point $x_i$ is in some cluster $C^{M_1}_i\in C^{M_1}$ with center $c^{M_1}_i.$ 
Focus on all points in $C^{M_1}_i$ that are furthest from the center $c^{M_1}_i$ than $x_i:$
$$A=\left\{x\in C^{M_1}_i \wedge x\neq x_i: d(x,c^{M_1}_i)\geq  d(x_i,c^{M_1}_i)\right\}.$$

We separate the analysis into two cases depending on the the size of $A.$ If $|A|\leq \frac\delta{2\beta}.$ Then, using Claim~\ref{clm:small_random_set}, we know that with probability at least $1-\frac{\delta}2$ the algorithm will not get a member of $|A|$ in the second phase, and thus a $x_i$ will be taken as center at phase $3$, which completes the proof of case $1$ in the claim.  

If $|A|>\frac\delta{2\beta}\geq\frac{16k}{\alpha\delta}$, then there is an optimal cluster $C^*_j$ with center $c^*_j$  that includes at least $\frac{16}{\alpha\delta}$ of the points in $A$. We will show that $C^*_i$ can be merged into $C^*_j.$ 
We want to bound the cost of the following clustering with ${k'}-1$ centers: $c^*_1,\ldots, c^*_{k'}$ without $c^*_i$ and all points in $C^*_i$ will be assigned to $c^*_j$. The cost is equal to 
\begin{eqnarray*}
\sum_{x\in C^*_i} \dis{x}{c^*_j} + \sum_{r\neq i} \sum_{x\in C^*_r} \dis{x}{c^*_r}
\end{eqnarray*}
Let us bound the first sum using Inequality~\ref{eq:general_cost} twice 
\begin{eqnarray*}
\sum_{x\in C^*_i}\dis{x}{c^*_j} &\leq& D\sum_{x\in C^*_i}\dis{x}{x_i} + D^2\sum_{x\in C^*_i}\dis{x_i}{c^{M_1}_i}+D^2\sum_{x\in C^*_i}\dis{c^{M_1}_i}{c^*_j}\\
&\leq& 2D^2\sum_{x\in C^*_i}\dis{x}{c^*_i}+D^2\sum_{x\in A\cap C^*_j}d(x,c^{M_1}_i)+D^2|A\cap C^*_j|d(c^{M_1}_i,c^*_j),
\end{eqnarray*}
where in the first inequality: the first term is bounded by Inequality~\ref{eq:cost_closest_point_as_center_1}; the second term by the fact that for each member $x\in A$ have $d(x,c^{M_1}_i)\geq d(x_i,c^{M_1}_i)$ and  $|A\cap C^*_j|\geq\frac{16}{\alpha\delta}\geq|C^*_i|$; for the third term we used again $|A\cap C^*_j|\geq|C^*_i|.$

To bound the last expression we use Claim~\ref{clm:known_bound_cost_large_cluster_contains_many_points_good_cluster} and Claim~\ref{clm:known_n_bound_cost_large_cluster_close_to_center} to deduce that with probability at least  $1-\delta$, $cost(opt_{k'-1})$ is bounded by 
$$2D^2 cost(opt_{k'}) + \frac{5kD^4a}{\alpha}cost(opt_{k'})+ \frac{6kD^5a}{\alpha}cost(opt_{k'})\leq \frac{13kD^5a}{\alpha}cost(opt_{k'}).$$
\end{proof}

In the following claim we show that for a large optimal cluster, either a good center will be taken for it or it can be merged into another cluster. 
\begin{claim}\label{clm:k_2_known_n_large}
Fix $\delta\in(0,1)$ and $D$-cost.
Suppose Algorithm~\ref{alg:k_2_upper_known_n} has the following parameters.  
\begin{itemize}
    \item First phase: use $\alpha n$ points and an $a$-approximation algorithm with $k\geq k'$ clusters
    \item Second phase: use $\beta n$ points with $\alpha+\beta\leq\frac1{4k}.$
    \item Third phase: at least $3k\log\frac2\delta$ are taken as centers as close points, for each cluster 
\end{itemize}
For any optimal clusters $C^*_i$, $i\in[k']$, such that $|C^*_i|\geq \frac{16}{\alpha\delta}$,
 with probability at least $1-\delta$
one of the following holds:
\begin{enumerate}
    \item a point $v\in C^*_i$ will be taken as a center with that $\sum_{x\in C^*_i}d(x,v)\leq 5D^2\cdot cost(opt_{k'})$ 
    \item $cost(opt_{k'-1})\leq\frac{26kD^5a}{\alpha}\cdot cost(opt_{k'})$
\end{enumerate}
\end{claim}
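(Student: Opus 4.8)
The plan is to mirror the structure of the proof of Claim~\ref{clm:k_2_known_n_small}, adapting it to the large-cluster regime where the relevant set of points is now guaranteed to appear in the phase-$2$ sample. Fix a large optimal cluster $C^*_i$, $|C^*_i|\geq\frac{16}{\alpha\delta}$. By Lemma~\ref{lemma:general_cost_expected_cost} and Markov's inequality, a constant fraction $Good_i\subseteq C^*_i$ of the points of $C^*_i$ are good centers, in the sense that $\sum_{x\in C^*_i}d(x,v)\leq 2D\cdot cost(opt_{k'})$ for each $v\in Good_i$; since $|C^*_i|$ is large, $|Good_i|\geq c_0|C^*_i|$ for an absolute constant $c_0$, and $|Good_i|$ itself is $\Omega(1/(\alpha\delta))$, so it is large enough to invoke the random-sampling claims. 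I would then let $C^{M_1}_j$ be the cluster of $C^{M_1}$ containing the largest share of $Good_i\setminus(M_1\cup M_2)$, with center $c^{M_1}_j$, and let $y\in Good_i$ be the point of $Good_i\cap C^{M_1}_j$ that is \emph{closest} to $c^{M_1}_j$; analogously to before, the set of obstructing points is
\[
A=\bigl\{x\in C^{M_1}_j,\ x\neq y:\ d(x,c^{M_1}_j)\geq d(y,c^{M_1}_j)\bigr\}.
\]

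The case split is on $|A|$, exactly as in Claim~\ref{clm:k_2_known_n_small}, but the threshold is different because here I want $y$ (or some good point beyond it) to survive. If $|A|$ is small — say $|A|\leq\frac{\delta}{2\beta}$ — then by Claim~\ref{clm:small_random_set} no member of $A$ lands in $M_2$ with probability $\geq 1-\delta/2$, so $R_{max}[j]$ is set by a point no farther than $y$; combined with the fact that $y$ itself arrives in phase $3$ (probability $\geq 1-(\alpha+\beta)\geq 1-\delta/2$), the "far" test on Line~\ref{alg:k_2_upper_known_n_line_take_far_point} fires for $y$ (or for every point of $Good_i\cap C^{M_1}_j$ at least as far as $y$, one of which must actually appear), so a point of $Good_i$ is taken as a center and case~1 holds. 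The subtle point relative to the small-cluster proof is that here I must also rule out the degenerate possibility that the good points I am counting on never arrive in phase $3$ at all; this is handled by Claim~\ref{clm:random_from_each_part}, which guarantees that a constant fraction of $Good_i$ shows up after the first two phases because $|Good_i|$ is large. Alternatively, if $C^{M_1}_j$ consists mostly of points of $Good_i$, I can instead route through the "close points" counter: phase~3 takes at least $3k\log\frac2\delta$ close points to $c^{M_1}_j$, and since a constant fraction of the points near $c^{M_1}_j$ are good, one of those close centers is in $Good_i$ except with probability $\leq(1-c_0)^{3k\log(2/\delta)}\leq\delta/2$ — this is where the "$3k\log\frac2\delta$ close centers per cluster" hypothesis is used, and it is the reason the algorithm needs that counter at all.

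If $|A|$ is large, $|A|>\frac{\delta}{2\beta}\geq\frac{16k}{\alpha\delta}$, then some optimal cluster $C^*_{j'}$, $j'\neq i$, contains at least $\frac{16}{\alpha\delta}\geq|C^*_i|$ points of $A$, and I merge $C^*_i$ into $C^*_{j'}$ exactly as in Claim~\ref{clm:k_2_known_n_small}: bound $\sum_{x\in C^*_i}d(x,c^*_{j'})$ by three applications of Inequality~\ref{eq:generalized_triangle_inequality} through $y$, $c^{M_1}_j$, and $c^*_{j'}$, using $d(x,c^{M_1}_j)\geq d(y,c^{M_1}_j)$ for $x\in A$ and $|A\cap C^*_{j'}|\geq|C^*_i|$ to absorb the cross terms into a sum over $A\cap C^*_{j'}\subseteq C^*_{j'}$, and then invoking Claims~\ref{clm:known_bound_cost_large_cluster_contains_many_points_good_cluster} and~\ref{clm:known_n_bound_cost_large_cluster_close_to_center} to control $d(x,c^{M_1}_j)$ and $d(c^{M_1}_j,c^*_{j'})$ in terms of $cost(opt_{k'})$. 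Summing the pieces gives $cost(opt_{k'-1})\leq\frac{26kD^5a}{\alpha}cost(opt_{k'})$, i.e. case~2. A union bound over the $O(1)$ bad events (the sampling guarantees of Claims~\ref{clm:random_from_each_part}, \ref{clm:small_random_set}, \ref{clm:known_bound_cost_large_cluster_contains_many_points_good_cluster}, \ref{clm:known_n_bound_cost_large_cluster_close_to_center}, the close-points event, and the arrival-in-phase-3 event), each of probability $O(\delta)$ after suitably rescaling the $\delta$'s inside each invocation, yields overall failure probability $\leq\delta$. The main obstacle I anticipate is the bookkeeping in the "small $|A|$" branch: one has to argue that not only does $y$ survive but that the choice of $y$ as the \emph{closest} good point in $C^{M_1}_j$ (rather than an arbitrary one) is what makes $A$ small with good probability, and to reconcile this with the possibility that $C^{M_1}_j$ has few good points — which is precisely the scenario the close-points counter is designed to catch, so the two sub-arguments must be interleaved carefully rather than treated as a clean dichotomy.
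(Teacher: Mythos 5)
Your proposal imports the case split of Claim~\ref{clm:k_2_known_n_small} (small vs.\ large obstructing set $A$), but that dichotomy does not close for large clusters, and this leaves a genuine gap. Concretely: you take $y$ to be the \emph{closest} point of $Good_i\cap C^{M_1}_j$ to $c^{M_1}_j$, so $A$ contains every other point of $Good_i\cap C^{M_1}_j$, hence $|A|\gtrsim |Good_i|/k$, which for a truly large cluster (say $|C^*_i|=n/2$) vastly exceeds any threshold of the form $\delta/(2\beta)$. So the ``far test fires for $y$'' branch essentially never applies. Worse, in the large-$|A|$ branch the pigeonhole step fails: in the small-cluster proof one can discard $C^*_i$'s own contribution to $A$ because $|C^*_i|<\frac{16}{\alpha\delta}$, but here $A$ may consist almost entirely of points of $C^*_i$ itself, in which case there is no other optimal cluster $C^*_{j'}$ containing $|C^*_i|$ points of $A$ and the merge has no target. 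You sense this tension in your last sentence but do not resolve it, and the resolution is not an ``interleaving'' of the two sub-arguments --- it requires replacing the case split.

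The paper's proof uses a different and cleaner dichotomy, and the far test plays no role at all for large clusters. Let $G_i$ be the $|C^*_i|/2$ points of $C^*_i$ closest to $c^*_i$ (this is what gives the $5D^2$ constant in case~1; your $2D$-in-expectation definition of $Good_i$ would need a Markov step anyway). Some cluster $C^{M_1}_j$ contains at least $|G_i|/k$ points of $G_i$. Either $C^{M_1}_j$ also contains at least $|G_i|/k$ points from some \emph{other} optimal cluster $C^*_{j'}$ --- then Claim~\ref{clm:known_n_bound_cost_large_cluster_two_cluster_are_close} directly gives case~2 --- or it contains at most $|G_i|$ points from all other optimal clusters combined, in which case Claim~\ref{clm:random_from_each_part} guarantees that at least $\frac{1}{2k}|G_i|$ points of $G_i\cap C^{M_1}_j$ arrive in phase~3, and Corollary~\ref{cor:random_sample_enough_points_hit_set} shows that among the first $3k\log\frac2\delta$ phase-3 arrivals in $C^{M_1}_j$ (all taken by the close-points counter) one lies in $G_i$ with probability $1-\delta/2$. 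So the route you relegate to an ``alternative'' is in fact the only route, and the condition triggering it is not ``$C^{M_1}_j$ consists mostly of good points'' but ``no single other optimal cluster contributes $|G_i|/k$ points to $C^{M_1}_j$'', which together with its negation covers all cases.
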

\begin{proof}
Denote by $G_i$ the $\frac{|C^*_i|}{2}$ closest points in $C^*_i$ to the center $c^*_i$. The cost of taking any $g\in G_i$ as a center to $C^*_i$ is small 
\begin{equation}
\sum_{x\in C^*_i}d(x,g)\leq D\sum_{x\in C^*_i}d(x,c^*_i) + D\sum_{x\in C^*_i}d(g,c^*_i)\leq 5D^2\sum_{x\in C^*_i}d(x,c^*_i),
\end{equation}
where in the first inequality we used Inequality~\ref{eq:generalized_triangle_inequality} and in the second we used the definition of $G_i$, Lemma~\ref{lemma:general_cost_expected_cost}, and Markov's inequality. 
We will show that either the algorithm takes a point from $G_i$ as a center in the third phase or $C^*_i$ can be merged into another optimal clustering without harming the cost by much. 

There is a cluster in $C^{M_1}_{i}\in C^{M_1}$ that contains at least $|G_i|/k$ of the points in $C^*_i.$
There are two cases: either $C^{M_1}_{i}$ contains at least $|G_i|/k$ points from a different optimal cluster $C^*_j$ or not. 
If it contains that many points from a different cluster $C^*_j$, then, by Claim~\ref{clm:known_n_bound_cost_large_cluster_two_cluster_are_close}, with probability at least $1-\frac{\delta}2$ it holds that 
$$cost(opt_{k'-1})\leq\frac{26k^2D^5a}{\alpha}\cdot cost(opt_{k'}),$$
which completes the proof of case $2$ in the claim. 
Otherwise, 
from Claim~\ref{clm:random_from_each_part}, with probability at least $1-\delta/2$ at most $2(\alpha+\beta)|G_i|$ points from $G_i$ are received in phase $1$ and $2$. Or, in different words at least $(\frac{1}{k} - 2\alpha-2\beta)|G_i|\geq\frac{1}{2k}{|G_i|}$ points in $G_i\cap C^{M_1}_i$ are received in phase $3.$ 

In the current case, $C^{M_1}_i$ contains at most $|G_i|$ points from clusters different from $C^*_i$. Thus, from Corollary~\ref{cor:random_sample_enough_points_hit_set}
the first $3k\log\frac{2}{\delta}$ points in $C^{M_1}_i$ in third phase with probability $1-\delta/2$ is a member of $G_i$ and this completes case $1$ in the claim. 
\end{proof}

\paragraph{Comparison between Algorithm~\ref{alg:k_2_upper_known_n} and \cite{indyk99}:}
The work \cite{indyk99} designed a sublinear time algorithm for $k$-medians, which is similar to the $k$-means problem discussed in this paper. Their algorithm 
 has some similarities to Algorithm~\ref{alg:k_2_upper_known_n}, that works in the case that number of points in the dataset, $n$, is known in advance and the order of the dataset is random. 
Several differences cause the analysis and algorithm to be different: 
\begin{enumerate}
 \item Parameter regime: ``large'' cluster in inherently different in the two algorithms. 
 In \cite{indyk99}, large means $O(\sqrt{n})$, as they cannot take more points for the algorithm to be with sublinear time.  On the other hand, for Algorithm~\ref{alg:k_2_upper_known_n} ``large'' means some constant fraction because the algorithm is allowed to take only a constant number of centers in the second phase. 
\item  Centers from phase 1:  \cite{indyk99} simply takes the centers that were chosen in phase $1$. In our framework this is not allowed since once a center was observed the algorithm cannot retake it. 
To overcome this obstacle we take centers that are close to the centers chosen in phase $1$. But then we need to decide what is close which complicates the analysis. 
\item Far points: in 
\cite{indyk99}, far points are the furthest points from the cluster defined in phase $1$. We cannot use this definition in our framework. To resolve this issue, we add an intermediate step where the algorithm saves a constant fraction of  number of points to set a bar that defines far. In the last phase, only points that are above the bar, are taken as centers. Note that the points in the first phase cannot be used to define the bar, as they were used to define the cluster. 
 \end{enumerate}

\subsection{Random order, \texorpdfstring{$k\geq 2$}{Lg}, unknown \texorpdfstring{$n$}{Lg}}\label{apx:k_2_random_order_unknown_n}

\subsubsection{Lower bounds}

\begin{claim}\label{thm:k_2_lower_random}
For any integer $n$, any scalar $c>1$, and for any clustering algorithm that does not know what $n$ is and is a $c$-approximation, there are $n$ points such that the algorithm must take $\Omega(\log n)$ centers with probability at least $0.7$.
\end{claim}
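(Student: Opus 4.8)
The idea is to transplant the worst‑order bound of Claim~\ref{thm_k_2_worst} to the random‑order setting. There, $n$ reals with violently increasing gaps force any $c$‑approximation to take \emph{every} running maximum as a center --- if it ever failed to, the stream could stop at that moment and the isolated top point would make the ratio exceed~$c$. Exactly the same obstruction survives random order, but the running maximum of a uniformly random stream changes only $\Theta(\log n)$ times, so the forcing now produces only $\Theta(\log n)$ centers instead of $n$. Concretely I use $n$ distinct reals $q_1<q_2<\dots<q_n$, each of multiplicity one, so separated that
\[
q_i \;>\; q_{i-1}+\sqrt{\;c\sum_{j<i}q_j^{2}\;}\qquad\text{for every }i\ge 2,
\]
for instance $q_i=(10cn)^i$ --- precisely the spacing used in Claim~\ref{thm_k_2_worst}.

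\textbf{An order‑free cost estimate.} Let $S\subseteq\{q_1,\dots,q_n\}$ be any sub‑multiset with $|S|\ge2$, say $\max S=q_{i^\star}$, and let $C\subseteq S$ be any set of centers with $q_{i^\star}\notin C$. Then the clustering $C$ costs at least $(q_{i^\star}-\sigma)^2$ on $S$, where $\sigma\le q_{i^\star-1}$ is the second largest element of $S$ (the center nearest $q_{i^\star}$ lies in $S\setminus\{q_{i^\star}\}$), so by the separation displayed above its cost exceeds $c\sum_{j<i^\star}q_j^{2}$; on the other hand the $2$‑clustering with centers $\{\max S,\min S\}$ has cost at most $\sum_{q\in S,\,q\ne\max S}(q-\min S)^2\le\sum_{j<i^\star}q_j^{2}$, so $cost(opt_k)\le\sum_{j<i^\star}q_j^{2}$ for every $k\ge2$. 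Hence $C$ fails to $c$‑approximate $opt_k$ on $S$. The essential point is that this estimate uses only the \emph{set} $S$ and the single fact $q_{i^\star}\notin C$; it says nothing about the order in which $S$ was revealed, which is what makes the next step compatible with a random arrival order.

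\textbf{The running maximum must be a center; counting.} Run the algorithm on $D$ in uniformly random order. For a fixed time $\tau$, the first $\tau$ arrivals form a uniformly random $\tau$‑subset $P_\tau$ in uniformly random order, and the algorithm's configuration after $\tau$ steps is a function of this alone; if $\max P_\tau$ is not then one of its centers, then --- since $n$ is unknown, the stream may have ended --- its output would have to $c$‑approximate the instance $P_\tau$, which the previous paragraph forbids. Applying Definition~\ref{dfn:c-approximation} to the dataset $P_\tau$ (averaged over which $\tau$‑subset it is) gives $\Pr[\max P_\tau\text{ is not a center at time }\tau]\le0.1$ for every $\tau$. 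Now fix a large constant $b$ (say $b=100$) and look at the octave times $\tau_\ell=b^{\ell}$, $\ell=0,\dots,L:=\lfloor\log_b n\rfloor$. Call block $\ell$ \emph{good} if $\max P_{\tau_{\ell+1}}>\max P_{\tau_\ell}$ and $\max P_{\tau_{\ell+1}}$ is a center at time $\tau_{\ell+1}$. The first condition fails with probability $\tau_\ell/\tau_{\ell+1}=1/b$ (the maximum of a random $\tau_{\ell+1}$‑subset is uniform among its positions) and the second with probability $\le0.1$, so each block is bad with probability $\le 1/b+0.1\le0.11$. By linearity of expectation and Markov's inequality, with probability $\ge0.7$ fewer than $0.37(L+1)$ blocks are bad, hence more than $0.63(L+1)=\Omega(\log n)$ are good; and for good blocks $\ell<\ell'$ one has $\max P_{\tau_{\ell'+1}}>\max P_{\tau_{\ell'}}\ge\max P_{\tau_{\ell+1}}$, so distinct good blocks carry distinct center elements. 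The algorithm therefore holds $\Omega(\log n)$ centers with probability $\ge0.7$ (the statement being empty for $n$ below an absolute constant, where $L+1=1$).

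\textbf{Main obstacle.} The delicate step is the reduction of the third paragraph, which has to use the random order and the unknown length together: a prefix of a random stream is a bona fide random sub‑instance, so the $c$‑approximation hypothesis applies to it unchanged, and ``stop the stream now'' is a legal instance. The order‑independence of the cost estimate is exactly what lets these two facts coexist. After that, turning ``the running maximum is, at every fixed time, a center with probability $\ge0.9$'' into ``$\Omega(\log n)$ centers with probability $\ge0.7$'' is the Markov bookkeeping shown above.
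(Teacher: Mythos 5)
Your proof is correct and takes essentially the same route as the paper's: the same exponentially separated point set from Claim~\ref{thm_k_2_worst}, the same forcing argument that (since $n$ is unknown and the stream could end now) the running maximum of the prefix must be a center with probability at least $0.9$ at every time, and the same fact that a uniform random order has $\Theta(\log n)$ running maxima. The only difference is bookkeeping --- you count good octave blocks and apply Markov to the bad ones, whereas the paper counts record points directly via Claim~\ref{clm:lower_bound_from large_center_to_large_randomness} and concentration --- and your prefix-conditioning step is a slightly more explicit justification of the reduction the paper states informally.
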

To prove the theorem we take the same dataset as in the proof of Claim~\ref{thm_k_2_worst}. In this construction, at each iteration the point with the maximal value has to be taken, otherwise the examples can stop and the algorithm will not be a $c$-approximation. If the order is random, there will be $\Omega(\log n)$ points that are maximal, as the probability that the $i$-th point to be maximal is about $1/i$. Thus $\Omega(\log n)$ have to be taken as centers. 

\begin{proof}[
of Claim~\ref{thm:k_2_lower_random}]
We will use the same series of points in $\reals$ that was used in the proof of Theorem~\ref{thm_k_2_worst}.
We say that a point in the $i$-th iteration is \emph{maximal} if it's the largest value so far.  
We prove the following two claims:
\begin{enumerate}
\item For any $c$-approximation it must take at least $0.5$ of the maximal points with probability at least $0.8$. 
\item With probability at least $0.99$ there are $\Omega(\log n)$ maximal points. 
\end{enumerate}
Once we prove the two steps we are done. 

Claim 1 -  There are two cases (i) for each point if it's a maximal point, the probability the algorithm takes it as a center is at least $0.9$ (ii) there is a point $x$ that if $x$ is a maximal point the probability the algorithm takes it as a center is less than $0.9$. 
In case (i), using Claim~\ref{clm:lower_bound_from large_center_to_large_randomness}, we know that with probability at least $0.8$ the algorithm takes half of the maximal points. In case (ii), from the same argument as the proof of Theorem~\ref{thm_k_2_worst} we have a contradiction to the assumption that the algorithm is a $c$-approximation using the dataset $1,\ldots,x.$


Claim 2 - denote by $X$ the random variable that is equal to the number of maximal points and denote by $X_i$ the binary random variable that is equal to $1$ if the $i$-th example is maximal, and otherwise $X_i=0.$  
For any $i$, $\E[X_i]$ is equal to the probability that the $i$-th example is the largest than all previous examples. 
This probability is equal to $\E[X_i]=1/i$.
Thus  $$\E[X]=\E\left[\sum_{i=1}^n X_i\right] = \sum_{i=1}^n \frac{1}{i} \leq \log n +1.$$
Note that $X_i$'s are independent and thus we can use Hoeffding's inequality.
\end{proof}

For the case of random order, unknown $n$ and general $k$, we will show a lower bound of $\Omega(k\log\frac{n}k)$ on the number of centers and a matching upper bound.

\begin{proof}[
of Theorem~\ref{thm:k_lower_random}]
The high-level idea is similar to the one in Claim~\ref{thm:k_2_lower_random}. We will have a series of numbers $w_1 < \ldots < w_n$ with increasing distances $w_{i+1}-w_i$. We will show that since $n$ is unknown, the $k-1$ largest number at each step have to be taken as centers for the algorithm to be a $c$-approximation algorithm, and there are $\Omega\left(k\log\frac n k\right)$ points that are at some step among the $(k-1)$ largest numbers. 


We say that the point received in the $t$-th iteration is \emph{$k$-maximal} if it is one of the $k-1$ largest values among the points received till iteration $t.$
We choose our dataset such that at each iteration the $k-1$ largest points so far must be taken as centers. Namely, the $i+1$ point in the dataset is chosen such that 
$$(w_{i+1}-w_i)^2 > c\sum_{j=1}^i(w_i-w_j)^2.$$
Suppose an algorithm does not take a $k$-maximal point at time $t$, and denote by $w_{i}$ the $k$-th largest point at this time step. 
The algorithm's cost, $cost(alg)$, is at least $(w_{i+1}-w_i)^2$. On the other hand, one can take the $k$ largest points as centers and get a bound on the optimal cost $cost(opt)\leq \sum_{j=1}^i(w_i-w_j)^2.$  By the choice of the dataset we get that if a $k$-maximal point is not taken as a center, than $cost(alg)>c\cdot cost(opt).$

Expected number of points that are $k$-maximal is 
$$k-2+\sum_{j=k-1}^n\frac{k-1}{j}=\Omega\left(k\log \frac{n}{k}\right).$$

The theorem follows from the following two claims:
\begin{enumerate}
\item For any $c$-approximation it must take at least $0.5$ of the $k$-maximal points with probability at least $0.8$. 
\item With probability at least $0.99$ there are $\Omega(k\log \frac{n}{k})$ maximal points.

\end{enumerate}
The two claims follow from similar arguments as the proof of Claim~\ref{thm:k_2_lower_random}.
\end{proof}

\subsubsection{Upper bounds}

\begin{proof}[
of Theorem~\ref{thm:k_2_upper_random}]
We start by bounding the number of expected centers the algorithm uses. 
Denote by $X$ the random variable that is equal to the number of centers chosen. 
Denote by $X_i$ the random variable that is $1$ if the $i$-th point is taken as center and $0$ otherwise. 
Then, the expected number of centers the algorithm chooses is equal to $$\E[X]=\E[\sum_{i=1}^n X_i]=\sum_{i=1}^n\E[X_i]=\sum_{i=1}^n \Pr(i\text{-th point is a center})$$
By the definition of the algorithm $X_1$ is always $1$. 
For $1<i\leq n$ it holds that $X_i=1$ only if the $i$-th point is the furthest away from the first point among points $1,\ldots,i-1$. The probability that the $i$-th point is the furthest is equal to $\frac{1}{i-1}$. Thus $$\E[X]=1+ \sum_{i=2}^{n} \frac1{i-1} = 1+ \sum_{i=1}^{n-1} \frac1 i \leq  \log(n-1) +2.$$ 
From Markov's inequality with probability at least $0.99$ number of centers is $O(\log n)$. 

We are now left with proving that Algorithm~\ref{alg:k_2} is a $\Theta(1)$-approximation.
Focus on the optimal clustering. 
Denote by $C_1^*$ the points in the first cluster and by $C_2^*$ the points in the second cluster. 
We define the set of \emph{good} points for a cluster $r$ ($r=1,2$) as the set of points that taking them as a center will not increase the cost by much. More formally, 
$$Good_r = \{y_r \in C^*_r | \sum_{x_i \in C^*_r} \norm{x_i-y_r}^2 \leq 100 \sum_{x_i \in C^*_r} \norm{x_i-\mu^*(r)}^2\},$$
where  $\mu^*(r)=\frac{1}{|C^*_r|}\sum_{x_i \in C^*_r} x_i$ is the optimal center for cluster $C^*_r$.
From Lemma~\ref{lemma:random_point_in_cluster} and Markov's inequality we know that $Good_r$ is large. 
Specifically, $$\frac{|Good_r|}{|C^*_r|}\geq1-\frac{1}{50}.$$
Thus, using union bound, with probability at least $1-\frac{2}{50}$ the first point the algorithm encounters from each cluster $r$ is good.

Fix the first point the algorithm encounters by $x$, w.l.o.g $x\in C^*_1$. 
Define by $y_2^*\in Good_2$ the closest point to $x$ in $Good_2$, i.e.,
$$y_2^* = \argmin_{y_2\in Good_2}\norm{y_2-x}.$$ 
Denote by $B$ all the points in $C^*_1$ that are further from $x$ than $y^*_2$, i.e., 
$$B = \{y_1\in C^*_1 : \norm{y_1-x} \geq \norm{y_2^*-x} \}.$$
There are two cases 
\begin{enumerate}
\item $|B|\leq 0.01|C_2^*|$: we will show that most probably, the first point the algorithm encounters in $C^*_2$ will be chosen as a center. We know that $$|B| \leq 0.01|C_2^*| \leq 0.02 |Good_2|.$$ 
Thus, with probability at least $1-0.02$, the first point in $Good_2\cup B$ is in $Good_2$ and the algorithm takes it as a center. 

\item $|B| > 0.01|C_2^*|$: we will show that $C^*_1$ and $C^*_2$ can be viewed as one cluster with $x$ as its center without harming the cost by much.
\end{enumerate}
\begin{eqnarray*}
\sum_{y \in C^*_1 \cup C^*_2} \norm{y-x}^2 & = & \sum_{y_1\in C^*_1} \norm{y_1-x}^2 + \sum_{y_2\in C^*_2} \norm{y_2-x}^2\\
& = &  \sum_{y_1\in C^*_1} \norm{y_1-x}^2 + \sum_{y_2\in C^*_2} \norm{(y_2^* - x) + (y_2 - y_2^*)}^2 \\
& \leq &  \sum_{y_1\in C^*_1} \norm{y_1-x}^2 + 2 |C^*_2|\norm{y_2^*-x}^2 + 2 \sum_{y_2\in C^*_2} \norm{y_2 - y_2^*}^2 \\
& \leq &  \sum_{y_1\in C^*_1} \norm{y_1-x}^2 + 2 \cdot 100\sum_{y_1\in B}\norm{y_1-x}^2 + 2 \sum_{y_2\in C^*_2} \norm{y_2 - y_2^*}^2 \\
& \leq &  201\sum_{y_1\in C^*_1} \norm{y_1-x}^2 + 2 \sum_{y_2\in C^*_2} \norm{y_2 - y_2^*}^2 \\
& \leq & 20100\cdot  cost(opt_2), 
\end{eqnarray*}
where the first inequality follows from Claim~\ref{clm:norm_of_sum_vectors}, the second from the definition of $B$, and the third from the definition of $Good.$
\end{proof}

\subsection*{Proof of Theorem~\ref{thm:k_upper_random}}
\begin{proof}
We prove that for any constant $k$ with probability at least $0.9$
\begin{enumerate}
\item Number of centers is bounded by $O(k\log \frac{n}{k})$
\item $cost(alg)\leq \Theta(1)\cdot cost(opt_k)$
\end{enumerate}
These claims are proved in Claims~\ref{clm:k_2_upper_unknown_n_bound_centers} and \ref{clm:k_2_upper_unknown_n_bound_cost}. We prove the claim under the general cost function, see Section~\ref{sec:general_cost_function}.
\end{proof}

\begin{claim}\label{clm:k_2_upper_unknown_n_bound_centers}
 Algorithm~\ref{alg:k} takes as center at most $O(k\log \frac{n}{k})$, with probability at least $0.99$.
\end{claim}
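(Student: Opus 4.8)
The plan is to bound the expected number of centers taken by Algorithm~\ref{alg:k} and then invoke Markov's inequality. The key observation, already hinted at in the text following Theorem~\ref{thm:k_upper_random}, is that point $x_i$ with $i>k$ is taken as a center precisely when $x_i$ appears among the $k$ points output by Farthest-first-traversal$(M, x_1, k)$ run on the first $i$ points. First I would argue that, conditioned on \emph{which} set of $i$ points has arrived so far (and in particular on the fact that $x_1$ is whichever of them is the anchor --- note $x_1$ is the first point overall, so this conditioning is consistent), the farthest-first-traversal output $S$ is a deterministic function of that unordered set together with the anchor $x_1$. Since the order is uniformly random, conditioned on $x_1$ being the anchor, the $i$-th arriving point is uniformly distributed among the $i-1$ non-anchor points that have arrived. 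Hence $\Pr(x_i \text{ taken as center}) \le \frac{k-1}{i-1}$ (the anchor $x_1$ is always in $S$, so at most $k-1$ of the remaining $i-1$ points are in $S$; actually it is cleanest to just bound by $k/i$ as in the text).

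Carrying this out, let $X_i$ be the indicator that the $i$-th arriving point is taken as a center. For $i \le k$ we have $X_i \equiv 1$; for $i > k$, the argument above gives $\E[X_i] \le \frac{k-1}{i-1} \le \frac{k}{i-1}$. Then
\begin{eqnarray*}
\E\Big[\sum_{i=1}^n X_i\Big] \;=\; k + \sum_{i=k+1}^n \E[X_i] \;\le\; k + \sum_{i=k+1}^n \frac{k}{i-1} \;\le\; k + k\big(\ln(n-1) - \ln(k-1)\big) \;=\; O\!\left(k\log\frac{n}{k}\right),
\end{eqnarray*}
using the standard bound $\sum_{j=k}^{m} \frac1j \le \ln(m/(k-1)) \le \ln(m) - \ln(k-1)$ on harmonic partial sums. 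Finally, by Markov's inequality, the number of centers exceeds $100 \cdot \E[\sum_i X_i] = O(k\log\frac nk)$ with probability at most $1/100$, so with probability at least $0.99$ the algorithm takes $O(k\log\frac nk)$ centers, which is the claim.

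The one subtlety I would be careful about --- and what I expect to be the main obstacle to making the argument airtight --- is the conditioning step: one must check that farthest-first-traversal, as invoked in Algorithm~\ref{alg:k}, really does select the $i$-th point with probability depending only on $i$ and $k$, independent of the identities of the points. The delicate point is ties: if the $\argmax$ or $\argmin$ in Algorithm~\ref{alg:fft} is not unique, $S$ need not be a function of the unordered point set alone, and the ``probability $k/i$'' claim could fail. I would handle this by assuming (as is standard, and harmless for the lower-bound-matching upper bound) that the points are in general position so that all the relevant distances are distinct, or alternatively by noting that ties only reduce the probability of any fixed later point being selected (one can break ties by arrival order, and a point arriving $i$-th is then \emph{less} likely to win a tie), so the bound $\E[X_i] \le k/i$ still holds. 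With that resolved, the rest is the routine harmonic-sum computation above.
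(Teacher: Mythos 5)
Your proposal is correct and follows essentially the same route as the paper's proof: bound the probability that the $i$-th arrival is among the $k$ farthest-first-traversal points by roughly $k/i$, sum the resulting harmonic-type series to get an expected $O(k\log\frac{n}{k})$ centers, and finish with Markov's inequality. Your extra care about conditioning on the anchor $x_1$ (giving $\frac{k-1}{i-1}$) and about ties is a slight refinement of the paper's bare ``probability is $\frac{k}{i}$'' statement, but it does not change the argument.
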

\begin{proof}
To bound the number of centers the algorithm uses, we use a similar argument as in Theorem~\ref{thm:k_2_upper_random}. 
The algorithm takes the first $k$ points. 
For $i>k$ we want to find the probability that the $i$-th point is selected as a center.
This happens if among the $i$ points read so far, this point is one of the $k$ members in the Farthest-first-traversal.
The probability for this is $$\frac{k}{i}.$$
Thus the expected number of centers taken by the algorithm throughout its entire run is 
$$k + \sum_{j=k+1}^n \frac{k}{j} = k + k\left(\sum_{i=1}^n\frac1i - \sum_{i=1}^k\frac1i\right)=\Theta\left(k\log\frac{n}{k}\right).$$
Use Markov's inequality to prove that with probability $0.99$ number of centers takes by the algorithm is at most $O\left(k\log\frac{n}{k}\right)$.
\end{proof}

\begin{claim}\label{clm:k_2_upper_unknown_n_bound_cost}
 The cost of Algorithm~\ref{alg:k} is bounded by $\Theta(1)\cdot cost(opt_k),$ with probability at least $0.96.$
\end{claim}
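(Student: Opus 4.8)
The plan is to follow the induction-on-$k'$ template of Claim~\ref{clm:k_2_upper_known_n_bound_cost}: prove that for every integer $k'\le k$, with probability at least $1-\Theta(k'/k)$ one has $cost(alg)\le \Theta(1)^{k'}\cdot cost(opt_{k'})$, and then instantiate $k'=k$. The base case $k'=1$ is immediate, since the algorithm's first center $x_1$ is a uniformly random point of the dataset, so by Lemma~\ref{lemma:general_cost_expected_cost} and Markov it is a $\Theta(1)$-good center for the single cluster with high probability. Throughout I work with the general cost, so every use of the triangle inequality is via the $D$-inequality.

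For the inductive step, fix the optimal clustering $C^*=(C^*_1,\dots,C^*_{k'})$ and, for each $i$, let $Good_i\subseteq C^*_i$ be the points whose selection as center does not increase $cost(C^*_i)$ by more than a constant factor; by Lemma~\ref{lemma:general_cost_expected_cost} and Markov, $|Good_i|\ge(1-\epsilon)|C^*_i|$ for a small constant $\epsilon$. For each cluster $i$ I would isolate a set $I_i$ of \emph{interfering} points, namely those at distance at least some threshold $\tau_i$ from $C^*_i$, where $\tau_i$ is chosen strictly between the distance of $C^*_i$'s closest outside point and its farthest outside point (the body explains why the two extreme choices both fail). The dichotomy is then: \textbf{(a)} if $|I_i|$ is small relative to $|C^*_i|$ for every $i$, then, because the order is random, with high probability the first point of $C^*_i\cup I_i$ to arrive lies in $Good_i$, and I invoke Lemma~\ref{lemma:farthest_first_traversal} to argue that when this point arrives nothing that could displace it from the size-$k$ farthest-first-traversal set has arrived yet (everything that could is, by the choice of $\tau_i$, inside $I_i$), so the algorithm takes it, giving a $\Theta(1)$-good center for $C^*_i$ and hence $cost(alg)\le\Theta(1)\,cost(opt_{k'})$; \textbf{(b)} if instead $|I_i|$ is large relative to $|C^*_i|$ for some $i$, then each of those $\Omega(|C^*_i|)$ interfering points contributes individual cost $\Omega(\tau_i)$, whence $cost(opt_{k'})\ge\Omega(|C^*_i|\,\tau_i)$, which is exactly enough to pay for reassigning all of $C^*_i$ to the nearest other optimal center (using $x_i\in Good_i$, the $\tau_i$ bound, and the $D$-inequality), so $cost(opt_{k'-1})\le\Theta(1)\,cost(opt_{k'})$ and the induction hypothesis with $k'-1$ closes the case. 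A union bound over $i\in[k']$ and over the events ``$x_i\in Good_i$'' controls the failure probability.

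The engine behind case (b) is a pigeonhole observation: at the moment the first point $x_i$ of $C^*_i$ arrives (say as the $t$-th point, with $t>k$) and is not taken, the current farthest-first-traversal set $S$ has size $k$ yet lies inside $\bigcup_{j\ne i}C^*_j$ (only $x_i$ among the points seen so far belongs to $C^*_i$), i.e.\ in at most $k'-1<k$ clusters, so two of its points share a cluster; by Lemma~\ref{lemma:farthest_first_traversal} and the fact that farthest-first-traversal distances are non-increasing, $x_i$'s distance to $S$ is at most the distance between those two co-clustered points, and this is what ultimately bounds the merge cost.

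The step I expect to be the main obstacle is calibrating $\tau_i$ (equivalently, the definition of $I_i$): it must be small enough that ``$|I_i|$ large'' genuinely forces $\Omega(|C^*_i|)$ points of cost $\Omega(\tau_i)$ \emph{and} that the merge cost is $O(|C^*_i|\,\tau_i)$ — avoiding a spurious $|C^*_i|$ factor — while simultaneously being large enough that, in case (a), the first-arriving point of $C^*_i\cup I_i$ really cannot be pushed out of the size-$k$ farthest-first-traversal set by the points already seen. Threading this, and doing the bookkeeping so the per-cluster constants blow up no worse than $\exp(k\log k)$, is where the real work lies; once $\tau_i$ is fixed, the probabilistic ingredients are the same random-order arguments already used in Theorem~\ref{thm:k_2_upper_random} and Claims~\ref{clm:k_2_known_n_small}--\ref{clm:k_2_known_n_large}.
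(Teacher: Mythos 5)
Your architecture matches the paper's proof of Claim~\ref{clm:k_2_upper_unknown_n_bound_cost} almost exactly: induction on $k'$ with per-level failure probability $\Theta(k'/k)$, the sets $Good_i$ via Lemma~\ref{lemma:general_cost_expected_cost} and Markov, a take-or-merge dichotomy driven by the size of an interfering set, and the pigeonhole-plus-Lemma~\ref{lemma:farthest_first_traversal} argument showing that if the first point of $C^*_i$ is rejected then two co-clustered points of $S$ are at mutual distance at least the threshold. The gap is exactly where you predicted it, and your tentative definition of $I_i$ is the one that fails: defining the interfering points as ``those at distance at least $\tau_i$ from $C^*_i$'' does not support your case (b), because a point can be arbitrarily far from $C^*_i$ while sitting right next to its own optimal center, so ``$|I_i|$ large'' gives no lower bound on $cost(opt_{k'})$ and nothing to pay for the merge.

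The paper threads this with \emph{two} sets playing different roles. First, the threshold is not a free parameter to be calibrated: it is defined adaptively as $dis$, the distance from $Good_i$ to its $\lceil |C^*_i|/(100k)\rceil$-th closest outside point, and $N$ is the set of those $|C^*_i|/(100k)$ closest outside points. By construction $N$ is small, so with probability $1-1/(100k)$ none of $N$ arrives before the first point of $C^*_i$, which guarantees that every point of $S$ is at distance at least $dis$ from $y_i\in Good_i$ --- this is what feeds Lemma~\ref{lemma:farthest_first_traversal}, and $N$ never needs a cost lower bound. Second, the set that enters the dichotomy is $B=\bigcup_{r\neq i}\{y\in C^*_r: d(y,x_r)\geq dis/(2D)\}$, measured from a good representative $x_r$ of the point's \emph{own} cluster. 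This is what makes both halves work: in the take case, the pigeonhole pair $y_1,y_2\in S\cap C^*_j$ with $d(y_1,y_2)\geq dis$ forces (via the $D$-triangle inequality) one of them to be at distance $\geq dis/(2D)$ from $x_j$, i.e.\ into $B$, contradicting that no point of $B$ has arrived; in the merge case, each of the $\geq |C^*_i|/(100k)$ points of $B$ contributes $\geq dis/(2D)$ to the clustering by the $x_r$'s, which is within a constant of $cost(opt_{k'})$, so $cost(opt_{k'})=\Omega(|C^*_i|\,dis/k)$ and the reassignment of $C^*_i$, whose cost is $O(|C^*_i|\,dis)$ plus lower-order terms, is absorbed up to the $\poly(k)$ factors you anticipated. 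Without splitting the roles of $N$ (small by fiat, defines the threshold) and $B$ (costly by construction, enters the dichotomy), the argument does not close.
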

\begin{proof}
We will show that Algorithm~\ref{alg:k} is a $\Theta(1)$-approximation. To achieve that we will prove something stronger: when running the algorithm with parameter $k$, then for any $1\leq k'\leq k$  
with probability at least $1-\frac{4k'}{100k}$, $$cost(alg)\leq (10^5D^3)^{k'}k^{2k'}\cdot  cost(opt_{k'}).$$
Fix $k.$ We prove this claim by induction on $k'.$ 
For $k'=1$ the claim follows immediately, similarly to Algorithm~\ref{alg:k_1_n_unknown}, as we always take the first point as center. 

Denote the $k'$ optimal clusters by $C_1^*,\ldots, C_{k'}^*$. 
Focus on a cluster $C^*_i.$
The idea of the proof is that either the first point in $C^*_i$, most probably, is chosen as a center or the entire cluster can be added to another cluster. 
When the first point $x\in C^*_i$ arrives, its closest point is $y\notin C^*_i$. 
If $x$ is not chosen as a center, then there are $k$ centers from $k-1$ clusters that the distance between any two is larger than $\dis{x}{y}.$ There are two cases, as in Theorem~\ref{thm:k_2_upper_random}, either this is a common scenario, and then $C^*_i$ can be added to another cluster, or it's rear case and this means that with high probability $x$ will be chosen as a center.  

We define the set of \emph{good} points for a cluster $r\in[k]$ as the set of points that taking them as a center will not increase the cost by much. More formally, 
$$Good_r = \left\{y_r \in C^*_r | \sum_{x_i \in C^*_r} \dis{x_i}{y_r} \leq 100Dk \sum_{x_i \in C^*_r} \dis{x_i}{\mu^*(r)}\right\},$$
where  $\mu^*(r)$ is the optimal center for $C^*_r$.
From Lemma~\ref{lemma:random_point_in_cluster} and Markov's inequality we know that $Good_r$ is large. 
Specifically, $$\frac{|Good_r|}{|C^*_r|}\geq1-\frac{1}{50k}.$$
Thus, using union bound, we know that with probability at least $1-\frac{1}{50}$ the first point the algorithm encounters from each cluster $r$ is in in $Good_r$. 

Fix a cluster $C^*_{i}$ for some $i\in[k].$ 
The distance between $Good_i$ and a point $y$ is defined as 
$$dis(Good_i,y)=\min_{y'\in Good_i}\dis{y'}{y}.$$
Denote by $N$ the set of points that are not in the cluster $C^*_i$ (i.e., in $C^*-C^*_i$) and are one of the $\nicefrac{|C^*_i|}{100k}$ closest points to $Good_i.$
Denote the max distance in $N$ by $dis$, i.e., the distance that is $\nicefrac{|C^*_i|}{100k}$-closest to $Good_i$: 
$$dis = \max_{y'\in N}\min_{y\in Good_i} \dis{y'}{y}.$$
Denote the point the achieves this minimum in $Good_i$ by $x_i$.
We want to define \emph{bad} points that can cause the algorithm not to take the first point from $C^*_i$ as a center. 
For that we first take for each cluster $r\neq i$ an arbitrary point in $x_r\in Good_r$.
Now we are ready to define the bad points: 
$$B=\cup_{r\neq i}\left\{y\in C^*_r : \dis{y}{x_r} \geq \frac{dis}{2D}\right\}.$$ 

\begin{figure}
    \centering
    \includegraphics[scale=0.8]{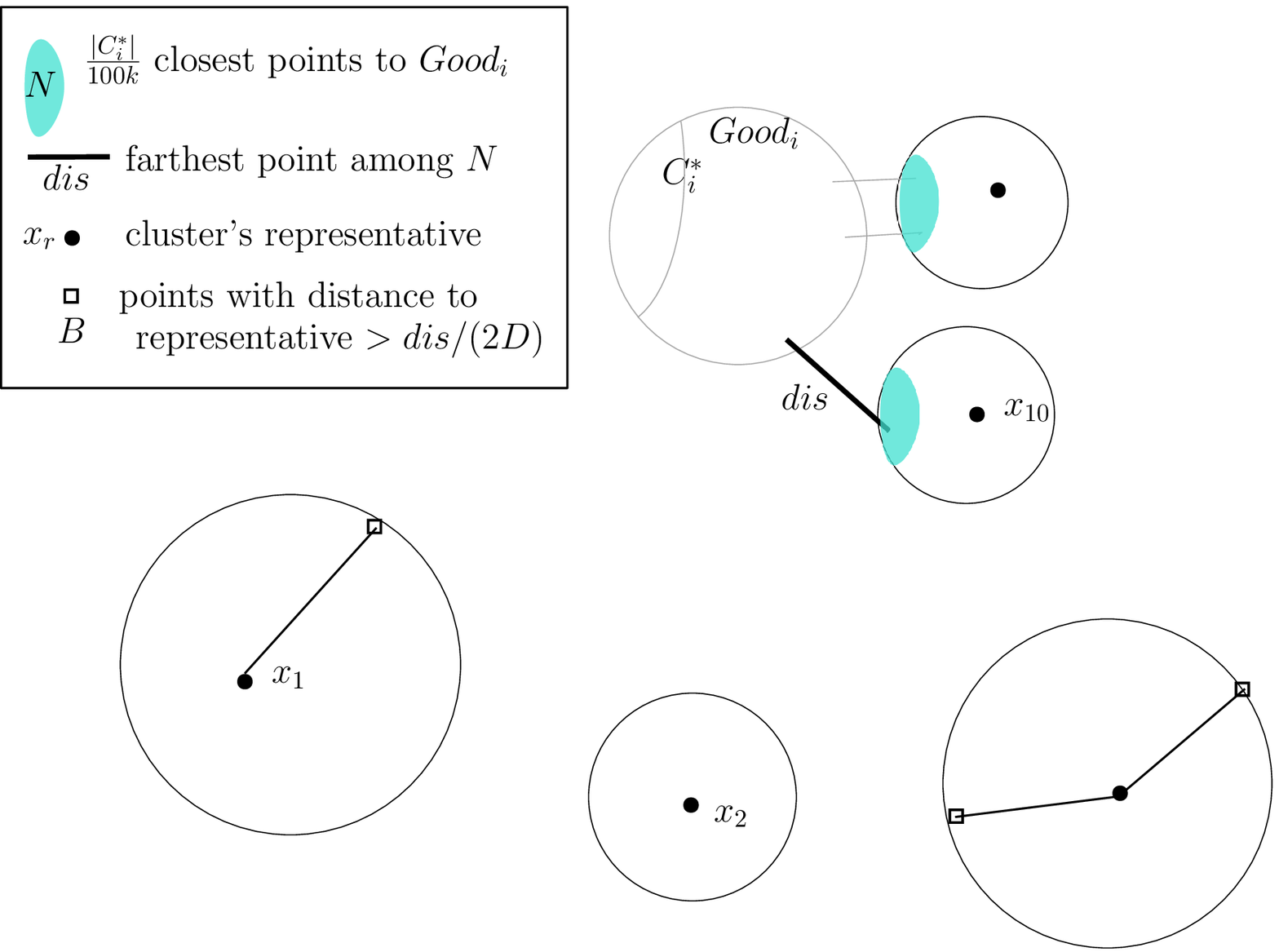}
    \caption{Notations used in the proof of Theorem~\ref{thm:k_upper_random}}
    \label{fig:my_label}
\end{figure}

There are two cases: either $B$ is large compared to $C^*_i,$ or it is not. 


\begin{figure*}[t!]
    \centering
    \begin{minipage}[t]{.5\textwidth}
        \centering
     \includegraphics[width=.9\textwidth]{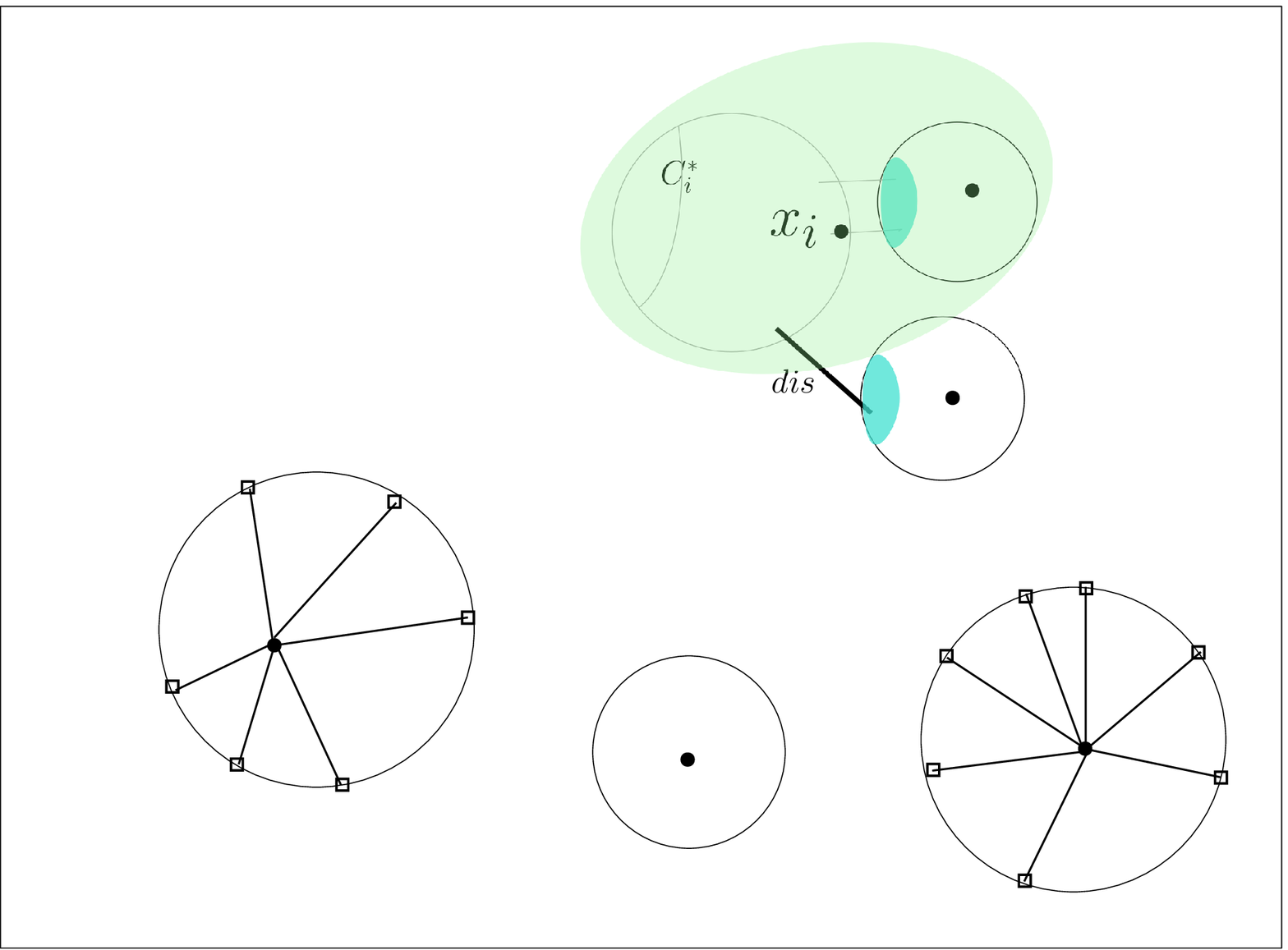}
     \end{minipage}%
     \begin{minipage}[t]{.5\textwidth}
         \centering
     \includegraphics[width=.9\textwidth]{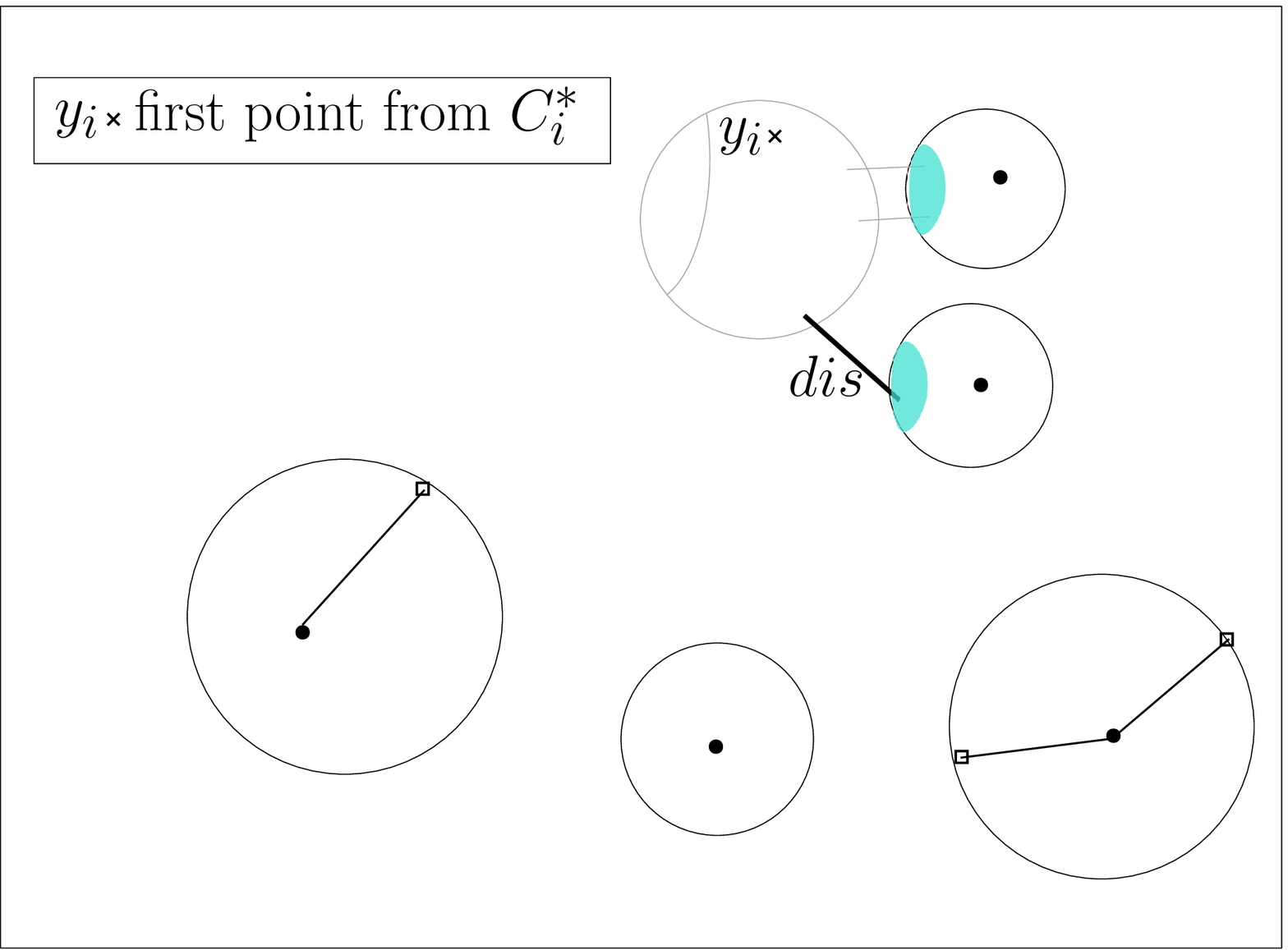}
     \end{minipage}
\caption{Bounding the cost for a cluster $C^*_i$. (a) if many points are in $B$ merge $C^*_i$ into its closest cluster (b) otherwise the first point in $C^*_i$ will be taken}
\end{figure*}




\underline{If $|B|\geq \frac{1}{100k}|C^*_i|$:} we will show that taking the $k-1$ centers --- all the centers $x_r$'s without $x_{i}$ --- is a good enough clustering and then the claim follows by the induction assumption. 
To prove that, first take, among all points in $N$, one $y_j\in N$ that it's closest to its representative $x_j$:
$$y_j = \argmin_{\substack{j\in[k]-\{i\}\\ y\in N\cap C^*_j}} \dis{y}{x_j}.$$
In particular, since $y_j\in N$ and the definitions of $x_i$ and $dis$ we get that $$\dis{x_i}{y_j}\leq dis.$$
By the definition of $y_j$ we also have that for every $r\in[k]-\{i\}$ and $y_r\in B\cap C_r^*$, $$\dis{y_j}{x_j}\leq\dis{y_r}{x_r}.$$
We can use these two observations to bound $cost(opt_{k'-1})$ in terms of $cost(opt_{k'})$: 
\begin{eqnarray*}
cost(opt_{k'-1}) &\leq& \sum_{y\in C^*_i} \dis{y}{x_j} + \sum_{r\neq i}\sum_{y\in C^*_r} \dis{y}{x_r}\\
&\leq& D\sum_{y\in C^*_i} \dis{y}{x_{i}} + D^2\sum_{y\in C^*_i}\dis{x_{i}}{y_j} + D^2\sum_{y\in C^*_i}\dis{y_j}{x_j} + \sum_{r\neq i}\sum_{y\in C^*_r} \dis{y}{x_r}\\
&\leq&  D^2\sum_{y\in C^*_i}\dis{x_{i}}{y_j} + D^2\sum_{y\in C^*_i}\dis{y_j}{x_j} + D\sum_{r}\sum_{y\in C^*_r} \dis{y}{x_r}\\
&=& D^2|C^*_i|\dis{x_{i}}{y_j} + D^2|C^*_i|\dis{y_j}{x_j} + D\sum_{r}\sum_{y\in C^*_r} \dis{y}{x_r}\\
&\leq& D^2|C^*_i|dis + D^2|C^*_i|\dis{y_j}{x_j} + D\sum_{r}\sum_{y\in C^*_r} \dis{y}{x_r}\\
&\leq& 100D^2k|B|dis + 100D^2k\frac{|C^*_i|}{100k}\dis{y_j}{x_j} + D\sum_{r}\sum_{y\in C^*_r} \dis{y}{x_r}\\
&\leq& 201D^2k\sum_{r}\sum_{y\in C^*_r} \dis{y}{x_r}\leq 10^5D^3k^2\cdot  cost(opt_{k'})\\
\end{eqnarray*}
where the second inequality follows from Claim~\ref{clm:norm_of_sum_vectors} (or Inequality~\ref{eq:generalized_triangle_inequality} for the general cost), the forth since $y_j\in N$, the sixth because $y_j$ is closest to its representative from all $N$, and the last inequality follows from the definition of good. 

From the induction assumption we know that 
\begin{eqnarray*}
cost(alg) &\leq& (10^5D^3)^{k'-1}(k-1)^{2k'}\cdot cost(opt_{k'-1})\\
& \leq& (10^5D^3)^{k'-1}k^{2k'}\cdot (10^5D^3)^2\cdot  cost(opt_{k'})\\
&=&(10^5D^3)^{k'}k^{2k'}\cdot  cost(opt_{k'})
\end{eqnarray*}
And this proves the claim in this case. Let's move on to the next case. 

\underline{If $|B| < \frac{1}{100k}|C^*_i|$:}
We will show that with probability at least $1-\frac{4}{100k}$, the first point from $C^*_{i}$ will be chosen as a center and is in $Good_{i}$. Then, if there is no cluster $C^*_i$ with $|B|\geq |C^*_i|/(100k)$ we can use union bound over all the $k'$ centers will finish the proof. 
Focus on the time where the first point in $C^*_i,$ $y_i \in C^*_i$ was given.
With probability at least $1-\nicefrac{1}{50k}$ it is in $Good_i$.
With probability at least $1-\frac{1}{100k}$ a point from $|B|$ was not chosen yet (as $B$ is much smaller than $C^*_i$).
With probability at least $1-\frac{1}{100k}$ a point from $N$ was not chosen yet (as $N$ is much smaller than $C^*_i$), thus minimal distance from a point in $Good_i$ to another is at least $dis.$

For the sake of contradiction, let us assume that $y_i$, the first point from $C^*_i$, was not chosen as a center. 
Then all points $S$ returned by the farthest-first-traversal algorithm must came from at most $k-1$ optimal clusters.
From the Pigeonhole principle, there are two points $y_1,y_2\in S$ that are in the same cluster $C^*_j$. 
From Lemma~\ref{lemma:farthest_first_traversal} we have that $$dis\leq \dis{y_1}{y_2}.$$
From Claim~\ref{clm:norm_of_sum_vectors} (or Inequality~\ref{eq:generalized_triangle_inequality}) we have that $\dis{y_1}{y_2}\leq D\cdot \dis{y_1}{x_j} + D\cdot\dis{x_j}{y_2}.$
This implies that $\dis{y_1}{x_j}\geq dis/(2D)$ (or $\dis{y_2}{x_j}\geq dis/(2D)$).
This means that $y_1\in B$. 
Which is a contradiction to the assumption that no point in $B$ was chosen yet. 


\end{proof}

\paragraph{Comparison between Algorithm~\ref{alg:k} and \cite{liberty16}:}
\cite{liberty16} design an online clustering  adaptation to the $k$-means++ algorithm \cite{arthur07}. 
In the $k$-means++ algorithm a point is chosen with probability $\nicefrac{\text{distance}}{Z}$, where $Z$ is the normalization factor.
In the online case $Z$ is unknown as it depends on future points and thus cannot be calculated. 
Instead, \cite{liberty16} suggests to start with some small $Z$ and to keep increasing it by a factor of $2$.
This implies that the number of centers taken by the algorithm depends on the scale of the data.
The scale of data $D$ is summarized in the  aspect ratio parameter $\gamma=\frac{\max_{v,v'\in D}\norm{v-v'}}{\min_{v,v'\in D}\norm{v-v'}}.$
 \cite{liberty16} takes $O(\log n (\log\gamma +\log n))$ centers and achieves $O(\log n)$-approximation. 
Inherently, this algorithm depends on $\gamma$ as it must reach the scale of the data to be able to take only a small subset of points as centers. Because of the dependence on $\gamma$, their algorithm cannot achieve the optimal bound.
In this paper, we improve both the quality of the approximation and the number of centers by the algorithm to the optimal values in case the order is random (see Algorithm~\ref{alg:k}). 

\subsection{General cost function}
\begin{claim}\label{thm:general_k_1_worst_n_unknown_lower}
Assume there are $n$ points with $\dis{x_i}{x_j}=|j-i|.$ For any clustering algorithm that is not given the dataset size in advance and is a $c$-approximation compared to $opt_1$, there are $n$ data points and an ordering of them such that the algorithm must take $\Omega(\log_c(n))$ centers with probability at least $0.8$.  
\end{claim}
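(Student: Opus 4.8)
The plan is to re-run the proof of Theorem~\ref{thm:k_1_worst_n_unknown} essentially verbatim; the only genuine change is that the squared-distance estimate played there by Claim~\ref{clm:geometric_series_squared_diff_bound} is replaced by its linear analogue, which turns out to be even more favorable. First I would fix the base $b=7c$ and build the instance out of $m=\Theta(\log_b n)=\Theta(\log_c n)$ groups $G_1,\ldots,G_m$, where $G_i$ consists of $b^i$ points sitting at the $i$-th unit-spaced location on the line. Then the distance between a point of $G_i$ and a point of $G_{i'}$ is exactly $|i-i'|$, and the distance within a group is $0$; this is a bona fide distance function satisfying Inequality~\ref{eq:generalized_triangle_inequality} with $D=1$, which is what the hypothesis ``$d(x_i,x_j)=|j-i|$'' encodes. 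The total number of points is $\Theta(b^m)$, so $m=\Theta(\log_c n)$ exhausts the $n$ points (pad or trim the last group exactly as in Theorem~\ref{thm:k_1_worst_n_unknown}), and the groups are presented in the order $G_1,G_2,\ldots$.

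The key quantitative step is this: if the stream is truncated right after $G_j$ and the algorithm has taken no center inside $G_j$, then it is not a $c$-approximation on that instance. Indeed each of the $b^j$ points of $G_j$ lies at distance at least $1$ from all earlier groups and hence from every center taken so far, so $cost(alg)\ge b^j$; on the other hand, putting the single center at the $j$-th location gives
$$cost(opt_1)\le \sum_{i=1}^{j} b^i (j-i) = b^j\sum_{k=0}^{j-1} k\,b^{-k}\le b^j\cdot\frac{b}{(b-1)^2}\le \frac{7}{36c}\,b^j ,$$
using $b=7c\ge 7$. Hence $c\cdot cost(opt_1)\le \tfrac{7}{36}\,b^j< b^j\le cost(alg)$. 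Compared with Theorem~\ref{thm:k_1_worst_n_unknown}, the factor $(j-i)^2$ becomes $(j-i)$ and the geometric series still converges to an absolute constant, so nothing else in the estimate changes; one may invoke Lemma~\ref{lemma:general_cost_expected_cost} wherever Lemma~\ref{lemma:random_point_in_cluster} was used, although for the lower bound this is not strictly needed. Crucially, since $n$ is not known in advance the algorithm behaves identically on $G_1,\ldots,G_j$ whether or not more groups follow, so this truncated sequence is a legitimate input for it.

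The probabilistic dichotomy is then identical to that of Theorem~\ref{thm:k_1_worst_n_unknown}. Either, for every $i$, the probability that the algorithm takes some center from $G_i$ is at least $0.9$ --- in which case Claim~\ref{clm:lower_bound_from large_center_to_large_randomness} yields that with probability at least $0.8$ the algorithm takes a center from at least $m/2=\Omega(\log_c n)$ of the groups, and we are done --- or there is an index $t^*$ for which the probability that no center of $G_{t^*}$ is taken exceeds $0.1$. In the latter case, feed the algorithm the truncated stream $G_1,\ldots,G_{t^*}$: with probability more than $0.1$ it takes no center from $G_{t^*}$, and by the previous paragraph $cost(alg)>c\cdot cost(opt_1)$ on that instance, contradicting the definition of a $c$-approximation (Definition~\ref{dfn:c-approximation}), which would then be violated with probability exceeding $0.1$.

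Overall I do not expect a serious obstacle: this is a direct transfer of Theorem~\ref{thm:k_1_worst_n_unknown}. The only mild points of care are the bookkeeping that makes the group sizes $b^i$ sum to exactly $n$ while keeping $m=\Omega(\log_c n)$, and checking the elementary geometric-series inequality displayed above; both are routine.
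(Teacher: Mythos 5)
Your proposal is correct and is exactly the route the paper takes: its proof of this claim is literally ``the same as Theorem~\ref{thm:k_1_worst_n_unknown} with the examples $x_i$ in place of the integers,'' and you have simply filled in the one computation that genuinely changes, namely replacing the squared-difference geometric sum of Claim~\ref{clm:geometric_series_squared_diff_bound} by its linear analogue $\sum_k k\,b^{-k}\le b/(b-1)^2$, which indeed only improves the constant. The truncation argument, the use of Claim~\ref{clm:lower_bound_from large_center_to_large_randomness}, and the resulting probability $0.8$ all match the paper.
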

\begin{proof}
The proof is the same as in Theorem~\ref{thm:k_1_worst_n_unknown}, where now the examples are $x_1,\ldots,x_n$ instead of $1,\ldots,n.$ 
\end{proof}

\begin{claim}\label{thm:general_k_1_worst_n_unknown_upper}
For any $c>1$ there is an algorithm that obtains $O(cD)$-approximation with $O(\log_c n)$ centers, no matter what the order is and even if $n$ is unknown.
\end{claim}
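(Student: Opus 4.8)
The plan is to reuse Algorithm~\ref{alg:k_1_n_unknown} verbatim, since the doubling schedule never refers to the cost, and simply re-run the analysis of Claim~\ref{thm:k_1_worst_n_unknown_upper} with Lemma~\ref{lemma:general_cost_expected_cost} playing the role of Lemma~\ref{lemma:random_point_in_cluster}. The bound on the number of centers is unaffected by the change of cost function: the algorithm takes exactly one point per ``guess'' of $n$, each guess is multiplied by $c$, and the run stops once $\sum_{i=0}^{i^*} c^i \ge n$; hence at most $O(\log_c n)$ centers are taken, no matter what the order is and whether $n$ is known.

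For the approximation guarantee I would proceed exactly as in Claim~\ref{thm:k_1_worst_n_unknown_upper}. First, the same schedule computation (which is purely arithmetic and cost-agnostic) shows that in the round before the last the current guess $n'$ satisfies $n' \ge n/(2c)$, using w.l.o.g.\ that $c\ge 2$. In that round the algorithm picks a uniformly random point among the $n'$ points arriving then. By Lemma~\ref{lemma:general_cost_expected_cost} we have $\E_{j}\big[\sum_i d(x_i,x_j)\big]\le 2D\cdot cost(opt_1)$, i.e.\ the expected cost of a uniformly random single center over the whole dataset is at most $2D\cdot cost(opt_1)$; by Markov's inequality at most $n/(20c)$ of the $n$ dataset points $x$ have $cost(x) > 40cD\cdot cost(opt_1)$. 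Since $n/(20c)\le n'/10$, at most a $1/10$ fraction of the $n'$ points routed into that round can be ``bad'', regardless of how the adversary chooses them; therefore with probability at least $0.9$ the algorithm selects a point $x$ with $cost(x)\le 40cD\cdot cost(opt_1)=O(cD)\cdot cost(opt_1)$, and as this point is retained as a center this bounds $cost(alg)$.

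There is no genuine obstacle here; the only point requiring a moment's care is that the $n/(20c)$ bad points are counted over the entire dataset while the random choice ranges only over the $n'$ points of a single round — this causes no problem, since any subset contains at most as many bad points as the whole set, and $n'\ge n/(2c)$ makes the resulting fraction small. In short, the argument is a line-by-line transcription of the proof of Claim~\ref{thm:k_1_worst_n_unknown_upper} with the constant $40c$ replaced by $40cD$, which is exactly the claimed $O(cD)$-approximation.
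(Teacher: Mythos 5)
Your proof is correct and follows the paper's argument essentially line by line: it reuses Algorithm~\ref{alg:k_1_n_unknown} unchanged, substitutes Lemma~\ref{lemma:general_cost_expected_cost} for Lemma~\ref{lemma:random_point_in_cluster}, and applies the same Markov-plus-fraction reasoning in the penultimate round. (Your constant $40cD$ is actually what the Markov calculation yields; the paper's stated $400cD$ appears to be a minor slip, but both are $O(cD)$.)
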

\begin{proof}
Similarly to the proof of Theorem~\ref{thm:k_1_worst_n_unknown_upper}, in the one before the last iteration, $n'$ is big compared to $n$, it's at least a fraction $1/2c$ of $n$. From Lemma~\ref{lemma:general_cost_expected_cost} and Markov's inequality we get that for at most $\frac{n}{20\cdot c}$ of the data points $x$ it holds that $cost(x) >40\cdot D \cdot c\cdot cost(opt_1).$ Thus with probability at least $0.9$ the algorithm chooses a data point $x$ out of the $n'$ points at the one before the last iteration with $cost(x)\leq 400cD\cdot cost(opt_1).$
\end{proof}

\section{Random order}
In this section we present and prove general claims in case data appears in random order. These claims will be helpful to analyze Algorithm~\ref{alg:k_2_upper_known_n}. In Section~\ref{apx:random_choice_general_claims} we analyze the expected appearance of a predetermined set in a random sample. In Section~\ref{apx:random_clustering_general_claims} we connect the optimal clustering and a clustering based on a random sample. 

\subsection{Random sample}\label{apx:random_choice_general_claims}
The next claim shows that for any predetermined set, with a high probability the sample is a good representation of this set. This probability depends on the set and sample sizes; if they are bigger the probability is higher.  
\begin{claim}\label{clm:random_from_each_part_tight_bound}
Fix a dataset $D$, a subset $S\subseteq D$ and a scalar $0\leq\beta\leq1$. 
Assume a subset $S'\subseteq D$ with $\frac{|S'|}{|D|}=\beta$ is chosen uniformly at random. 
Then, for any $a>0,$
$$\Pr\left(\left| |S \cap S'|-|S|\frac{|S'|}{|D|}\right| \geq  \sqrt{\frac{|S'||S|}{a|D|}}\right) \leq a $$ 
\end{claim}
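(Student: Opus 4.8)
The plan is to recognize that $X := |S\cap S'|$ is a hypergeometric random variable — we are drawing the $|S'|$-element set $S'$ uniformly at random from the $|D|$-element universe $D$, and counting how many of the $|S|$ ``marked'' elements (those in $S$) land in $S'$ — and then to apply Chebyshev's inequality. Since the desired deviation $\sqrt{|S'||S|/(a|D|)}$ squares to exactly $|S'||S|/(a|D|)$, Chebyshev will give the bound $a$ provided $\mathrm{Var}(X)\le |S'||S|/|D|$, so the whole argument reduces to controlling this variance.

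First I would dispose of trivial cases: if $|S'|=0$ or $|S|=0$ then $X=0$ deterministically and the left-hand probability is $0\le a$; likewise if $|D|=1$. Otherwise, write $X=\sum_{i\in S}\mathbf{1}[i\in S']$. For each $i\in S$ we have $\Pr(i\in S')=|S'|/|D|$, giving $\E[X]=|S|\,|S'|/|D|=|S|\,\beta$, as claimed in the statement. For $i\ne j$ in $S$, $\Pr(i\in S'\wedge j\in S')=\frac{|S'|(|S'|-1)}{|D|(|D|-1)}$, so
$$
\E[X^2]=|S|\frac{|S'|}{|D|}+|S|(|S|-1)\frac{|S'|(|S'|-1)}{|D|(|D|-1)}.
$$
Subtracting $\E[X]^2=|S|^2|S'|^2/|D|^2$ and simplifying yields the standard formula
$$
\mathrm{Var}(X)=|S'|\cdot\frac{|S|}{|D|}\cdot\frac{|D|-|S|}{|D|}\cdot\frac{|D|-|S'|}{|D|-1}.
$$
Both fractions $\frac{|D|-|S|}{|D|}$ and $\frac{|D|-|S'|}{|D|-1}$ lie in $[0,1]$ (for the second, $|S'|\ge 1$ gives $|D|-|S'|\le|D|-1$), hence $\mathrm{Var}(X)\le |S'||S|/|D|$.

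Finally, Chebyshev's inequality with threshold $t=\sqrt{|S'||S|/(a|D|)}$ gives
$$
\Pr\bigl(|X-\E[X]|\ge t\bigr)\le\frac{\mathrm{Var}(X)}{t^2}\le\frac{|S'||S|/|D|}{|S'||S|/(a|D|)}=a,
$$
which is exactly the assertion, since $\E[X]=|S|\,|S'|/|D|$. I do not anticipate a real obstacle here; the only points requiring care are the bookkeeping in the variance simplification and the edge cases above (in particular the implicit assumption that $\beta|D|=|S'|$ is an integer, which the statement's setup already presumes). If one prefers to avoid rederiving the hypergeometric variance, one can instead cite it directly and only verify that the finite-population correction factor is at most $1$.
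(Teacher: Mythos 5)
Your proposal is correct and follows essentially the same route as the paper: decompose $|S\cap S'|$ into indicator variables, compute $\E[X_i]$ and the pairwise terms $\E[X_iX_j]=\frac{|S'|(|S'|-1)}{|D|(|D|-1)}$, bound the variance by $\beta|S|=|S'||S|/|D|$, and apply Chebyshev. The only cosmetic difference is that you pass through the exact hypergeometric variance formula before bounding it, whereas the paper bounds $\E[X_iX_j]\leq\beta^2$ directly; the substance is identical.
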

\begin{proof}
Denote by $X$ the random variable that is equal to $|S \cap S'|$. 
Order the members in $S$ in some arbitrary order. 
Denote by $X_i$, $i=1,\ldots,|S|$, the binary variable that is equal to $1$ if the $i$'th member in $S$ is also in $S'$, and $X_i=0$ otherwise.
Note two basic properties of the random variables $X_i$'s:
\begin{eqnarray}\label{eq:clm_random_from_each_part_expectation}
\E[X_i] = \beta,
\end{eqnarray}
and for every $i\neq j$ it holds that 
\begin{eqnarray}\label{eq:clm_random_from_each_part_covariance}
\E[X_i X_j] = \Pr(X_i = 1 \wedge X_j =1)=\frac{|S'|(|S'|-1) (|D|-2)!}{|D|!}=\frac{|S'|(|S'|-1)}{|D|(|D|-1)}\leq\beta^2,
\end{eqnarray}
where the second equality holds since we can view the process of taking $S'$ as if we order all the points in $D$ and then take the first $|S'|$ members; with this view, the $i$th member has $|S'|$ places and the $j$th member has $|S'|-1$ places and then arrange all the other members, $(|D|-2)!$ options. 

We now analyze the expectation and the variance of $X$.  $$\E[X] = \sum_{i=1}^{|S|}{\E[X_i]}=\beta |S|.$$
Bounding the variance of $X$
\begin{eqnarray*}
Var[X] &=& \E[X^2] - \left(\E[X]\right)^2 \\
&=& \E\left[\left(\sum_{i=1}^{|S|} X_i\right)^2\right] - \left(\sum_{i=1}^{|S|} \E[X_i]\right)^2\\
&=& \E\left[\sum_{i \neq j} X_i X_j\right] + \E\left[\sum_{i=1}^{|S|} X_i^2\right] -  \sum_{i,j} \E[X_i]\E[X_j]\\
&=& \sum_{i \neq j} \E\left[X_i X_j\right] + \sum_{i=1}^{|S|}\E\left[ X_i\right] -  \sum_{i,j} \E[X_i]\E[X_j]\\
&\leq& \beta^2|S|^2 + \beta|S| - \beta^2|S|^2=\beta|S|
\end{eqnarray*}
where the fourth equality follows from the fact that $X_i^2=X_i$ as $X_i$ is a binary random variable and the inequality follows from Equation~\ref{eq:clm_random_from_each_part_expectation} and  Equation~\ref{eq:clm_random_from_each_part_covariance}. 
Next we use Chebyshev's inequality which is the following bound for any $C>0$: $$\Pr(|X-\E[X]| \geq C)\leq\frac{Var[X]}{C^2}.$$
In our case, take $$C=\sqrt{\frac{Var[X]}{a}}\leq\sqrt{\frac{\beta|S|}{a}}$$ and Chebyshev's inequality implies that $$\Pr\left(|X-\beta|S|| \geq \sqrt{\frac{\beta|S|}{a}}\right)\leq a.$$
\end{proof}

The next claim shows that for any large predetermined set, the sample is a good representation of this set. More specifically, for any large enough fixed set $S$, its proportion, $\nicefrac{|S\cap S'|}{|S'|}$, in the sample $S'$, is roughly its proportion, $\nicefrac{|S|}{|D|}$, in the dataset $D$. Equivalently, the proportion of points, $\nicefrac{|S\cap S'|}{|S|}$, in $S$ that are in the sample, is about $\beta |S|$ where $\beta=\nicefrac{|S'|}{|D|}$ is the fraction of sample of the entire dataset.    
\begin{claim}\label{clm:random_from_each_part}
Fix a dataset $D$, a subset $S\subseteq D$ and a scalar $0\leq\beta\leq1$. 
Assume a subset $S'\subseteq D$ of size $|S'|=\beta|D|$ is chosen uniformly at random. 
For any $a>0,$ if $|S|\geq \frac{4}{\beta a}$ then  
$$\Pr\left(\frac{\beta}2|S|\leq |S \cap S'| \leq 2\beta|S|\right)\geq 1-a$$
\end{claim}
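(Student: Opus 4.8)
The plan is to derive this claim as an immediate corollary of Claim~\ref{clm:random_from_each_part_tight_bound}. First I would instantiate that claim with the same $D$, $S$, $S'$, and $a$, and use the hypothesis $|S'| = \beta|D|$ to rewrite $\frac{|S'|}{|D|} = \beta$ and hence $|S|\frac{|S'|}{|D|} = \beta|S|$ and $\sqrt{\frac{|S'||S|}{a|D|}} = \sqrt{\frac{\beta|S|}{a}}$. Claim~\ref{clm:random_from_each_part_tight_bound} then states that with probability at least $1-a$ we have $\bigl| |S\cap S'| - \beta|S| \bigr| < \sqrt{\frac{\beta|S|}{a}}$.

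Next I would show that, under the extra assumption $|S| \geq \frac{4}{\beta a}$, this additive error term is at most $\frac{\beta}{2}|S|$. This is a one-line computation: the inequality $\sqrt{\frac{\beta|S|}{a}} \leq \frac{\beta|S|}{2}$ is, after squaring (both sides nonnegative), equivalent to $\frac{\beta|S|}{a} \leq \frac{\beta^2|S|^2}{4}$, i.e. to $|S| \geq \frac{4}{\beta a}$, which is exactly the hypothesis. Substituting this bound on the deviation into the event above yields $\frac{\beta}{2}|S| \leq |S\cap S'| \leq \frac{3\beta}{2}|S| \leq 2\beta|S|$ with probability at least $1-a$, which is the claim.

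I do not anticipate a genuine obstacle here, since the statement is essentially a repackaging of Claim~\ref{clm:random_from_each_part_tight_bound} into a cleaner multiplicative (Chernoff-type two-sided) form that is convenient for the applications to Algorithm~\ref{alg:k_2_upper_known_n}. The only point requiring any care is matching the threshold $\frac{4}{\beta a}$ in the hypothesis to exactly what is needed to make $\sqrt{\beta|S|/a}$ drop below $\frac{\beta}{2}|S|$; the constant $4$ is what makes the squaring argument go through with the desired factor $\tfrac12$ (and the looser upper bound $2\beta|S|$ in the conclusion then follows with room to spare).
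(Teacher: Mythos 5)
Your proposal is correct and follows essentially the same route as the paper: instantiate Claim~\ref{clm:random_from_each_part_tight_bound} with $|S'|/|D|=\beta$ and observe that $\sqrt{\beta|S|/a}\leq \frac{\beta}{2}|S|$ is equivalent (after squaring) to the hypothesis $|S|\geq \frac{4}{\beta a}$. The only cosmetic difference is that you explicitly note the upper side lands at $\frac{3\beta}{2}|S|\leq 2\beta|S|$, which the paper leaves implicit.
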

\begin{proof}
From Claim~\ref{clm:random_from_each_part_tight_bound} we know that 
$$\Pr\left(\left| |S \cap S'|-\beta|S|\right| \geq  \sqrt{\frac{\beta}{a}|S|}\right) \leq a.$$
To prove the claim, it is enough to prove that  $$\sqrt{\frac{\beta}{a}|S|}\leq \frac\beta2|S|\Leftrightarrow\frac{2}{\sqrt{\beta a}}\leq\sqrt{|S|}\Leftrightarrow\frac{4}{\beta a} \leq |S|$$
\end{proof}


In the next claim we prove that is a set is small, most likely it won't appear in a sample. 
\begin{claim}\label{clm:small_random_set}
Fix a dataset $D$, a subset $S\subseteq D$ and a scalar $0\leq\beta\leq1$. 
Assume a subset $S'\subseteq D$ of size $|S'|=\beta|D|$ is chosen uniformly at random. 
For any $a>0,$ if $|S|\leq \frac{a}{\beta}$ the probability that a member from $S$ will be chosen is at most $a$. 
\end{claim}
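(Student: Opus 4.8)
The plan is to bound the target probability by a union bound over the elements of $S$, using the single fact that every individual point of $D$ lands in the random sample $S'$ with probability exactly $\beta$. First I would fix an arbitrary $x\in D$ and observe that, since $S'$ is a uniformly chosen subset of $D$ with $|S'|=\beta|D|$, symmetry gives $\Pr(x\in S')=|S'|/|D|=\beta$; if one prefers an explicit count, $\Pr(x\in S')=\binom{|D|-1}{|S'|-1}/\binom{|D|}{|S'|}=|S'|/|D|=\beta$. This is exactly the marginal computed in Equation~\ref{eq:clm_random_from_each_part_expectation} in the proof of Claim~\ref{clm:random_from_each_part_tight_bound}, so I would simply reuse it.

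Next I would apply the union bound to the event ``some member of $S$ is chosen'', namely $\bigcup_{x\in S}\{x\in S'\}$, obtaining $\Pr(S\cap S'\neq\emptyset)\le\sum_{x\in S}\Pr(x\in S')=\beta|S|$. Plugging in the hypothesis $|S|\le a/\beta$ then yields $\Pr(S\cap S'\neq\emptyset)\le\beta\cdot(a/\beta)=a$, which is precisely the claim. Equivalently, and essentially identically, one may let $X=|S\cap S'|$, note $\E[X]=\beta|S|\le a$ by linearity of expectation, and conclude by Markov's inequality that $\Pr(X\ge 1)\le\E[X]\le a$.

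There is no real obstacle in this argument; it is a one-line union bound. The only small point to handle cleanly is the degenerate case $\beta=0$, where $S'=\emptyset$ and the probability is trivially $0$ (and the quantity $a/\beta$ in the hypothesis should be read as $+\infty$, so the hypothesis imposes nothing and the conclusion still holds). I would mention this in a sentence and otherwise present the proof in two lines.
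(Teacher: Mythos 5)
Your proof is correct and is essentially identical to the paper's: both compute the marginal probability $\beta$ that a fixed point lands in $S'$ and apply a union bound over $S$ to get $\beta|S|\leq a$. The Markov-inequality restatement and the $\beta=0$ remark are harmless additions.
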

\begin{proof}
The probability that a specific point be taken is $\beta.$ Using union bound, the probability that at least one point from $S$ is taken is bounded by $\beta|S|\leq a.$
\end{proof}

\begin{claim}\label{clm:random_sample_at_least_in_set}
 Suppose $r$ random points are chosen uniformly at random from $C = A \dot{\cup} B.$ The probability that \emph{all} $r$ points are in $B$ is bounded by $\left(\frac{|B|}{|C|}\right)^r.$
\end{claim}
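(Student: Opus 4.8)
The plan is to compute the probability directly via the chain rule over the $r$ successive draws, bounding each conditional factor by $|B|/|C|$. First I would fix the sampling convention: I take the $r$ points to be drawn uniformly at random without replacement from $C$ (if instead they are drawn with replacement the argument only gets easier, yielding equality). Label the draws $p_1,\ldots,p_r$ and let $E_j$ be the event $p_j\in B$. Then the event that all $r$ points lie in $B$ is $E_1\cap\cdots\cap E_r$, and $\Pr(E_1\cap\cdots\cap E_r)=\prod_{j=1}^r \Pr\!\left(E_j \mid E_1\cap\cdots\cap E_{j-1}\right)$.

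Next I would evaluate each conditional factor. Conditioned on the first $j-1$ draws all having landed in $B$, there remain $|C|-(j-1)$ points of $C$ available, exactly $|B|-(j-1)$ of which lie in $B$; hence $\Pr(E_j \mid E_1\cap\cdots\cap E_{j-1}) = \frac{|B|-(j-1)}{|C|-(j-1)}$ (and if $r>|B|$ some factor is $0$, so the claimed bound is trivial). The only inequality needed is the elementary fact that, since $|B|\le|C|$, one has $\frac{|B|-i}{|C|-i}\le \frac{|B|}{|C|}$ for every integer $0\le i<|B|$: cross-multiplying (both denominators are positive) this is equivalent to $i|C|\ge i|B|$, which holds. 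Multiplying the $r$ factors and applying this bound to each gives $\Pr(E_1\cap\cdots\cap E_r)\le \left(\frac{|B|}{|C|}\right)^r$, which is the statement.

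I do not expect a genuine obstacle; the entire content is the chain-rule expansion plus the monotonicity observation $\frac{|B|-i}{|C|-i}\le\frac{|B|}{|C|}$. The main thing to be careful about is simply pinning down the sampling model, and the bound is robust to that choice: under sampling with replacement every conditional factor equals $|B|/|C|$ exactly, so the inequality still holds (with equality).
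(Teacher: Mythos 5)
Your proof is correct and is essentially the paper's argument: both reduce to the exact product $\prod_{j=0}^{r-1}\frac{|B|-j}{|C|-j}$ (the paper by counting random orderings of $C$, you by the chain rule over successive draws) and then bound each factor by $\frac{|B|}{|C|}$ via the same elementary inequality $\frac{x-a}{y-a}\leq\frac{x}{y}$.
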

\begin{proof}
Denote $n_A=|A|, n_B=|B|, n=|C|=n_A+n_B.$
The random sample can be viewed as if $C$ is ordered uniformly at random and the first $r$ points are the sample. The probability that in a random order the first $r$ elements are \emph{all} from $B$ is 
\begin{eqnarray}\label{eq:random_sample_all_from_B}
\frac{(n-r)(n-r-1)\cdot\ldots\cdot(n-r-n_A+1)n_B!}{n!},
\end{eqnarray}
the first element in $A$ has $n-r$ locations out of $n$ (all but the first $r$ elements) the last element in $A$ has $n-r-n_A+1$ locations, and the $n_B$ members in $B$ has no constraints in ordering them. 

For any $y\geq x>a\geq0$ it holds that  $(x-a)/(y-a)\leq x/y$, thus Expression~\ref{eq:random_sample_all_from_B} is equal to $$\frac{n_B}{n}\cdot\frac{n_B-1}{n-1}\cdot\ldots\cdot\frac{n_B-r+1}{n-r+1}\leq\left(\frac{n_B}{n}\right)^r.$$ 
\end{proof}

\begin{corollary}\label{cor:random_sample_enough_points_hit_set}
Fix $\delta\in(0,1)$ and $C = A \dot{\cup} B.$ 
Suppose $\left\lceil{\frac{|C|}{|A|}\log\frac1{\delta}}\right\rceil$ random points are chosen uniformly at random from $C$. The probability that at least one element out of the random sample is in $A$ is at least $1-\delta.$
\end{corollary}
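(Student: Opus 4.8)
The plan is to derive this immediately from Claim~\ref{clm:random_sample_at_least_in_set}, which already bounds the "bad" event. Set $r=\left\lceil\frac{|C|}{|A|}\log\frac1\delta\right\rceil$, the number of sampled points. The event that \emph{no} sampled point lies in $A$ is exactly the event that all $r$ sampled points lie in $B$, so by Claim~\ref{clm:random_sample_at_least_in_set} its probability is at most $\left(\frac{|B|}{|C|}\right)^r$. It therefore suffices to show $\left(\frac{|B|}{|C|}\right)^r\le\delta$; the conclusion then follows by taking complements.

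For that inequality I would write $\frac{|B|}{|C|}=1-\frac{|A|}{|C|}$ and use the elementary bound $1-x\le e^{-x}$, valid for all real $x$, to get $\frac{|B|}{|C|}\le e^{-|A|/|C|}$. Raising to the $r$-th power and using $r\ge\frac{|C|}{|A|}\log\frac1\delta$ (which holds because $r$ is the ceiling of that quantity), we obtain
$$\left(\frac{|B|}{|C|}\right)^r\le e^{-r|A|/|C|}\le e^{-\log(1/\delta)}=\delta.$$
Hence the probability that at least one sampled point is in $A$ is at least $1-\delta$.

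There is essentially no hard step here: the only things to keep an eye on are that $\delta\in(0,1)$ guarantees $\log\frac1\delta>0$ so that $r\ge 1$ (and the ceiling is well defined and positive), and that the sampling model in Claim~\ref{clm:random_sample_at_least_in_set} — uniform sampling without replacement from $C=A\,\dot\cup\,B$ — is the one intended here, so that claim applies verbatim. Given that, the corollary is a two-line computation.
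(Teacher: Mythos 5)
Your proof is correct and matches the paper's: both invoke Claim~\ref{clm:random_sample_at_least_in_set} to bound the probability of the bad event by $\left(\frac{|B|}{|C|}\right)^r$ and then verify $\left(1-\frac{|A|}{|C|}\right)^{\frac{|C|}{|A|}\log\frac1\delta}\le\delta$. You simply spell out the standard $1-x\le e^{-x}$ step that the paper leaves implicit.
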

\begin{proof}
Follows from Claim~\ref{clm:random_sample_at_least_in_set} and
$$\left(1-\frac{|A|}{|C|}\right)^{\frac{|C|}{|A|}\log\frac1{\delta}}\leq\delta.$$
\end{proof}

\subsection{Random clustering}\label{apx:random_clustering_general_claims}
In this section we explore the connection between optimal clustering and one that is based on a sample of the data. Specifically, we show that clustering that is based on a random sampling is also good clustering for large optimal clusters. We focus on the case that there is a clustering with $k'$ optimal clusters $C^*=(C^*_1,\ldots, C^*_{k'})$. We get a random sample $M$ of size $\alpha n$ and construct an optimal clustering $C^{M}=(C^{M}_1,\ldots,C^{M}_k)$ for those $\alpha n$ points with $k\geq k'$ clusters. In fact, $C^{M}$ does not have to be an optimal clustering for the $\alpha n$ points, merely a $\Theta(1)$-approximation.  

To make our claims more general, we consider an arbitrary cost that its distance function satisfy a version of a triangle inequality: there is a \emph{constant} $D$ such that  
\begin{equation}
\forall u,v,w.\quad d(u,v)\leq D\cdot(d(u,w)+d(w,v)).
\end{equation}
We call such a cost a \emph{$D$-cost}.

We first prove that if an optimal clustering $C^*_i$ is large enough, then there is a center in $C^{M_1}$ which is a good enough center for the \emph{entire} cluster $C^*_i$. 
\begin{claim}\label{clm:known_bound_cost_large_cluster_exists_good_cluster}
Fix an $a$-approximation algorithm with $k\geq k'$ clusters for a random sample of size $\alpha n$, $D$-cost, and $\delta\in(0,1)$. For any optimal cluster $C^*_i$, $i\in[k']$, $|C^*_i|\geq \frac{4}{\alpha\delta}$, with probability at least $1-\delta$ there is a center $c^{M}_{i'}$ in $C^{M}$ such that $$\sum_{x\in C^*_i}\dis{x}{c^{M}_{i'}}\leq\frac{5kD^2a}{\alpha}\cdot cost(opt_{k'}).$$
\end{claim}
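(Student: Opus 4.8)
The plan is to locate, inside the sample clustering $C^{M}$, a cluster whose center is close to some point of $C^*_i$ that is already a good center for all of $C^*_i$; the $D$-cost inequality~\ref{eq:generalized_triangle_inequality} then transports the ``goodness'' from that point to the center. First I would isolate the good centers of $C^*_i$: applying Lemma~\ref{lemma:general_cost_expected_cost} to the points of $C^*_i$ with $\mu=c^*_i$, the average over $y\in C^*_i$ of $\sum_{x\in C^*_i}\dis{x}{y}$ is at most $2D\sum_{x\in C^*_i}\dis{x}{c^*_i}\le 2D\cdot cost(opt_{k'})$, so Markov's inequality produces a set $Good_i\subseteq C^*_i$ with $|Good_i|\ge |C^*_i|/2$ such that $\sum_{x\in C^*_i}\dis{x}{y}\le 4D\cdot cost(opt_{k'})$ for every $y\in Good_i$. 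Since $|Good_i|=\Omega(1/(\alpha\delta))$ (because $|C^*_i|\ge 4/(\alpha\delta)$), Claim~\ref{clm:random_from_each_part}, invoked with $\beta=\alpha$ and failure parameter $\delta$, shows that with probability at least $1-\delta$ we have $|Good_i\cap M|\ge \frac{\alpha}{2}|Good_i|\ge \frac{\alpha}{4}|C^*_i|$; this is the only place the randomness of $M$ enters, and it uses up the whole failure budget.

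Next I would bound $\Phi:=\sum_{x\in M}\dis{x}{c^{M}(x)}$, the cost of $C^{M}$ measured on the sample. Since $C^{M}$ is an $a$-approximation for $M$ with $k\ge k'$ centers, $\Phi\le a\cdot cost_M(opt_{k'})$, where $cost_M(opt_{k'})$ is the optimal $k'$-clustering cost of the point set $M$. To bound $cost_M(opt_{k'})$ I exhibit an explicit $k'$-clustering of $M$: for every $j\in[k']$ with $C^*_j\cap M\neq\emptyset$, let $m_j\in C^*_j\cap M$ minimize $\dis{c^*_j}{\cdot}$ over $C^*_j\cap M$. Then $|C^*_j\cap M|\cdot\dis{c^*_j}{m_j}\le \sum_{x\in C^*_j\cap M}\dis{x}{c^*_j}$, and inequality~\ref{eq:generalized_triangle_inequality} gives $\sum_{x\in C^*_j\cap M}\dis{x}{m_j}\le D\sum_{x\in C^*_j\cap M}\dis{x}{c^*_j}+D|C^*_j\cap M|\dis{c^*_j}{m_j}\le 2D\sum_{x\in C^*_j}\dis{x}{c^*_j}$; summing over $j$ yields $cost_M(opt_{k'})\le 2D\cdot cost(opt_{k'})$, hence $\Phi\le 2aD\cdot cost(opt_{k'})$.

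Now the pigeonhole and averaging steps close the argument. The at least $\frac{\alpha}{4}|C^*_i|$ points of $Good_i\cap M$ are distributed among the $k$ clusters of $C^{M}$, so some cluster $C^{M}_{i'}$ contains at least $\frac{\alpha}{4k}|C^*_i|$ of them; since all of these charge their distance to $c^{M}_{i'}$ against $\Phi\le 2aD\cdot cost(opt_{k'})$, at least one of them, call it $y_0\in Good_i$, satisfies $\dis{y_0}{c^{M}_{i'}}\le \frac{8akD\cdot cost(opt_{k'})}{\alpha|C^*_i|}$. Applying inequality~\ref{eq:generalized_triangle_inequality} pointwise then gives $\sum_{x\in C^*_i}\dis{x}{c^{M}_{i'}}\le D\sum_{x\in C^*_i}\dis{x}{y_0}+D|C^*_i|\cdot\dis{y_0}{c^{M}_{i'}}\le 4D^2\cdot cost(opt_{k'})+\frac{8akD^2}{\alpha}\cdot cost(opt_{k'})$, and since $0<\alpha<1\le a$ and $k\ge 1$ this is $O(\frac{kD^2a}{\alpha})\cdot cost(opt_{k'})$; tightening the crude constants above (a slightly sharper $Good_i$ and a slightly sharper averaging estimate) brings the bound to $\frac{5kD^2a}{\alpha}\cdot cost(opt_{k'})$.

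The step I expect to be the main obstacle is the passage from a cost measured over all of $D$ (with centers anywhere in $D$) to a cost measured over the sample $M$ with centers confined to $M$: the anchors $m_j$ must lie in the sample, must be cheap surrogates for the optimal centers $c^*_j$, and the estimate must lose only a constant factor --- this is exactly what forces the choice ``$m_j=$ nearest sample point to $c^*_j$'' together with the double use of inequality~\ref{eq:generalized_triangle_inequality}. A secondary subtlety worth care is that the witness $y_0$ has to be a good center for $C^*_i$ \emph{and} close to $c^{M}_{i'}$ at the same time, which is why one intersects $Good_i$ with $M$ before pigeonholing rather than pigeonholing $C^*_i\cap M$ directly.
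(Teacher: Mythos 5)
Your argument is correct in substance and follows essentially the same skeleton as the paper's proof: Claim~\ref{clm:random_from_each_part} guarantees the sample hits $C^*_i$ proportionally, a pigeonhole places many of those sampled points into one cluster $C^{M}_{i'}$, and the generalized triangle inequality transports goodness to $c^{M}_{i'}$ through a hub point. The differences are worth noting. The paper pigeonholes $C^*_i\cap M$ directly and uses the optimal center $c^*_i$ as the hub, bounding $|C^*_i|\cdot d(c^*_i,c^{M}_{i'})$ by averaging the triangle inequality over all $\geq\frac{\alpha|C^*_i|}{2k}$ sampled points of $C^*_i$ landing in $C^{M}_{i'}$ and charging $\sum_x d(x,c^{M}_{i'})$ to the $a$-approximation guarantee on the sample; you instead extract $Good_i$ first and pigeonhole $Good_i\cap M$, producing a single witness $y_0$ that is simultaneously a good center for $C^*_i$ and close to $c^{M}_{i'}$. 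The two routes are interchangeable. Your explicit reduction $cost_M(opt_{k'})\leq 2D\cdot cost(opt_{k'})$ via the anchors $m_j$ is actually more careful than the paper, which silently writes $a\cdot cost(opt_{k})$ for the sample cost.

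Two small quantitative slips, neither fatal: (i) Claim~\ref{clm:random_from_each_part} with failure parameter $\delta$ requires $|S|\geq\frac{4}{\alpha\delta}$, but you apply it to $Good_i$, which is only guaranteed to have size $\geq\frac{|C^*_i|}{2}\geq\frac{2}{\alpha\delta}$; as written this yields failure probability $2\delta$, so you should either enlarge $Good_i$ (accepting a weaker goodness constant) or accept $2\delta$. The paper sidesteps this by sampling $C^*_i$ itself. (ii) Your final constant $4D^2+\frac{8akD^2}{\alpha}$ does not obviously tighten to $\frac{5akD^2}{\alpha}$; the factor $8$ comes from $2aD\cdot\frac{4k}{\alpha}$, whereas the paper averages over $\frac{\alpha|C^*_i|}{2k}$ points without the extra loss incurred by intersecting with $Good_i$. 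Neither issue affects the $\Theta(1)$ conclusions downstream, but the claim as stated carries the constant $5$.
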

\begin{proof}
Denote by $M$ the random sample of size $|M|=\alpha n.$
From Claim~\ref{clm:random_from_each_part}, with probability at least $1-\delta$ there are at least $\frac{\alpha|C^*|}{2}$ points in $C^*_i\cap M$. 
There is a cluster $C^M_{i'}\in C^M$ such that there are at least $\frac{\alpha|C^*|}{2 k}$ from $C^*_i\cap C^M_{i'}.$  By Inequality~\ref{eq:generalized_triangle_inequality} (or Claim~\ref{clm:norm_of_sum_vectors} for the $k$-means cost) with $\mu$ the optimal center of $C^*_i$ it holds that 
\begin{eqnarray*}
\sum_{x\in C^*_i}d(x,c^M_{i'}) &\leq& D\sum_{x\in C^*_i}d(x,\mu)+D\sum_{x\in C^*_i}d(x,c^M_{i'})\\
&\leq& D\cdot cost(opt_{k'}) + D|C^*_{i}|d(\mu,c^*_i)
\end{eqnarray*}
Let's focus on the second term. We again use  Inequality~\ref{eq:generalized_triangle_inequality} and get that 
\begin{eqnarray*}
D|C^*_{i}|d(\mu,c^*_i) &=& \frac{2D k}{\alpha}\cdot \frac{|C^*_i|}{2\alpha k}d(\mu,c^M_{i'})\\
&\leq& \frac{2D^2 k}{\alpha} \sum_{x\in C^*_i\cap C^M_{i'}} d(x,\mu) + \frac{2D^2 k}{\alpha} \sum_{x\in C^*_i\cap C^M_{i'}} d(x,c^{M}_{i'})\\
&\leq& \frac{2 k D^2}{\alpha} cost(opt_{k'}) + \frac{2 k D^2}{\alpha}\cdot a \cdot  cost(opt_{k})\\
&\leq& \frac{4 k D^2 a}{\alpha} cost(opt_{k'}), 
\end{eqnarray*}
where in the second inequality we use the fact that $C^M$ is an a-approximation and in the last inequlity we use the fact that $k\geq k'$ and thus $cost(opt_k)\leq cost(opt_{k'}).$ 

\end{proof}

In the second auxiliary claim we prove is that for any large optimal cluster $C^*_i$ and for any cluster in $C^{M}$ with center $c^{M}$ that contains $A\subseteq C^*_i$ members, its center $c^{M}$ is a good center for the points in $A.$
\begin{claim}\label{clm:known_bound_cost_large_cluster_contains_many_points_good_cluster}
Fix an $a$-approximation algorithm with $k\geq k'$ clusters for a random sample of size $\alpha n$, $D$-cost, and $\delta\in(0,1)$. 
For any optimal cluster $C^*_i$, $i\in[k']$, $|C^*_i|\geq \frac{4}{\alpha\delta}$, with probability at least $1-\delta$ for any cluster $C^{M}_i$ with center $c^{M}_i$ that contains points $A\subseteq C^*_i\cap C^{M}_i$ it holds that $$\sum_{x\in A}\dis{x}{c_i^{M}}\leq \frac{5kD^2a}{\alpha}\cdot cost(opt_{k'}).$$
\end{claim}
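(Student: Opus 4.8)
The plan is to derive this claim directly from Claim~\ref{clm:known_bound_cost_large_cluster_exists_good_cluster}, rather than redoing its triangle-inequality calculation from scratch. First I would invoke Claim~\ref{clm:known_bound_cost_large_cluster_exists_good_cluster} for the fixed large optimal cluster $C^*_i$: since $|C^*_i|\ge \frac{4}{\alpha\delta}$, that claim gives, with probability at least $1-\delta$, an index $i'$ for which the center $c^M_{i'}$ of $C^M$ is a good center for the \emph{entire} cluster, i.e. $\sum_{x\in C^*_i}\dis{x}{c^M_{i'}}\le \frac{5kD^2a}{\alpha}\cdot cost(opt_{k'})$. I would then condition on this single event for the rest of the argument, so that nothing further about the randomness of the sample is needed.

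The key observation is that $C^M$ assigns every point of the dataset to its nearest center among $c^M_1,\dots,c^M_k$ (this is how $C^M$ is induced on the whole dataset; in Algorithm~\ref{alg:k_2_upper_known_n} this is exactly the $\argmin$ step). Hence, for any cluster $C^M_j$ with center $c^M_j$ and any $x\in A\subseteq C^*_i\cap C^M_j$, we have $\dis{x}{c^M_j}\le \dis{x}{c^M_{i'}}$, because $c^M_j$ is a closest center to $x$. Summing this inequality over $x\in A$, then enlarging the index set from $A$ to $C^*_i$ (which only increases the sum, since $\dis{\cdot}{\cdot}\ge 0$ and $A\subseteq C^*_i$), and finally applying the bound from Claim~\ref{clm:known_bound_cost_large_cluster_exists_good_cluster}, gives $\sum_{x\in A}\dis{x}{c^M_j}\le \sum_{x\in C^*_i}\dis{x}{c^M_{i'}}\le \frac{5kD^2a}{\alpha}\cdot cost(opt_{k'})$, which is the desired bound; and since the choice of $c^M_{i'}$ did not depend on $j$, this holds simultaneously for all clusters $C^M_j$ on the same probability-$(1-\delta)$ event.

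The one subtle point — and the step I would be most careful about in the write-up — is justifying the reduction, namely that in $C^M$ each point is genuinely assigned to a nearest center, so that $\dis{x}{c^M_j}=\min_r \dis{x}{c^M_r}$ for $x\in C^M_j$. Because $C^M$ is only required to be a $\Theta(1)$-approximation (not optimal), I would remark that one may always re-assign each point to its nearest center without increasing the cost, so we can assume $C^M$ has this property; this is also consistent with how the partition $C^{M_1}$ is formed in the algorithm. Everything else is immediate, so there is no heavy computation: unlike Claim~\ref{clm:known_bound_cost_large_cluster_exists_good_cluster}, the random-sample estimates (Claim~\ref{clm:random_from_each_part}) are not re-used here but are absorbed entirely into the invocation of that claim.
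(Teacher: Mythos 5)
Your proposal is correct and is essentially identical to the paper's own proof: both invoke Claim~\ref{clm:known_bound_cost_large_cluster_exists_good_cluster} to obtain a single good center $c^M_{i'}$, then use the nearest-center property of $C^M$ to get $\dis{x}{c^M_i}\le\dis{x}{c^M_{i'}}$ for $x\in A$, and finally enlarge the sum from $A$ to $C^*_i$. Your explicit remark justifying the nearest-center assignment is a point the paper leaves implicit, and it is a reasonable addition.
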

\begin{proof}
By Claim~\ref{clm:known_bound_cost_large_cluster_exists_good_cluster} we know that with probability at least $1-\delta$ there is a center $c^{M}_{i'}$ in $C^{M}$ such that $$\sum_{x\in C^*_i}\dis{x}{c^{M}_{i'}}\leq\frac{5kD^2a}{\alpha}\cdot cost(opt_{k'}).$$ So for any cluster $C^{M}_i\in C^{M}$ we can deduce that 
\begin{eqnarray*}
\sum_{x\in A}\dis{x}{c_i^{M}} \leq \sum_{x\in A}\dis{x}{c_{i'}^{M}}\leq \sum_{x\in C^*_i}\dis{x}{c_{i'}^{M}}\leq\frac{5kD^2a}{\alpha}\cdot cost(opt_{k'}).
\end{eqnarray*}
where the first inequality follows from the fact that points $x\in A$ are closer to $c^{M}_i$ than $c^{M}_{i'}$ and the second inequality holds because $A\subseteq C^*_i.$
\end{proof}

In the third auxiliary claim we prove that the last claim implies that $c^{M_1}$ is close to $c^*_i.$

\begin{claim}\label{clm:known_n_bound_cost_large_cluster_close_to_center}
Fix an $a$-approximation algorithm with $k\geq k'$ clusters for a random sample of size $\alpha n$, $D$-cost, and $\delta\in(0,1)$. 
For any optimal cluster $C^*_i$, $i\in[k']$, $|C^*_i|\geq \frac{4}{\alpha\delta}$, with probability at least $1-\delta$ for any cluster $C^{M}_i$ with center $c^{M}_i$ that contains points $A\subseteq C^*_i\cap C^{M}_i$ it holds that $$|A|\cdot\dis{c^{M}_i}{c^*}\leq  \frac{6kD^3a}{\alpha}\cdot cost(opt_{k'}).$$
\end{claim}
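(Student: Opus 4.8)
The plan is to reduce this claim to the previous one, Claim~\ref{clm:known_bound_cost_large_cluster_contains_many_points_good_cluster}, by averaging the generalized triangle inequality over the set $A$. The high-probability event we condition on is exactly the one from that claim, so the success probability $1-\delta$ is inherited for free; all that remains is a deterministic calculation.

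First I would invoke Claim~\ref{clm:known_bound_cost_large_cluster_contains_many_points_good_cluster}: with probability at least $1-\delta$, for every cluster $C^{M}_i$ with center $c^{M}_i$ and every $A\subseteq C^*_i\cap C^{M}_i$,
$$\sum_{x\in A}\dis{x}{c_i^{M}}\leq \frac{5kD^2a}{\alpha}\cdot cost(opt_{k'}).$$
Next, since $A\subseteq C^*_i$ and $c^*_i$ is the optimal center of $C^*_i$, we trivially have
$$\sum_{x\in A}\dis{x}{c^*_i}\leq \sum_{x\in C^*_i}\dis{x}{c^*_i}\leq cost(opt_{k'}).$$
Then, for each single $x\in A$, Inequality~\ref{eq:generalized_triangle_inequality} gives $\dis{c^{M}_i}{c^*_i}\leq D\bigl(\dis{c^{M}_i}{x}+\dis{x}{c^*_i}\bigr)$; summing this over all $x\in A$ and plugging in the two bounds above yields
$$|A|\cdot\dis{c^{M}_i}{c^*_i}\leq D\sum_{x\in A}\dis{x}{c^{M}_i}+D\sum_{x\in A}\dis{x}{c^*_i}\leq \frac{5kD^3a}{\alpha}cost(opt_{k'})+D\cdot cost(opt_{k'}).$$

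Finally I would absorb the low-order term: since $a\geq1$, $k\geq1$, $D\geq1$ and $\alpha\in(0,1)$, we have $D\le \frac{kD^3a}{\alpha}$, so the right-hand side is at most $\frac{6kD^3a}{\alpha}\cdot cost(opt_{k'})$, as claimed. I do not expect any real obstacle here; the only thing to be careful about is that the same high-probability event must be quantified over \emph{all} clusters $C^M_i$ and all admissible $A$ simultaneously, which is already how Claim~\ref{clm:known_bound_cost_large_cluster_contains_many_points_good_cluster} is phrased, so no additional union bound is needed.
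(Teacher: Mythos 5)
Your proof is correct and is essentially identical to the paper's: both invoke Claim~\ref{clm:known_bound_cost_large_cluster_contains_many_points_good_cluster}, apply the generalized triangle inequality through a point $x\in A$, sum over $A$, and bound the two resulting terms by $\frac{5kD^3a}{\alpha}cost(opt_{k'})$ and $D\cdot cost(opt_{k'})$ respectively. Your explicit absorption of the low-order term into the $\frac{6kD^3a}{\alpha}$ constant is a detail the paper leaves implicit, but the argument is the same.
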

\begin{proof}
Use Claim~\ref{clm:known_bound_cost_large_cluster_contains_many_points_good_cluster}
\begin{eqnarray*}
|A|\cdot \dis{c^*_i}{c^{M}_i} &\leq& D\sum_{x\in A}\dis{c^*_i}{x}+\dis{x}{c^{M}_i}\\
&\leq& D\cdot cost(opt_{k'}) + \frac{5kD^3a}{\alpha}\cdot cost(opt_{k'})
\end{eqnarray*}
\end{proof}

The fourth, and the last, auxiliary claim shows that if there is a cluster in $C^{M}$ that contains many points from two different optimal clusters, then these clusters can be merged, without harming the cost by much. 

\begin{claim}\label{clm:known_n_bound_cost_large_cluster_two_cluster_are_close}
Fix an $a$-approximation algorithm with $k\geq k'$ clusters for a random sample of size $\alpha n$, $D$-cost, and $\delta\in(0,1)$. 
For any optimal clusters $C^*_i$,$C^*_j$, $i,j\in[k']$ with $|C^*_i|,|C^*_j|\geq \frac{8}{\alpha\delta}$, with probability at least $1-\delta$ if there is a cluster in $C^{M}$ that contains at least $\zeta|C^*_i|$ points from $C^*_i$ and at least $\eta|C^*_i|$ from $C^*_j$, then with probability at least $1-\delta$ it holds that $$cost(opt_{k'-1})\leq\frac{13kD^5a}{\alpha\min(\zeta,\eta)}\cdot cost(opt_{k'})$$
\end{claim}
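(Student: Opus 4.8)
The plan is to show that if a single cluster $C^M_\ell$ of the sample-based clustering straddles two optimal clusters $C^*_i$ and $C^*_j$, then the optimal centers $c^*_i,c^*_j$ are forced to be close to each other, so that merging $C^*_i$ into $C^*_j$ barely increases the cost and hence certifies a bound on $cost(opt_{k'-1})$. First I would fix the high-probability event: apply Claim~\ref{clm:known_n_bound_cost_large_cluster_close_to_center} once to $C^*_i$ and once to $C^*_j$, each with failure probability $\delta/2$. The size hypotheses $|C^*_i|,|C^*_j|\ge 8/(\alpha\delta)$ are exactly what these two invocations require (each needs the relevant optimal cluster to have at least $4/(\alpha(\delta/2))$ points), and a union bound leaves us on an event of probability at least $1-\delta$ on which, writing $c^M$ for the center of $C^M_\ell$ and $A_i=C^*_i\cap C^M_\ell$, $A_j=C^*_j\cap C^M_\ell$, we have both $|A_i|\cdot d(c^M,c^*_i)\le \frac{6kD^3a}{\alpha}\,cost(opt_{k'})$ and $|A_j|\cdot d(c^M,c^*_j)\le \frac{6kD^3a}{\alpha}\,cost(opt_{k'})$. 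Since by assumption $|A_i|\ge\zeta|C^*_i|$ and $|A_j|\ge\eta|C^*_i|$, dividing gives $|C^*_i|\cdot d(c^M,c^*_i)\le \frac{6kD^3a}{\alpha\zeta}\,cost(opt_{k'})$ and $|C^*_i|\cdot d(c^M,c^*_j)\le \frac{6kD^3a}{\alpha\eta}\,cost(opt_{k'})$.

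Next I would exhibit the $(k'-1)$-clustering that witnesses the bound: keep the centers $c^*_r$ for every $r\neq i$, reassign every point of $C^*_i$ to $c^*_j$, and leave the other clusters as they are. This uses $k'-1$ centers, so $cost(opt_{k'-1})\le \sum_{x\in C^*_i}d(x,c^*_j)+\sum_{r\neq i}\sum_{x\in C^*_r}d(x,c^*_r)$. For the first sum I route the generalized triangle inequality (Inequality~\ref{eq:generalized_triangle_inequality}) through $c^*_i$ and then through $c^M$ to get $d(x,c^*_j)\le D\,d(x,c^*_i)+D^2\,d(c^*_i,c^M)+D^2\,d(c^M,c^*_j)$; summing over $x\in C^*_i$ turns the last two terms into $|C^*_i|$ times the center distances bounded in the previous step, contributing at most $\frac{6kD^5a}{\alpha\zeta}\,cost(opt_{k'})+\frac{6kD^5a}{\alpha\eta}\,cost(opt_{k'})$, while the first term contributes $D\sum_{x\in C^*_i}d(x,c^*_i)$.

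Finally I would collect terms. Since the cost of $C^*_i$ plus the costs of the untouched clusters sum to exactly $cost(opt_{k'})$ and $D\ge1$, we have $D\sum_{x\in C^*_i}d(x,c^*_i)+\sum_{r\neq i}\sum_{x\in C^*_r}d(x,c^*_r)\le D\cdot cost(opt_{k'})$, so altogether $cost(opt_{k'-1})\le \bigl(D+\tfrac{12kD^5a}{\alpha\min(\zeta,\eta)}\bigr)cost(opt_{k'})\le \tfrac{13kD^5a}{\alpha\min(\zeta,\eta)}cost(opt_{k'})$, absorbing the lone $D$ into the constant using $k,a\ge1$ and $\alpha,\min(\zeta,\eta)\le1$. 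There is no real obstacle here beyond bookkeeping; the points that need care are (i) invoking Claim~\ref{clm:known_n_bound_cost_large_cluster_close_to_center} for \emph{both} optimal clusters with budget $\delta/2$ each, which is precisely why the hypothesis asks for size at least $8/(\alpha\delta)$ rather than $4/(\alpha\delta)$; (ii) remembering that the hypothesis guarantees $|A_j|\ge\eta|C^*_i|$, so the relevant division is by $\eta$; and (iii) passing through $c^M$ rather than directly between $c^*_i$ and $c^*_j$, so that the already-established center-distance estimates are the ones that get used.
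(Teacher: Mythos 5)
Your proposal is correct and follows essentially the same route as the paper: merge $C^*_i$ into $C^*_j$, bound the reassignment cost by routing the generalized triangle inequality through $c^M$, and control the two center distances via Claim~\ref{clm:known_n_bound_cost_large_cluster_close_to_center} applied to each optimal cluster. Your explicit $\delta/2$ union bound over the two invocations (which is what the $8/(\alpha\delta)$ size hypothesis is for) is in fact handled more carefully than in the paper's own write-up, and the slightly different placement of the powers of $D$ is immaterial.
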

\begin{proof}
We want to bound the cost of the following clustering with ${k'}-1$ centers: $c^*_1,\ldots, c^*_{k'}$ without $c^*_i$ and all points in $C^*_i$ will be assigned to $c^*_j$. The cost is equal to 
\begin{eqnarray*}
\sum_{x\in C^*_i} \dis{x}{c^*_j} + \sum_{r\neq i} \sum_{x\in C^*_r} \dis{x}{c^*_r}
\end{eqnarray*}
Let us bound the first sum using Claim~\ref{clm:known_n_bound_cost_large_cluster_close_to_center}, with probability $1-\delta$ 
\begin{eqnarray*}
\sum_{x\in C^*_i}\dis{x}{c^*_j} &\leq& D^2\sum_{x\in C^*_i}\dis{x}{c^*_i}+D^2\sum_{x\in C^*_i}\dis{c^*_i}{c^{M}_i}+D\sum_{x\in C^*_i}\dis{c^{M}_i}{c^*_j}\\
&=& D^2\sum_{x\in C^*_i}\dis{x}{c^*_i}+\frac{D^2}{\zeta}\zeta| C^*_i|\dis{c^*_i}{c^{M}_i}+\frac{D}{\eta}\eta| C^*_i|\dis{c^{M}_i}{c^*_j}\\
&\leq& D^2cost(opt_{k'}) +  \frac{6kD^5a}{\alpha\zeta}\cdot cost(opt_{k'})
+ \frac{6kD^4a}{\alpha\eta}\cdot cost(opt_{k'})\\
&\leq& \frac{13kD^5a}{\alpha\min(\zeta,\eta)}\cdot cost(opt_{k'})
\end{eqnarray*}
\end{proof}

\section{Auxiliary claims}\label{apx:auxiliary_claims}
\begin{proof}[
of Lemma~\ref{lemma:random_point_in_cluster}]
The proof consists of simply rewriting the two expressions. The right-hand side is equal to 
\begin{eqnarray*}
\E_{j\in[n]}\left[\sum_{i=1}^n\norm{x_i-x_j}^2\right] &=& \frac1n\sum_{j=1}^n\sum_{i=1}^n \norm{x_i-x_j}^2\\
 &=& \frac1n\sum_{j=1}^n\sum_{i=1}^n \left( \norm{x_i}^2+\norm{x_j}^2-2\inner{x_i}{x_j}\right)\\
   &=&2 \sum_{i=1}^n \norm{x_i}^2-\frac{2}n\sum_{i,j}\inner{x_i}{x_j}\\
\end{eqnarray*}
The second expression is equal to twice the following expression
\begin{eqnarray*}
\sum_{i=1}^n\norm{x_i-\mu}^2 &=& \sum_{i=1}^n\norm{x_i-\frac1{n}\sum_{j=1}^nx_j}^2\\
&=&\sum_{i=1}^n\norm{x_i}^2 -\frac{2}{n}\sum_{i=1}^n\inner{x_i}{\sum_{j=1}^nx_j}+\frac{1}{n}\norm{\sum_{j=1}^nx_j}^2\\
&=&\sum_{i=1}^n \norm{x_i}^2 - \frac1n\sum_{i,j}\inner{x_i}{x_j}
\end{eqnarray*}
\end{proof}

\begin{claim}\label{clm:geometric_series_squared_diff_bound}
For any scalar $q\geq 6$ and an integer $n\geq 1$ it holds that  $$\sum_{i=1}^n q^i (n-i)^2 \leq 6\cdot q^{n-1}.$$
\end{claim}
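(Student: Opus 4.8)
The plan is to reverse the order of summation so that the expression becomes a geometric-type series with a closed, controllable bound. First I would substitute $j=n-i$, which turns $\sum_{i=1}^n q^i(n-i)^2$ into $q^n\sum_{j=0}^{n-1} j^2 q^{-j}$. The $j=0$ term vanishes, so the whole quantity equals $q^n S$ where $S:=\sum_{j=1}^{n-1} j^2 q^{-j}$, and the target inequality $\sum_{i=1}^n q^i(n-i)^2\le 6q^{n-1}$ reduces to proving $S\le 6/q$.

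Next I would bound $S$ by the full infinite series $\sum_{j=1}^{\infty} j^2 q^{-j}$ and control that series by a geometric comparison rather than by quoting a closed form. Setting $a_j:=j^2 q^{-j}$, the consecutive ratio is $a_{j+1}/a_j=(1+1/j)^2/q$, which is maximized at $j=1$ and hence is at most $4/q$ for every $j\ge 1$; moreover $4/q<1$ since $q\ge 6$. Therefore $a_j\le a_1(4/q)^{j-1}=\tfrac1q(4/q)^{j-1}$, and summing the resulting geometric series gives $S\le \tfrac1q\cdot\tfrac{1}{1-4/q}=\tfrac{1}{q-4}$.

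Finally I would verify $\tfrac{1}{q-4}\le \tfrac{6}{q}$: for $q>4$ this is equivalent to $q\le 6(q-4)$, i.e. $5q\ge 24$, which holds for every $q\ge 6$ (indeed already for $q\ge 4.8$). Chaining the estimates, $\sum_{i=1}^n q^i(n-i)^2 = q^n S\le q^n/(q-4)\le 6q^{n-1}$, which is the claim.

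I do not expect a genuine obstacle here; the one point requiring a little care is the uniform bound on the ratio $a_{j+1}/a_j$ — the worst case is $j=1$, giving exactly $4/q$ — together with checking that $q\ge 6$ suffices simultaneously for convergence of the geometric series (which needs only $q>4$) and for the closing numeric inequality (which needs only $q\ge 4.8$); the hypothesis $q\ge 6$ comfortably covers both, with slack to spare.
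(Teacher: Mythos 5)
Your proof is correct and follows essentially the same route as the paper's: reverse the summation to get $q^n\sum_{j} j^2 q^{-j}$, dominate that sum by a geometric series, and close with a numeric inequality using $q\geq 6$. The only difference is cosmetic — the paper compares term-by-term via $j^2\leq 3^j$ (ratio $3/q$) where you use the consecutive-ratio bound $(1+1/j)^2/q\leq 4/q$ — and both yield the same final bound $6/q$ on the inner sum.
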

\begin{proof}
\begin{eqnarray*}
\sum_{i=1}^n q^i (n-i)^2 &=& q^n \sum_{i=1}^n \left(\frac1{q}\right) ^{n-i}(n-i)^2 \\
&=& q^n \sum_{j=0}^{n-1} \left(\frac1{q}\right) ^{j}j^2 \\
&\leq& q^n \sum_{j=1}^{n-1} \left(\frac3{q}\right) ^{j} \\
&\leq& q^n \cdot \frac{\nicefrac{3}{q}}{1-\nicefrac{3}{q}}\\
&\leq& q^{n} \cdot \frac{6}{q},
\end{eqnarray*}
where in the first equality we multiply and divide by $q^n$, in the second equality we reverse the order of summation, in the first inequality we use the bound $$\left(\frac{1}{q}\right)^jj^2\leq \left(\frac{3}{q}\right)^j \Leftrightarrow j^2\leq 3^j$$
which is true for any $j\geq 0$, the second inequality uses the known bound for sum of a geometric series, and in the last inequality we use the bound $1\leq 2(1-\nicefrac{3}{q}),$ which is true for $q\geq 6.$
\end{proof}

The next claim shows that if there are $n$ events, each happens with probability at least $1-\delta$, then at least half of them occur together with probability at least $1-2\delta$. Specifically, for $\delta=0.1$ we get the claim needed in the main text. For ease of notation, for any event $A$, we denote the indicator of $A$ by $I_A.$
\begin{claim}\label{clm:lower_bound_from large_center_to_large_randomness}
Fix $\delta\in(0,1)$. Suppose there are $n$ events $A_1,\ldots, A_n$ such that for every $i\in[n]$, $\Pr(A_i)\geq 1-\delta.$ Then, 
$$\Pr\left(\sum_{i=1}^nI_{A_i} \geq n/2\right)\geq 1-2\delta.$$
\end{claim}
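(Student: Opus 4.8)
The plan is to apply Markov's inequality not to the count of successes but to the count of failures, which is the natural non-negative random variable whose expectation is small. Concretely, set $X=\sum_{i=1}^n I_{A_i}$ and let $Y=n-X=\sum_{i=1}^n I_{A_i^c}$ be the number of events that do \emph{not} occur. By linearity of expectation and the hypothesis $\Pr(A_i)\ge 1-\delta$ (equivalently $\Pr(A_i^c)\le\delta$), we get $\E[Y]=\sum_{i=1}^n\Pr(A_i^c)\le n\delta$.

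Next I would observe that the two events $\{X\ge n/2\}$ and $\{Y\le n/2\}$ are the same event (since $X+Y=n$), so it suffices to lower bound $\Pr(Y\le n/2)$. Since $Y\ge 0$, Markov's inequality gives
$$\Pr\!\left(Y\ge \tfrac{n}{2}\right)\le \frac{\E[Y]}{n/2}\le \frac{n\delta}{n/2}=2\delta.$$
Hence $\Pr(Y< n/2)\ge 1-2\delta$, and a fortiori $\Pr(Y\le n/2)\ge 1-2\delta$, which is exactly $\Pr(X\ge n/2)\ge 1-2\delta$, as desired. (For the $\delta=0.1$ instance invoked in the main text this reads $\Pr(X\ge n/2)\ge 0.8$.)

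There is essentially no hard step here: no independence is needed, only linearity of expectation and Markov's inequality. The only thing to be slightly careful about is the direction of the argument — applying Markov directly to $X$ would require an \emph{upper} tail control on a large quantity, which is the wrong direction; passing to the complement $Y=n-X$ is the one trick that makes the bound fall out immediately. One could also note the bound is essentially tight (take each $A_i$ to fail independently with probability exactly $\delta$, so about a $\delta$ fraction fail in expectation and the $n/2$ threshold is missed with probability on the order of $2\delta$ when $\delta$ is close to $1/2$), but that remark is not needed for the statement.
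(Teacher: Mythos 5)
Your proof is correct and is essentially identical to the paper's: both pass to the failure count $Y=\sum_i I_{\neg A_i}$, bound $\E[Y]\leq \delta n$ by linearity, and apply Markov's inequality at threshold $n/2$ to get $\Pr(Y\geq n/2)\leq 2\delta$. The handling of the strict/non-strict inequality at the boundary is also fine in both versions.
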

\begin{proof}
Let $\delta\in(0,1)$ and events $A_1,\ldots, A_n$ with $\;\Pr(A_i)\geq 1-\delta,$ for every $i.$ We want to prove that  $$\Pr\left(\sum_{i=1}^nI_{\neg A_i} \geq n/2\right)\leq 2\delta,$$ where $\neg A$ is the complement of $A.$
From the assumption in the claim we know that $$\E\left[\sum_{i=1}^nI_{\neg A_i}\right]\leq\delta n.$$
Thus, from Markov's inequality we have that $$\Pr\left(\sum_{i=1}^nI_{\neg A_i} \geq \frac{1}{2\delta}\cdot\delta n\right)\leq 2\delta.$$
\end{proof}

\begin{claim}\label{clm:norm_of_sum_vectors}
For any $u,v\in \reals^d$ it holds that $$\norm{v+u}^2\leq 2\norm{v}^2 + 2\norm{u}^2.$$
\end{claim}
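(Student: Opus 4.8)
The plan is to expand the squared norm and then control the single cross term. First I would write, using the definition of the $\ell_2$ norm through the inner product, $\norm{v+u}^2 = \norm{v}^2 + 2\inner{v}{u} + \norm{u}^2$. Everything here is already of the desired shape except the middle term $2\inner{v}{u}$, so the entire content of the proof is to bound that term by $\norm{v}^2 + \norm{u}^2$.

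Next I would bound the cross term in two standard moves: by Cauchy--Schwarz, $\inner{v}{u} \leq \norm{v}\,\norm{u}$, and then the elementary inequality $2ab \leq a^2 + b^2$ (which is just $(a-b)^2 \geq 0$) applied with $a=\norm{v}$ and $b=\norm{u}$ gives $2\norm{v}\,\norm{u} \leq \norm{v}^2 + \norm{u}^2$. Substituting back, $\norm{v+u}^2 \leq \norm{v}^2 + \big(\norm{v}^2 + \norm{u}^2\big) + \norm{u}^2 = 2\norm{v}^2 + 2\norm{u}^2$, which is exactly the asserted inequality.

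Alternatively, one can obtain the bound in a single line from convexity of the map $w \mapsto \norm{w}^2$: writing $v+u = \frac{1}{2}(2v) + \frac{1}{2}(2u)$ and applying Jensen's inequality yields $\norm{v+u}^2 \leq \frac{1}{2}\norm{2v}^2 + \frac{1}{2}\norm{2u}^2 = 2\norm{v}^2 + 2\norm{u}^2$. There is no real obstacle in this statement; it is a routine identity plus one elementary inequality, and the only point worth stating explicitly is which direction of Cauchy--Schwarz (or equivalently which sign of a perfect square) is being invoked. Note also that this is precisely Inequality~\ref{eq:generalized_triangle_inequality} with $D=2$ for the choice $d(x,y)=\norm{x-y}^2$, applied to $w=0$ after renaming, so the same computation records the $k$-means cost as a $D$-cost with $D=2$.
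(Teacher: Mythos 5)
Your proof is correct and follows exactly the same route as the paper: expand $\norm{v+u}^2$, bound the cross term $2\inner{v}{u}$ by Cauchy--Schwarz, and finish with $2\norm{v}\norm{u}\leq\norm{v}^2+\norm{u}^2$. The convexity/Jensen variant you mention is a valid alternative but adds nothing needed here.
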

\begin{proof}
\begin{eqnarray*}
\norm{v+u}^2 
&=& \norm{v}^2+\norm{u}^2 + 2\inner{v}{u}
\leq \norm{v}^2+\norm{u}^2 + 2\norm{v}\norm{u}
\leq 2\norm{v}^2+2\norm{u}^2,
\end{eqnarray*}
where the first inequality follows from Cauchy–Schwarz inequality and the second inequality follows from the  inequality
$0\leq(\norm{u}-\norm{v})^2 = \norm{v}^2+\norm{u}^2 - 2\norm{v}\norm{u}.$
\end{proof}

\subsection*{Proof of Lemma~\ref{lemma:farthest_first_traversal}}
\begin{proof}
Take $y=\argmin_{y\in S}\norm{x-y}$ and any $y_1,y_2\in S$ we will show that
$$\norm{y_1-y_2}\geq \norm{x-y}.$$ 
W.l.o.g $y_2$ was added to $S$, after $y_1$ did.
Focus at the time $y_2$ was added to $S$. 
Denote by $l\in S$ the closest point in $S$ at the time to $x.$
Then, since $y_2$ was added to $S$ and not $x$ we know that $$\norm{y_2-y_1}\geq\norm{x-l}\geq\norm{x-y}.$$ 
\end{proof}

\subsection*{Proof of Lemma~\ref{lemma:general_cost_expected_cost}}
\begin{proof}
\begin{eqnarray*}
\E_{j\in[n]}\left[\sum_{i=1}^n\dis{x_i}{x_j}\right] &=& \frac{1}{n}\sum_{i,j\in[n]}\dis{x_i}{x_j}\\
&\leq& \frac Dn\cdot\sum_{i,j\in[n]}\dis{x_i}{\mu}+\dis{\mu}{x_j}\\
&=& 2D\cdot \sum_{i=1}^n\dis{x_i}{\mu},
\end{eqnarray*}
where the inequality follows from Inequality~\ref{eq:generalized_triangle_inequality}.
\end{proof}


\end{document}